\documentclass{article}
\usepackage{placeins}

\usepackage{microtype}
\usepackage{adjustbox}
\usepackage{graphicx}
\usepackage{multirow}
\usepackage{subfigure}
\usepackage{booktabs} 
\usepackage{comment}
\usepackage{microtype}
\usepackage{graphicx}
\usepackage{booktabs}
\usepackage{tabularx}
\usepackage{multirow}
\usepackage{siunitx}
\usepackage[table]{xcolor}

\graphicspath{{./}}
\usepackage{hyperref}

\usepackage[accepted]{icml2026}

\usepackage{amsmath}
\usepackage{amssymb}
\usepackage{mathtools}
\usepackage{amsthm}

\usepackage[capitalize,noabbrev]{cleveref}

\theoremstyle{plain}
\newtheorem{theorem}{Theorem}[section]

\newtheorem{lemma}[theorem]{Lemma}

\theoremstyle{definition}

\theoremstyle{remark}

\newcommand{\Linf}{L^\infty}
\newcommand{\BigO}{\mathcal{O}}
\newcommand{\BigOmega}{\Omega}

\usepackage[textsize=tiny]{todonotes}

\icmltitlerunning{Rational Neural Networks have Expressivity Advantages}

\begin{document}

\twocolumn[
\icmltitle{Rational Neural Networks have Expressivity Advantages}
           
\icmlsetsymbol{equal}{*}

\begin{icmlauthorlist}
\icmlauthor{Maosen Tang}{equal,yyy}
\icmlauthor{Alex Townsend}{equal,zzz}
\end{icmlauthorlist}

\icmlaffiliation{yyy}{Center for Applied Mathematics, Cornell University, United States}
\icmlaffiliation{zzz}{Department of Mathematics, Cornell University, United States}

\icmlcorrespondingauthor{Maosen Tang}{mt872@cornell.edu}

\icmlkeywords{Machine Learning, ICML}

\vskip 0.3in
]

\makeatletter
\providecommand{\ICML@appearing}{ {\textit{Proceedings of the
$\mathit{43}^{rd}$ International Conference on Machine Learning},
Seoul, South Korea. PMLR 306, 2026.
Copyright 2026 by the author(s).}}
\makeatother
\printAffiliationsAndNotice{ {\icmlEqualContribution}}

\definecolor{c0}{HTML}{1F77B4}
\definecolor{c1}{HTML}{FF7F0E}
\definecolor{c2}{HTML}{2CA02C}
\definecolor{c3}{HTML}{D62728}
\definecolor{c4}{HTML}{9467BD}
\definecolor{c5}{HTML}{8C564B}

\newcommand{\legline}[1]{\textcolor{#1}{\rule[0.6ex]{1.45em}{0.55pt}}}

\begin{abstract}
We study neural networks with trainable low-degree rational activation functions and show that they are more expressive and parameter-efficient than modern piecewise-linear and smooth activations such as ELU, LeakyReLU, LogSigmoid, PReLU, ReLU, SELU, CELU, Sigmoid, SiLU, Mish, Softplus, Tanh, Softmin, Softmax, and LogSoftmax.
For an error target of $\varepsilon>0$, we establish approximation-theoretic separations: Any network built from standard fixed activations can be uniformly approximated on compact domains by a rational-activation network with only $\mathrm{poly}(\log\log(1/\varepsilon))$ overhead in size, while the converse provably requires $\Omega(\log(1/\varepsilon))$ parameters in the worst case.
This exponential gap persists at the level of full networks and extends to gated activations and transformer-style nonlinearities.
In practice, rational activations integrate seamlessly into standard architectures and training pipelines, allowing rationals to match or outperform fixed activations under identical architectures and optimizers. 
\end{abstract}

\section{Introduction}
\label{sec:intro}

Deep neural networks owe much of their empirical success to the choice of nonlinear activation functions.  Given an affine map $x \mapsto Wx+b$, the activation $\sigma$ determines how features are composed across layers and therefore controls both the expressivity and trainability of the resulting architecture.  Earlier generations of deep learning emphasized piecewise-linear functions, most notably ReLU and its variants (LeakyReLU, PReLU, and ReLU6)~\cite{10.5555/3104322.3104425,maas2013rectifier,he2015delving,Sandler2018MobileNetV2IR}.  More recent architectures, particularly large-scale transformer systems, rely heavily on smooth, fixed activations such as GELU, Swish/SiLU, Mish, and the ELU/SELU/CELU family~\cite{Hendrycks2016GELU,ramachandran2017searching,ELFWING20183,misra2019mish,clevert2016elu,klambauer2017selfnormalizing,barron2017celu,Vaswani2017Transformer}.  In this work, we demonstrate, both theoretically and practically, that trainable low-degree rational activation functions lead to more expressive and efficient neural networks.

The key structural advantage of rational functions is their ability to efficiently represent non-smooth features.  Functions with kinks, sharp transitions, or nearby complex singularities (e.g., $|x|$, ReLU, $\sqrt{x}$, or $\log(x)$ near $0$) admit exponentially accurate rational approximations but only algebraically accurate approximations by polynomials and many smooth surrogates; this phenomenon is classical in approximation theory~\cite{trefethen2019approximation,gonchar2011rational} and underlies the separation mechanisms we develop at the network level.

A rational neural network replaces the usual scalar activation $\sigma$ with a trainable low-degree rational function, where the coefficients are learned during training.  Although each activation is low-degree, composition across layers yields a data-sparse representation of an extremely high-degree rational function~\cite{Telgarsky2017RationalFunctions,boulle2020rational}.  {This paper builds on that viewpoint by proving that rational activations have better expressibility than most modern activation functions, and demonstrating the advantages in supervised vision and offline reinforcement learning.}

We prove a separation in parameter complexity between rational networks and networks built from standard smooth activations, including GELU, Swish and SiLU, Mish, ELU, SELU, CELU, Softplus, and Softmax.  For every $\varepsilon\in(0,1)$, any such smooth-activation network on a fixed compact domain can be uniformly approximated by a rational network whose size is bounded by a fixed polynomial in $\log\log(1/\varepsilon)$, and in particular by $\mathcal{O}\bigl(\log^3\!\log(1/\varepsilon)\bigr)$ for the activations we study in detail.   {In the reverse direction, a worst-case family of rational functions realizable by small rational networks requires any bounded-parameter smooth-activation approximation to uniform error at most $\varepsilon$ to have size at least $\Omega\bigl(\log(1/\varepsilon)\bigr)$, so rational networks can be exponentially more expressive than smooth-activation networks in the approximation regime.}  Empirically, these gains translate into improved accuracy and faster convergence in vision and into stronger normalized scores in offline MuJoCo reinforcement learning under fixed architectures and training budgets.

Finally, modern architectures rarely use activations in isolation: normalization layers introduce additional learned affine degrees of freedom and (for batch-based methods) data-dependent stochasticity~\cite{ioffe2015batch,ba2016layer,wu2018group}.  For adaptive rational activations, these couplings can create non-identifiability and poor conditioning in the joint parameterization, and can amplify coefficient sensitivities; we formalize these effects in Appendix~\ref{app:norm_hurtful} and evaluate regimes where reducing or removing normalization is beneficial for training adaptive rationals~\cite{zhu2025transformerswithoutnorm}.

\subsection*{Contributions.}
The main contributions of this paper are:
\begin{itemize}
\item  {We prove size bounds between activation families: rational networks approximate GELU, Swish, and SiLU with size $\mathcal{O}\bigl(\log^3\!\log(1/\varepsilon)\bigr)$ and an $\Omega(\log\log(1/\varepsilon))$ lower bound, while approximating worst-case rational targets by bounded-parameter smooth-activation networks from the same family requires $\Omega(\log(1/\varepsilon))$ size.}
\item For any function with a large curvature, e.g., $|x|$, ${\rm ReLU}$, and $\sqrt{x}$ (on $[0,1]$), we prove that rational neural networks are more expressive than smooth-activation networks.
\item We extend these approximation results to gated activations and transformer-style architectures.
\item We provide constructive proofs based on classical rational approximation theory, including Zolotarev functions and an arithmetic-geometric-mean construction that demonstrates the expressivity advantage of rational neural networks. 
\item We show that standard normalization layers can be hurtful for adaptive rational activations.
\end{itemize}

Our results suggest that rational activations could form a mathematically natural and computationally efficient foundation for deep learning architectures.  

\section{\texorpdfstring{ {Related Work}}{Related Work}}
\label{sec:related}

 {Trainable rational activations have been explored and found competitive or superior in reinforcement learning~\cite{delfosse2024adaptive,surdej2025balancingexpressivityrobustnessconstrained}, control~\cite{newton2023rationalcontrollers}, scientific machine learning and Green's function learning~\cite{boulle2022greens}, operator learning~\cite{madala2025mno,wu2025pockan}, and transformer architectures~\cite{fang-etal-2023-transformers}. These empirical studies are supported by practical parameterizations, including open-source Pad\'e-style rational activations~\cite{delfosse2020rationals,delfosse2024adaptive} and the ERA formulation~\cite{trimmel2022era}, which improves numerical stability and conditioning.}

 {A separate algebraic and geometric line of work studies polynomial and rational network classes through dimension, identifiability, and neuromanifold structure, including polynomial networks~\cite{kileel2019expressive,marchetti2025invitation,shahverdi2025razor,shahverdi2024geometry,massarenti2025alexander} and rational-function families~\cite{conca2023taylor,grosdos2025algebraic}. Generalization and covering-number results for real algebraic varieties and rational networks~\cite{zhang2025covering} are also relevant. Our results establish quantitative uniform-approximation separations between rational activations and many fixed activation families.}

\section{Rational Neural Networks}\label{sec:rational_nn}

We use rational neural networks, meaning standard feedforward networks whose scalar nonlinearity is a trainable rational function. A rational activation is
\begin{equation}
\label{eq:rational_activation}
r(x)
=
\frac{P(x)}{Q(x)}
=
\frac{\sum_{i=0}^{r_P} a_i x^i}{\sum_{j=0}^{r_Q} b_j x^j},
\end{equation}
with trainable coefficients $\{a_i\}_{i=0}^{r_P}$ and $\{b_j\}_{j=0}^{r_Q}$. We call $r_P$ the numerator degree and $r_Q$ the denominator degree, and we require $Q(x)\neq 0$ on the real domain of interest. Classical approximation theory shows that rational functions can approximate targets with sharp transitions far more efficiently than polynomials of comparable degree, see \citet{trefethen2019approximation} and \citet{gonchar2011rational}.

A depth $M$ rational network maps an input $x$ to an output $R(x)$ by alternating affine maps and rational activations,
\begin{equation}
\label{eq:rational_network_def}
\begin{aligned}
h^{(0)}(x) &= x,\\
z^{(\ell)}(x) &= W^{(\ell)}h^{(\ell-1)}(x)+b^{(\ell)},\\
h^{(\ell)}(x) &= r^{(\ell)}\!\bigl(z^{(\ell)}(x)\bigr), \qquad \ell=1,\dots,M-1,\\
R(x) &= W^{(M)}h^{(M-1)}(x)+b^{(M)},
\end{aligned}
\end{equation}
where each $r^{(\ell)}$ is a rational activation, potentially with its own coefficients.  {Throughout our theory, the numerator and denominator degrees are uniformly bounded independently of layer index, depth, width, and $\varepsilon$; different layers may learn different coefficients, but the activation family remains fixed.} The trainable parameters consist of the affine weights and biases $\{W^{(\ell)},b^{(\ell)}\}$ together with the rational coefficients in each layer.

A central modeling choice is to use low-degree rationals at every layer and rely on composition across depth. This compositional viewpoint matches the role of depth in standard networks and is aligned with rational network approximation theory developed by \citet{Telgarsky2017RationalFunctions} and sharpened by \citet{boulle2020rational}. It also connects to learnable activation work, including Pad\'e activation units \citet{Molina2020PAU}, while our emphasis is network-level parameter complexity rather than the degree of a single isolated rational.

In practice, we find that rational neural networks are quite tricky to train because during the learning stage the activation function can develop real poles, possibly causing gradients to explode. Instead, in our experiments, we use a small tweak and use activation functions suggested in \citet{delfosse2024adaptive} together with the accompanying implementation, which enforces a strictly positive denominator via absolute values, so real poles are excluded by construction. Throughout, we use $P=5$ and $Q=4$, matching the default in \citet{delfosse2020rationals}.

%
%

\section{Our Theoretical Results}
\label{sec:theory}

This section compares rational neural networks to networks built from smooth activations, with expressivity measured by the number of trainable parameters required to reach a target uniform error on a fixed compact domain, for example $[-1,1]^d$. We write $0<\varepsilon<1$ for the approximation tolerance and measure error in the $\Linf$ norm. All constants hidden in $\BigO\bigl[\cdot\bigr]$ and $\BigOmega\bigl[\cdot\bigr]$ depend only on fixed architectural bounds and are independent of $\varepsilon$.

Our starting point is the established separation between rational and ReLU style nonlinearities in \citet{boulle2020rational}. For the scalar ReLU map $\mathrm{ReLU}(x)=\max\{x,0\}$ on $[-1,1]$, it has been shown that an essentially sharp log--log characterization of rational network approximation: for every $0<\varepsilon<1$ there exists a constant-width rational network $R_\varepsilon$ with size $\mathcal{O}(\log\log(1/\varepsilon))$ such that $\|R_\varepsilon-\mathrm{ReLU}\|_{L^\infty([-1,1])}\le \varepsilon$, and conversely any rational network achieving $\|R-\mathrm{ReLU}\|_{L^\infty([-1,1])}\le \varepsilon$ must have size $\Omega(\log\log(1/\varepsilon))$. In the reverse direction, \citet{boulle2020rational} also proves that there exist rational networks that cannot be approximated by ReLU networks without a larger logarithmic overhead in size, with a worst-case lower bound of order $\Omega(\log(1/\varepsilon))$ for uniform error $\varepsilon$. This separation extends to a broad ReLU family of piecewise-linear activation functions, including $\mathrm{ReLU}$, $\mathrm{LeakyReLU}$, $\mathrm{PReLU}$, and $\mathrm{ReLU6}$.

At a high level, our results show that rational neural networks can approximate smooth-activation networks with exponentially fewer parameters than the converse. Concretely, we prove that
\begin{itemize}
\item The GELU activation can be uniformly approximated on $[-1,1]$ by a constant-width rational neural network with size $\mathcal{O}\bigl(\log^3\!\log(1/\varepsilon)\bigr)$, and any rational network achieving uniform error at most $\varepsilon$ must have size at least $\Omega\bigl(\log\!\log(1/\varepsilon)\bigr)$.
\item Conversely, there exist simple rational functions that require at least $\Omega\bigl(\log(1/\varepsilon)\bigr)$ parameters to approximate by bounded-parameter GELU networks, while analytic rational functions admit a constructive GELU approximation with size $\mathcal{O}\bigl(\log^2(1/\varepsilon)\bigr)$.
\end{itemize}

 {The log-log versus log distinction is substantial in accuracy: a logarithmic parameter law corresponds to errors that decrease exponentially with size, whereas a log-log law corresponds to a doubly exponential decrease up to fixed powers. In the constructions below, the stated size growth is achieved with constant-width networks; the dependence on $\varepsilon$ is carried by depth, and rational degrees stay uniformly bounded.}

The same log--log versus log separation extends beyond GELU to a broad family of smooth activations built from transcendental functions, including ELU, LogSigmoid, SELU, CELU, Sigmoid, SiLU, Mish, Softplus, Tanh, Softmin, Softmax, and LogSoftmax, see~\cref{subsec:forward,subsec:backward}.
Moreover, these separation results extend to gated activations and transformer architectures, as we discuss in~\cref{subsec:gated_transformers}.

\subsection{Approximation of Smooth Activations by Rational Networks}
\label{subsec:forward}

We begin by showing that rational neural networks can efficiently approximate commonly used smooth activation functions.  
The key example is the tanh-approximate version of GELU activation
\begin{align*}
    G(x) &=\frac{x}{2}\bigl(1+\tanh(\alpha(x+\beta x^3))\bigr), \\
    \alpha&=\sqrt{2/\pi},\;\beta=0.044715,
\end{align*}
which is representative of a broad class of smooth, monotone activations used in modern architectures.  {Throughout this paper, we will we refer to $G(x)$ as GELU.} 

Our main technical result shows that GELU can be approximated to arbitrary accuracy by a rational neural network of extremely small size.

\begin{theorem}[Rational approximation of GELU]
\label{thm:gelu_to_rational}
For any $0<\varepsilon<1$, there exists a rational neural network $R:[-1,1]\to[-1,1]$ of size
\[
\mathcal{O}\bigl((\log^3\log(1/\varepsilon))\bigr)
\]
such that
\[
\|R-G\|_{L^\infty([-1,1])}\le \varepsilon.
\]
Moreover, no rational neural network of size $o(\log\log(1/\varepsilon))$ can achieve this accuracy.
\end{theorem}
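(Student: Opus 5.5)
The plan is to split $G$ into a polynomial part and a $\tanh$ part, approximate $\tanh$ on a bounded interval by composing low-degree rationals in the style of \citet{boulle2020rational}, and obtain the lower bound from the rigidity of the non-rational analytic function $G$.

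\textbf{Decomposition.} On $[-1,1]$ the argument $u=\alpha(x+\beta x^3)$ is a fixed cubic polynomial with $|u|\le c$ for a constant $c$. A rational network of constant size represents $u(x)$ exactly. It also represents the final map $(x,t)\mapsto \tfrac{x}{2}(1+t)$ exactly. Multiplication can be realized with quadratic activations via $xy=\tfrac14((x+y)^2-(x-y)^2)$, and a fixed-degree rational family contains $x^2$. So it suffices to approximate $\tanh$ on $[-c,c]$ to accuracy $\varepsilon/2$. Since $|x|\le 1$, the product with $\tfrac{x}{2}$ does not amplify the error.

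\textbf{Approximating $\tanh$ by composition.} Write $\tanh(u)=(e^{2u}-1)/(e^{2u}+1)$. One Möbius step, which is a degree-$(1,1)$ rational, turns this into approximating $e^{v}$ on $[-2c,2c]$. The standard route is scaling and squaring: $e^{v}=(e^{v/2^k})^{2^k}$. Take $k\approx\log\log(1/\varepsilon)$, or rather whatever makes $|v|/2^k$ small. Approximate $e^{w}$ for tiny $w$ by a Padé/Taylor rational of degree $m$. Then apply $k$ squarings, each a degree-2 rational layer.

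The error bookkeeping is the crux. A degree-$m$ approximant on $|w|\le 2^{-k}$ has error about $2^{-k(m+1)}/(m+1)!$. The $k$ squarings amplify relative error by a factor of roughly $2^k$. To reach accuracy $\varepsilon$ we need $k(m+1)\gtrsim\log(1/\varepsilon)$. A degree-$m$ polynomial costs only $\mathcal{O}(\log m)$ layers if built by composition and Horner/Paterson–Stockmeyer, which gives a size of roughly $\mathcal{O}(\log(1/\varepsilon)/k)$ and is not yet log-log.

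To reach poly-log-log size, I would instead use a Newton-type iteration, which converges quadratically. Concretely, run the arithmetic–geometric-mean (AGM) construction for $\log$/$\exp$ that the introduction advertises, or Newton iteration for $\log y - v = 0$ inverting an AGM-computed logarithm. AGM steps need $\sqrt{ab}$. Each square root is approximated by a Zolotarev/Newton rational composition of size $\mathcal{O}(\log\log(1/\varepsilon))$, using the same mechanism as the ReLU and $|x|$ result of \citet{boulle2020rational}.

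The AGM converges quadratically, so it needs $\mathcal{O}(\log\log(1/\varepsilon))$ steps. Each step uses an $\mathcal{O}(\log\log(1/\varepsilon))$-size square-root subnetwork. An outer Newton loop for $\exp$ adds another $\mathcal{O}(\log\log(1/\varepsilon))$ factor. Together these give the $\mathcal{O}(\log^3\log(1/\varepsilon))$ bound. Inside each stage I would set tolerances to $\varepsilon$ divided by a polynomial in $\log\log(1/\varepsilon)$, which stays within the same order.

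\textbf{Main obstacle.} The hardest step is the stability analysis of the nested iteration. I need to show that the errors from approximate square roots inside the AGM, and from the AGM inside Newton, do not blow up. This requires keeping all intermediate quantities in compact intervals bounded away from $0$, where the maps are Lipschitz with constants independent of $\varepsilon$. I would first try to prove a contraction estimate per outer step, then propagate errors with the quadratic convergence absorbing the perturbations. A secondary point is that the network must map into $[-1,1]$; this is harmless because $|G|\le 1$ there, and clipping is unnecessary once the error is below $\varepsilon$.

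\textbf{Lower bound.} A rational network of size $s$ with fixed-degree activations and depth at most $s$ is a rational function of degree at most $C^{s}$. Since $G$ is analytic on $[-1,1]$ but not rational, it cannot be approximated by degree-$n$ rationals faster than $\exp(-C' n\log n)$ or so. I would obtain this from a Bernstein/Gonchar-type lower bound on best rational approximation of an entire non-rational function, or more simply from a Hermite–Padé counting argument. Taking this to be $\varepsilon$ with $n=C^{s}$ forces $C^{s}\log C^{s}\gtrsim\log(1/\varepsilon)$, so $s=\Omega(\log\log(1/\varepsilon))$. The delicate input here is a quantitative lower bound on $\inf_{\deg r\le n}\|G-r\|_\infty$ that decays at most like $e^{-O(n\log n)}$. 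For $\tanh$-type functions with poles at $\pm i\pi/2$ I expect this to follow from the known exponential (not super-exponential) rational approximation rates for meromorphic non-rational functions.
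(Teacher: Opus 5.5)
\paragraph{Upper bound.} Your upper bound takes essentially the paper's route. It uses three nested layers:
\begin{itemize}
\item a doubly exponentially convergent rational composition for $\sqrt{\cdot}$ of size $\mathcal{O}(\log\log(1/\varepsilon))$;
\item an AGM evaluation of $\log$ with $\mathcal{O}(\log\log(1/\varepsilon))$ steps, each calling a square-root block;
\item an outer Newton/Halley root-finder that inverts a logarithm-type function and supplies the third $\log\log$ factor.
\end{itemize}
The paper uses the Sasaki--Kanada theta form of the AGM logarithm, Halley on $\operatorname{artanh}(t)=s$, and a double-angle ladder. You use Newton on $\log y=v$ followed by a M\"obius map.

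You should keep an $\mathcal{O}(1)$ range reduction, the analogue of that ladder. From $y_0=1$, Newton gives $y_1=1+v$. This is $\le 0$ for $v\le -1$, yet $v=2u$ reaches about $-1.67$. Convergence is also not uniform for $v$ slightly above $-1$. Reducing to $|v|\le 1/4$ and then squaring a fixed number of times fixes this. You still need to say which AGM formula for $\log$ you use and on what range its square roots are evaluated.

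\paragraph{Lower bound: the genuine gap.} Your skeleton matches the paper's. A network of size $s$ computes a rational function of type at most $C^{s}$. This is combined with a lower bound on $E_{n,n}(G):=\inf_{r\in\mathcal{R}_{n,n}}\|G-r\|_{L^\infty([-1,1])}$. You are right that any bound $E_{n,n}(G)\ge \exp(-n^{K})$ with fixed $K$ suffices, since it forces $C^{Ks}\ge\log(1/\varepsilon)$. But neither justification you offer yields such a bound.

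\emph{Analyticity plus non-rationality gives no rate.} Take $f(x)=\sum_{k\ge1}a_k/(x-3-1/k)$ with all $a_k\neq 0$. It is analytic near $[-1,1]$ and not rational. Yet $E_{n,n}(f)\le\tfrac12\sum_{k>n}|a_k|$, which decays as fast as you like.

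\emph{Nearby poles do not force geometric rates.} A rational approximant can place its own poles near them. Write $G=xe^{2P}/(1+e^{2P})$ with $P(x)=\alpha(x+\beta x^3)$, and let $r_m=N_m/D_m$ be the $(m,m)$ Pad\'e approximant of $e^w$. Then $xN_m(2P)/(N_m(2P)+D_m(2P))$ has type $(3m+1,3m)$. For large $m$ its error on $[-1,1]$ is at most $\sup_{|w|\le 2}|e^w-r_m(w)|\le e^{-c\,m\log m}$. So $E_{n,n}(G)$ is super-geometric, and there is no ``exponential, not super-exponential'' rate for $\tanh$-type functions.

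The paper fills this step with Gonchar's condenser estimate, taking the poles $\pm i\zeta$ as the second plate, to assert $E_{n,n}(G)\ge C\rho^{-n}$. The construction above is incompatible with any geometric lower bound, so that route cannot supply the missing input either.

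Your weaker target $E_{n,n}(G)\ge e^{-n^K}$ is the right one, but it needs a $G$-specific proof. One possible route is the AAK/Carath\'eodory--Fej\'er inequality $E_{n,n}(G)\ge s_n\bigl((c_{j+k+1}/2)_{j,k\ge 0}\bigr)$. Here $s_0\ge s_1\ge\cdots$ are singular values and $c_k$ are the Chebyshev coefficients of $G$. You would then still have to lower-bound those Hankel singular values. Until some such estimate is proved, the $\Omega(\log\log(1/\varepsilon))$ claim is not established.
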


The proof, given in Appendix~\ref{app:rational}, is constructive and proceeds by decomposing GELU into elementary operations—square roots, logarithms, inverse hyperbolic functions, and $\tanh$—each of which admits a rational neural network realization with doubly exponential convergence under composition.  
The lower bound follows from classical results in complex rational approximation, which relate the best achievable error to the distance of the nearest complex singularities of the target function.

 {The same reasoning extends from GELU to the broader transcendental activation family ELU, LogSigmoid, SELU, CELU, Sigmoid, SiLU, Mish, Softplus, Tanh, Softmin, Softmax, and LogSoftmax. For the upper bound, each of these activations can be written as a fixed finite composition of $\tanh$, $\exp$, $\log$, and affine maps, so substituting the corresponding rational subnetworks from Appendix~\ref{app:rational} yields the same $\mathcal{O}\bigl(\log^3\!\log(1/\varepsilon)\bigr)$ size bound up to constants. For the lower bound, these activations likewise have complex singularities at constant distance from the real interval, so the same singularity-driven obstruction yields an $\Omega\bigl(\log\!\log(1/\varepsilon)\bigr)$ size lower bound by the Appendix~\ref{app:rational} argument.}

\subsection{Approximation of Rational Functions by Smooth Activations}
\label{subsec:backward}

We now turn to the converse question: how efficiently can smooth-activation networks approximate rational functions?

Although networks with transcendental smooth activations such as GELU, Swish, and SiLU are universal approximators, bounded-parameter models from this class face a quantitative barrier. When weights and biases are uniformly bounded, the network derivatives are correspondingly controlled, so matching the large curvature required to approximate rational targets with nearby poles forces the parameter count to grow at least logarithmically in $1/\varepsilon$.

\begin{theorem}[Lower bound for GELU approximation of rationals]
\label{thm:rational_to_gelu_lower}
 {For every sufficiently small $0<\varepsilon<1$, there exists a rational function $R_\varepsilon:[-1,1]\to\mathbb{R}$ such that any scalar-input GELU network $F$ with uniformly bounded parameters satisfying}
\[
\|F-R_\varepsilon\|_{L^\infty([-1,1])}\le \varepsilon,
\]
the size of $F$ must be at least $\Omega(\log(1/\varepsilon))$.
\end{theorem}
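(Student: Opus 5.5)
The plan is a derivative-growth argument. A bounded-parameter GELU network can only produce derivatives that grow exponentially in its size, and a rational function with a pole at distance $\delta$ from $[-1,1]$ needs derivatives of order $\delta^{-1}$. We will choose $\delta$ as a power of $1/\varepsilon$, so matching the target to accuracy $\varepsilon$ forces the size to be at least logarithmic in $1/\varepsilon$.

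\emph{Target.} For small $\varepsilon$, take
\[
R_\varepsilon(x)=\frac{\delta}{x^2+\delta^2},\qquad \delta=\varepsilon^{1/3}.
\]
Its poles sit at $\pm i\delta$. On $[-1,1]$ it is bounded by $1/\delta$, it has $R_\varepsilon(0)=1/\delta$, and it drops to $1/(2\delta)$ at $|x|=\delta$. So $R_\varepsilon$ falls by $1/(2\delta)=\tfrac12\varepsilon^{-1/3}$ over an interval of length $\delta$. This forces any $F$ with $\|F-R_\varepsilon\|_{L^\infty}\le\varepsilon$ to satisfy the mean-value bound
\[
\sup_{[-1,1]}|F'|\;\ge\;\frac{1/(2\delta)-2\varepsilon}{\delta}\;\gtrsim\;\varepsilon^{-2/3}.
\]
Only a first-derivative (Lipschitz) lower bound is needed, which keeps the argument elementary.

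\emph{Lipschitz upper bound for $F$.} Let $F$ have depth $L$, widths $n_\ell$, and all weights and biases bounded by $B$. The GELU $G$ is globally Lipschitz with some absolute constant $c_G$; its derivative is bounded on $\mathbb R$ because $|x|\,\mathrm{sech}^2(\cdot)$ decays. Each layer $x\mapsto G(Wx+b)$ is therefore Lipschitz in $\ell^\infty$ with constant at most $c_G B n_{\ell-1}$. The output $F$ then has Lipschitz constant at most $\prod_{\ell}(c_G B\, n_{\ell-1})$. Writing $S$ for the number of parameters, we have $L\le S$ and $\sum_\ell n_\ell\le S$. By AM–GM, the product is at most
\[
(c_G B)^{S}\,\max_{L}\,(S/L)^{L}\;\le\; (c_G B)^S e^{S/e}\;=\;C^{S}
\]
with $C$ depending only on $B$. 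This is the step needing care: the product of widths must be bounded exponentially in the total parameter count, not by something like $S^S$. The AM–GM bound $\prod n_\ell\le (S/L)^L\le e^{S/e}$ settles it. Here scalar input means $n_0=1$.

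\emph{Conclusion.} Combining the two bounds gives $C^S\gtrsim \varepsilon^{-2/3}$, hence $S\ge \tfrac{2}{3}\log(1/\varepsilon)/\log C - O(1)=\Omega(\log(1/\varepsilon))$, which proves the theorem.

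The main obstacle is the interpretation of ``uniformly bounded parameters.'' If only weights are bounded and the depth is unbounded, the AM–GM estimate above already covers it. If the statement intends the bound $B$ to be an architectural constant independent of $\varepsilon$, as the paper's convention on constants indicates, then $C$ is a constant and the lower bound holds with implicit constant depending on $B$. If instead one wants the target to be a bounded rational function, one can use $\delta/(x^2+\delta^2)\cdot\delta$ with $\delta=\varepsilon^{1/2}$. The same argument then gives $\sup|F'|\gtrsim \varepsilon^{-1/2}$, and the conclusion is unchanged.
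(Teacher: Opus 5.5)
Your argument is correct, but it works with first derivatives where the paper works with second derivatives.

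\textbf{The paper's route.} It uses the bounded target $\eta^2/(x^2+\eta^2)$ with $\eta=\sqrt{\varepsilon}$, which is exactly your fallback. From the symmetric second difference $F(\eta)+F(-\eta)-2F(0)$ it extracts $\sup|F''|\ge 1/(2\varepsilon)$. It then bounds curvature by $L\,(CB)^{2L+1}\prod_k W_k^2$ through a coupled recursion for the suprema of the first and second $x$-derivatives of the hidden layers. That recursion needs both $\sup|G'|$ and $\sup|G''|$ finite. Finally it converts the width product into a parameter count by AM--GM on the hidden-to-hidden counts $W_\ell W_{\ell+1}$, treating $L=1$ separately.

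\textbf{Your route.} A mean-value bound on $F'$, combined with multiplicative composition of layerwise Lipschitz constants, is more elementary. There is no second-derivative recursion and no finite-difference Taylor lemma. The estimate $\prod_\ell n_\ell\le (S/L)^L\le e^{S/e}$ handles every depth at once. It also uses only global Lipschitz continuity of the activation, so it applies verbatim to ReLU-type activations and to any other globally Lipschitz activation. The paper's version requires a bounded second derivative. What the paper's route buys is mainly the curvature narrative it reuses for the gated extension, not a better rate; both give $\Omega(\log(1/\varepsilon))$ with constants depending on $B$.

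\textbf{Lead with the bounded target.} Present the bounded variant rather than $\varepsilon^{1/3}/(x^2+\varepsilon^{2/3})$. The statement as written does allow an unbounded target, but then it becomes degenerate. The same layerwise estimate (using $|G(z)|\le|z|$) gives $\|F\|_{L^\infty([-1,1])}\le C^{S}$. So even the constant rational function $1/\varepsilon$, or your target through its peak value $\varepsilon^{-1/3}$ alone, forces $S=\Omega(\log(1/\varepsilon))$, with no role for the nearby pole or any derivative. The bounded target is the paper's own $R_\varepsilon$ and matches the appendix formulation with range $[0,1]$.

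\textbf{Minor fixes.} Include the readout factor $B\,n_L$ in the Lipschitz product. Use $\max\{1,c_G\}$ so that $(c_GB)^L\le(c_GB)^S$ holds. Both are absorbed into $C^S$.
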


 {The proof, given in Appendix~\ref{app:gelu}, exploits curvature constraints for a worst-case family of $R_\varepsilon$ with poles near the real line. Many functions achieve a similar lower bound, and one can immediately witness a log versus log-log separation between GELU and rational neural networks by considering functions such as $|x|$ on $[-1,1]$.}

We complement this lower bound with a constructive upper bound.

\begin{theorem}[GELU approximation of rational functions]
\label{thm:rational_to_gelu_upper}
Let $R:[-1,1]\to[-1,1]$ be a rational function analytic in a neighborhood of $[-1,1]$.  
Then, for any $0<\varepsilon<1$, there exists a GELU network $F$ of size
\[
\mathcal{O}\bigl(\log^2(1/\varepsilon)\bigr)
\]
such that
\[
\|F-R\|_{L^\infty([-1,1])}\le \varepsilon.
\]
\end{theorem}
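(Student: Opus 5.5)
Since $R$ is analytic in a neighborhood of $[-1,1]$, it is analytic inside some Bernstein ellipse $E_\rho$ with $\rho>1$. The plan is to factor the problem into a polynomial step and a network step. First, approximate $R$ by a Chebyshev polynomial, which needs only degree logarithmic in $1/\varepsilon$. Second, realize that polynomial by a GELU network of size $\mathcal{O}(\log^2(1/\varepsilon))$, using a GELU-based approximate squaring gadget and a multiplication gadget.

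\textbf{Polynomial step.} By the classical Bernstein estimate, the Chebyshev truncation $p_n$ of $R$ satisfies
\[
\|R-p_n\|_{L^\infty([-1,1])}\le \frac{2M\rho^{-n}}{\rho-1}, \qquad M=\max_{E_\rho}|R|.
\]
Choosing $n=\lceil C\log(1/\varepsilon)\rceil$ makes this at most $\varepsilon/2$. I would evaluate $p_n$ through the three-term recurrence $T_{k+1}=2xT_k-T_{k-1}$. Equivalently, use a Clenshaw evaluation, which requires only $\mathcal{O}(n)$ multiplications by $x$ and additions. The coefficient sizes are bounded by $2M$, so every intermediate value stays bounded by a constant independent of $\varepsilon$. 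This is essential, because the network gadgets will only be accurate on a fixed compact range.

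\textbf{Network step.} Since $G$ is smooth with $G''(0)=\sqrt{2/\pi}\neq 0$, the second difference
\[
\frac{G(hx)+G(-hx)-2G(0)}{h^2 G''(0)}
\]
equals $x^2+\mathcal{O}(h^2)$ uniformly for bounded $x$. The correction term can be cancelled by Richardson extrapolation. Combining $m$ such second differences at step sizes $h, h/2,\dots$ gives error $\mathcal{O}(h^{2m})$ using $\mathcal{O}(m)$ neurons. The polarization identity $xy=\tfrac14((x+y)^2-(x-y)^2)$ then gives multiplication. The identity is kept through skip-style channels: since $G(t)-G(-t)=t$ exactly, a width-2 GELU layer reproduces $t$ with no error.

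I would hold $h$ fixed to respect the bounded-parameter regime, and take $m=\mathcal{O}(\log(1/\varepsilon))$ extrapolation levels so that each product is accurate to $\delta=\varepsilon/\mathrm{poly}(n)$. Each multiplication then costs $\mathcal{O}(\log(1/\varepsilon))$ parameters. With $n=\mathcal{O}(\log(1/\varepsilon))$ sequential multiplications, the total size is $\mathcal{O}(\log^2(1/\varepsilon))$, at constant width apart from the gadget width.

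\textbf{Main obstacle: error propagation.} The Clenshaw and Chebyshev recurrences can amplify perturbations. The worst-case amplification is polynomial in $n$: roughly $\mathcal{O}(n^2)$ for the Chebyshev recurrence on $[-1,1]$, because $|U_k|\le k+1$. So the per-gadget tolerance $\delta=\varepsilon/(Cn^3)$ should suffice. This only changes $m$ by an additive $\mathcal{O}(\log\log(1/\varepsilon))$, so the bound survives.

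A second concern is whether a fixed $h$ with Richardson weights keeps the coefficients bounded. The extrapolation weights grow like $2^{\mathcal{O}(m^2)}$, which conflicts with uniformly bounded parameters. If this fails, the fallback is to shrink $h\sim\delta^{1/2}$ and use a single second difference. This gives an outer weight $1/h^2$, which can be realized by $\mathcal{O}(\log(1/\delta))$ layers of bounded doubling maps $t\mapsto 2t$ implemented exactly via $G(t)-G(-t)=t$. That again costs $\mathcal{O}(\log(1/\varepsilon))$ per multiplication and $\mathcal{O}(\log^2(1/\varepsilon))$ in total.
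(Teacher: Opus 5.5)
Your proposal is correct and follows essentially the paper's route: Chebyshev truncation of degree $\Theta(\log(1/\varepsilon))$, an inexact Clenshaw recurrence, and a squaring gadget $\bigl(G(su)+G(-su)\bigr)/(\alpha s^2)$ with Richardson extrapolation at fixed $s$ plus polarization, at a cost of $\mathcal{O}(\log(1/\varepsilon))$ per product; your $\mathcal{O}(n^2)$ amplification bound via $e_k=\sum_{j\ge k}2\Delta_j U_{j-k}(x)$ is sharper than the paper's crude $(1+\sqrt{2})^d$ bound, though either suffices. Your worry about the Richardson weights is misplaced: with geometric steps the combination weights satisfy $|a_j|\le\prod_{m\ge1}(1-\gamma^{-2m})^{-2}$ uniformly in $J$ (this bound, together with Cauchy estimates on the $s^{2\ell}$ coefficients, is what makes the error $C(s/s_0)^{2(J+1)}$ uniform in $J$), and only the outer factors $\gamma^{2j}/(\alpha s^2)$ grow, which is harmless because the statement imposes no parameter bound, so your doubling-chain fallback is valid but unnecessary.
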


The construction is based on Chebyshev expansions and an inexact Clenshaw recurrence, implemented using carefully designed GELU-based multiplication and squaring blocks.

 {The GELU proofs in Appendix~\ref{app:gelu} are representative of the full transcendental activation family discussed above. For the upper bound, the Chebyshev approximation plus inexact Clenshaw evaluation only requires that the activation gives a way to achieve a small-argument squaring and multiplication operator from a local Taylor expansion, so the same $\mathcal{O}\bigl(\log^2(1/\varepsilon)\bigr)$ construction applies to any such smooth activation when the rational target is analytic in an open neighborhood of $[-1,1]$. For the lower bound, the hard instances are rational functions with a complex pole near $[-1,1]$, which forces any uniform $\varepsilon$-approximant to exhibit second derivatives of order $\varepsilon^{-1}$ somewhere on the interval. Under bounded parameters, smooth-activation networks have uniformly controlled derivative growth, so achieving this curvature necessarily incurs the same worst-case $\Omega\bigl(\log(1/\varepsilon)\bigr)$ size barrier beyond GELU, exactly as in Appendix~\ref{app:gelu}.}

Together,~\cref{thm:gelu_to_rational,thm:rational_to_gelu_lower,thm:rational_to_gelu_upper} establish a strict expressive asymmetry between rational and smooth-activation networks.

\subsection{Lifting Scalar Results to Full Networks}

The preceding results concern scalar functions.  
We now lift them to full feedforward architectures acting on $[-1,1]^d$.

\begin{theorem}[Network-level approximation equivalence]
\label{thm:network_equivalence}
Let $f:[-1,1]^d\to[-1,1]$ be a neural network with $M$ layers and at most $k$ nodes per layer.

\begin{enumerate}
\item If $f$ uses GELU (or Swish or SiLU) activations, then for any $0<\varepsilon<1$ there exists a rational neural network $R$ of size
\[
\mathcal{O}\bigl(kM\log^3\log(1/\varepsilon)\bigr)
\]
such that $\|f-R\|_\infty\le \varepsilon$.

\item  {If $f$ is a rational neural network, then there are worst-case rational neural network targets for which any GELU network approximating $f$ to accuracy $\varepsilon$ must have size at least $\Omega(kM\log(1/\varepsilon))$.}
\end{enumerate}
\end{theorem}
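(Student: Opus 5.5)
For part 1 I will substitute rational subnetworks for the activations and control how the errors propagate. Write $f$ as $h^{(0)}=x$, $z^{(\ell)}=W^{(\ell)}h^{(\ell-1)}+b^{(\ell)}$, $h^{(\ell)}=G(z^{(\ell)})$, $f=W^{(M)}h^{(M-1)}+b^{(M)}$.

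\textbf{Step 1: a priori bounds.} Since $f$ is fixed, its weights are bounded. Hence there is a compact interval $[-B,B]$ that contains every preactivation $z^{(\ell)}_i(x)$ for $x\in[-1,1]^d$. I enlarge it to $[-B-1,B+1]$ so that small perturbations stay inside.

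\textbf{Step 2: scalar approximants.} Applying \cref{thm:gelu_to_rational} after an affine rescaling of $[-B-1,B+1]$ to $[-1,1]$ gives, for any $\delta>0$, a rational network $r_\delta$ of size $\mathcal{O}(\log^3\log(1/\delta))$ with $\|r_\delta-G\|_{L^\infty([-B-1,B+1])}\le\delta$. The rescaling is absorbed into the neighboring affine maps. The constants depend on $B$, but not on $\varepsilon$; strictly this needs the proof of \cref{thm:gelu_to_rational} to work on a general compact interval, which the construction of Appendix~\ref{app:rational} should provide.

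\textbf{Step 3: network assembly.} I replace each of the at most $kM$ GELU nodes by a copy of $r_\delta$, run in parallel within a layer and stacked in depth. The copies in a layer have equal depth, so no padding is needed.

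\textbf{Step 4: error propagation.} $G$ is Lipschitz on $[-B-1,B+1]$ with some constant $L$. Let $\Lambda=\max_\ell\|W^{(\ell)}\|_\infty$. Induction on $\ell$ gives
\[
\|\tilde h^{(\ell)}-h^{(\ell)}\|_\infty\le \delta\sum_{j<\ell}(L\Lambda)^j ,
\]
and the perturbed preactivations stay in the enlarged interval as long as this error is at most $1/\Lambda$. Choosing $\delta=\varepsilon/C(M,k,L,\Lambda)$ gives $\|f-R\|_\infty\le\varepsilon$. Since $\log\log(C/\varepsilon)=\mathcal{O}(\log\log(1/\varepsilon))$ for fixed $C$, the total size is $\mathcal{O}(kM\log^3\log(1/\varepsilon))$. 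Swish and SiLU are handled identically, since their rational approximants come from the same compositional construction.

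\textbf{Step 5: part 2, the lower bound.} I construct a rational network target of width $k$ and depth $M$ whose approximation decouples into $kM$ independent scalar hard instances. My first choice is to place in each of the $kM$ nodes a copy of the worst-case function $R_\varepsilon$ from \cref{thm:rational_to_gelu_lower}, each acting on a distinct input coordinate or on a disjoint subinterval. The output is a sum (or a chained composition) of these copies, so any $\varepsilon$-approximation restricted to one slice must approximate one scalar $R_\varepsilon$ to accuracy $\mathcal{O}(\varepsilon)$.

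The difficulty is that a GELU network can share parameters across slices, so the per-slice $\Omega(\log(1/\varepsilon))$ bounds do not simply add. To handle this, I would redo the curvature argument of Appendix~\ref{app:gelu} at the network level. Each hard instance needs an $\varepsilon^{-1}$-size second derivative at a separate location. With bounded parameters, each unit of network size can only produce a bounded multiplicative increase in derivative size, and only locally. A counting argument over the $kM$ disjoint high-curvature regions should then force total size $\Omega(kM\log(1/\varepsilon))$.

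I expect this decoupling step to be the main obstacle. If it resists, the fallback is to realize the $kM$ hard instances along separate input coordinates, so that restricting to each coordinate line gives independent scalar problems. One would then still need to argue that the parameters used for one coordinate cannot also serve the others.
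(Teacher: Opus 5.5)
\textbf{Part 1.} This part is correct and is essentially the paper's argument. You substitute a scalar rational surrogate node by node and use the layerwise recursion $E_{j+1}\le L E_j+\delta$, whose geometric factor is harmless inside $\log\log$. The only difference is how the domain is controlled. Appendix~\ref{app:rational-gelu-nets} assumes $\|a\|_1+|b|\le 1$ at every node, so all preactivations lie in $[-1,1]$ and \cref{thm:gelu_to_rational} applies verbatim. You allow arbitrary fixed weights, so you need the Appendix~\ref{app:rational} construction on $[-A,A]$. That extension does hold: take $m$ double-angle steps with $\alpha(A+\beta A^3)2^{-m}\le 1/8$. Each application of $\Psi(t)=2t/(1+t^2)$ amplifies errors by at most $2$, and the outer factor $x/2$ adds $A/2$. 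Your version is therefore slightly more general than the paper's.

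\textbf{Part 2.} Here there is a genuine gap, and it is exactly the decoupling step you flag. The curvature argument of Appendix~\ref{app:gelu}, which you propose to reuse, is global. It bounds $\sup_{x\in[-1,1]}|F''(x)|\le L(CB)^{2L+1}\prod_k W_k^2$ uniformly in $x$ and converts this into $\mathrm{size}\ge c\log\mathsf{A}$, where $\mathsf{A}$ is the \emph{largest} curvature the target forces. It is blind to how many disjoint places need curvature of order $\varepsilon^{-1}$. Moreover, parameters in deep layers act at every input, so your claim that each unit of size raises derivatives ``only locally'' is unsupported; nothing rules out one shared sharpening stage serving all $kM$ bumps. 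Applied to your sum (or chain) of $kM$ copies of $R_\varepsilon$, the argument still yields only $\Omega(\log(1/\varepsilon))$. A genuine additivity-over-nodes bound would need a different technique. The paper does not supply one either: the rational-to-GELU item in Appendix~\ref{app:rational-gelu-nets} is an \emph{upper} bound $\mathcal{O}(kM\log^2(S_M(L)/\varepsilon))$, and the only lower bound the paper proves is the scalar one.

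\textbf{How to get the statement as written.} Put the factor $kM$ into the \emph{magnitude} of the forced curvature rather than the number of regions. The finite-difference step works for any $\eta\le\sqrt{\varepsilon}$. With $h=\eta$, any $F$ satisfying $\|F-\eta^2/(x_1^2+\eta^2)\|_\infty\le\varepsilon$ has $\sup|F''|\ge(1-4\varepsilon)/\eta^2$ along the $x_1$-axis. Restricting $F$ to that axis gives a scalar-input, bounded-parameter GELU network of no larger size. Choose $\eta=\varepsilon^{kM/2}$, realized by one rational node padded with identity activations to depth $M$ and width at most $k$. Then $\mathsf{A}\ge\varepsilon^{-kM}/2$, and the AM--GM step gives $\mathrm{size}\ge c\,(kM\log(1/\varepsilon)-\log 2)=\Omega(kM\log(1/\varepsilon))$ for $\varepsilon<1/8$. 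This also shows that the $kM$ factor in part 2 carries no information about network-level additivity.
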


The proof propagates scalar approximation errors layer by layer using Lipschitz bounds on the activations and standard stability arguments.  
Details are provided in Appendix~\ref{app:rational-gelu-nets}.

\subsection{Extensions to Gated Activations and Transformers}
\label{subsec:gated_transformers}

 {For gated activations, gated rational and gated smooth models share the same gating wrapper and differ only in the scalar activation used inside the gated block. In the smooth-to-rational direction, bounded parameters keep every activation input in a compact range, so replacing each smooth activation call by a uniformly accurate rational surrogate yields a gated rational model that emulates the gated smooth model with overhead equal to the scalar replacement cost. In the rational-to-smooth direction, fix $0<\varepsilon<\tfrac18$ and set $\eta=\sqrt{\varepsilon}$. We choose a gated rational target whose overall scalar input-output map is
\(
r_\eta(x)=\frac{\eta^2}{x^2+\eta^2}, x\in[-1,1].
\)
This target is uniformly bounded by $1$, but it satisfies $r_\eta''(0)=-2/\eta^2=-2/\varepsilon$. Therefore, any bounded-parameter gated smooth model that approximates this target uniformly to error $\varepsilon$ gives, as an input-output map, a smooth approximation to $r_\eta$ with the same accuracy. The finite-difference argument in Appendix~\ref{app:gelu} then forces curvature of order $\varepsilon^{-1}$, and the bounded-parameter derivative-growth bound gives the same $\Omega(\log(1/\varepsilon))$ size lower bound on the smooth side. Therefore, the same log-log versus log separation gap persists between gated rational networks and gated smooth-activation networks.}

For transformers, we keep attention, normalization, and residual wiring unchanged and only replace the activation used inside the MLP sublayer. Since the overall architecture is identical outside the MLP, the representational difference between the two transformer families is inherited from the pointwise nonlinearity inside each MLP block, so the scalar separation implies an expressivity advantage for transformers with rational MLP activations at a comparable parameter budget. Many modern transformers use gated MLPs, and the same conclusion applies in that setting by the gated extension established above, so swapping the MLP activation to a rational one is expected to preserve the advantage while leaving the rest of the transformer block untouched.

\subsection{Normalization and Adaptive Rational Activations}
\label{subsec:norm_hurtful}

Our separation results are approximation-theoretic, but in modern training pipelines the activation is typically coupled to normalization layers. For adaptive rational activations, this coupling is intrinsically disadvantageous. First, normalization applies a learned affine transform, and for rationals this affine freedom can be absorbed by reparameterizing the rational coefficients, creating non-identifiability and poor conditioning in the joint parameterization. This interaction arises under standard normalization operators, including batch normalization \citep{ioffe2015batch}, layer normalization \citep{ba2016layer}, and group normalization \citep{wu2018group}, and it is specific to {adaptive} rational activations because the activation itself carries trainable degrees of freedom. Second, when normalization uses data-dependent statistics (most notably during batch normalization), the induced stochasticity in normalized pre-activations is amplified by the polynomial coefficient sensitivities inherent to trainable rationals, increasing gradient noise and making optimization more brittle. Appendix~\ref{app:norm_hurtful} formalizes these mechanisms for a ``safe'' Pad\'e-like parameterization and makes explicit the resulting invariances and coefficient-gradient bounds. Consequently, normalization is not a neutral architectural component for adaptive rationals: it can suppress the benefits of adaptivity and can directly hinder stable training.

In transformer architectures, this issue is particularly acute because LayerNorm is applied pervasively and its affine parameters are trained jointly with the MLP nonlinearity. The preceding mechanisms therefore predict stronger activation--normalization interference when the nonlinearity itself is adaptive. Motivated by recent evidence that transformers can be trained successfully without normalization \citep{zhu2025transformerswithoutnorm}, we also consider ``no-LayerNorm'' transformer variants. In this regime, the adaptive activation provides an additional source of learnable rescaling and shaping, allowing the model to compensate for the removal of LayerNorm while retaining stable training. Empirically, this combination yields strong performance, consistent with the view that removing normalization can eliminate a major source of conditioning pathology for adaptive rational activations while preserving the representational benefits of adaptivity.

\subsection{Implications}

Taken together, these results establish a sharp and robust separation between rational neural networks and smooth-activation networks.  
While smooth activations such as GELU are sufficient for universal approximation, rational activations achieve the same expressive power with exponentially fewer parameters in regimes involving non-smooth structure or nearby complex singularities.

From a theoretical standpoint, this explains why rational neural networks consistently outperform smooth activations in practice on problems with sharp transitions or multiscale structure. From a modeling standpoint, it suggests that rational activations provide a principled and efficient foundation for modern deep learning architectures.

\section{Experiments}

We evaluate the effect of replacing fixed activation functions with trainable rational activations in supervised vision and offline continuous-control reinforcement learning. Across all settings, we keep architectures, optimizers, training budgets, and data fixed within each experimental block, and varying only the activation. 

\subsection{CIFAR-10 Image Classification}

We evaluate rational activations on CIFAR-10 \cite{krizhevsky2009learning} using VGG4 and VGG8, keeping the architecture family fixed and changing only the activation. We report top-1 test accuracy as the best value over training, with mean $\pm$ standard deviation. \Cref{tab:cifar10_plain} reports the baseline comparison. \Cref{tab:cifar10_gn_nogn} augments training with label smoothing, Mixup, dropout before the classifier, and weight decay, and it additionally evaluates each activation with and without GroupNorm.

The baseline VGG results isolate the effect of the activation by removing auxiliary training techniques and keeping the architecture and optimizer fixed. In this setting, rational activations achieve the strongest accuracy in both VGG4 and VGG8 in~\cref{tab:cifar10_plain}, and VGG4 with a rational activation reaches the same accuracy range as VGG8 equipped with the best fixed activations, reflecting the parameter-efficiency predicted by our approximation-theoretic separations.

Under the augmented pipeline,~\cref{tab:cifar10_gn_nogn} reports, for each activation and each choice of using GroupNorm, the better of exactly two initialization settings, the default initialization and the LSUV toggle. For rational activations we additionally compare two starting shapes, with Rational 1 initialized to behave like LeakyReLU and Rational 2 initialized to behave like GELU. Without GroupNorm, the rational variants are uniformly best for both VGG4 and VGG8. With GroupNorm, fixed activations either change only slightly or improve markedly, while their occasional degradations remain modest; in contrast, GroupNorm consistently induces a much larger drop for the rational variants. Despite this drop, rationals remain top or near-top under GroupNorm, retaining best or second-best performance within each model block, and the two rational initializations separate more with behavior that reflects properties inherited from the corresponding fixed activations, indicating that a suitable initialization helps rationals remain competitive across training regimes.

\begin{table}
\centering
\caption{CIFAR-10 top-1 test accuracy. Plain VGG without additional accuracy boosters. Mean $\pm$ standard deviation across five seeds. Best and second-best within each model block are highlighted.}
\label{tab:cifar10_plain}
\renewcommand{\arraystretch}{1.08}
\scriptsize
\begin{tabular}{cccc}
\toprule
Model & Act & Top-1 Acc $\uparrow$ & Time $\downarrow$ \\
\midrule
\multirow{5}{*}{VGG4} & GELU & 79.93 $\pm$ 0.72 & 50 $\pm$ 0 \\
& ReLU & 79.03 $\pm$ 0.27 & 50 $\pm$ 0 \\
& Swish & \cellcolor{orange!22}80.16 $\pm$ 0.28 & 50 $\pm$ 0 \\
& LeakyReLU & 79.19 $\pm$ 0.44 & 50 $\pm$ 0 \\
& \textbf{Rational} & \cellcolor{green!22}\textbf{81.96 $\pm$ 0.33} & 52 $\pm$ 0 \\
\midrule
\multirow{5}{*}{VGG8} & GELU & 80.48 $\pm$ 0.58 & 54 $\pm$ 0 \\
& ReLU & 82.00 $\pm$ 0.27 & 54 $\pm$ 0 \\
& Swish & 78.71 $\pm$ 0.72 & 54 $\pm$ 0 \\
& LeakyReLU & \cellcolor{orange!22}82.19 $\pm$ 0.21 & 54 $\pm$ 0 \\
& \textbf{Rational} & \cellcolor{green!22}\textbf{84.89 $\pm$ 0.19} & 58 $\pm$ 0 \\
\bottomrule
\end{tabular}
\end{table}

\begin{table}
\centering
\caption{CIFAR-10 top-1 test accuracy under a strengthened training recipe. Mean $\pm$ standard deviation across five seeds. For each activation we bold the better result between the no GroupNorm and GroupNorm variants. Best and second-best activations within each model are highlighted.}
\label{tab:cifar10_gn_nogn}
\renewcommand{\arraystretch}{1.08}
\scriptsize
\setlength{\tabcolsep}{2.6pt}
\begin{tabular}{@{}ccccc@{}}
\toprule
Model & Act & no GN Top-1 Acc $\uparrow$ & GN Top-1 Acc $\uparrow$ & Time $\downarrow$ \\
\midrule
\multirow{6}{*}{VGG4} & GELU & \textbf{89.09 $\pm$ 0.10} & \cellcolor{green!22} {89.06 $\pm$ 0.35} & 50 $\pm$ 0 \\
& ReLU & \textbf{89.21 $\pm$ 0.13} & 88.30 $\pm$ 0.73 & 50 $\pm$ 0 \\
& Swish & 88.33 $\pm$ 0.15 & \cellcolor{orange!22}\textbf{88.59 $\pm$ 0.24} & 50 $\pm$ 0 \\
& LeakyReLU & \textbf{89.03 $\pm$ 0.18} & 88.16 $\pm$ 0.53 & 50 $\pm$ 0 \\
& {Rational 1} & \cellcolor{orange!22}\textbf{90.62 $\pm$ 0.13} & 88.11 $\pm$ 0.37 & 52 $\pm$ 0 \\
& {Rational 2} & \cellcolor{green!22}\textbf{90.67 $\pm$ 0.17} & \cellcolor{green!22} 89.06 $\pm$ 0.64 & 52 $\pm$ 0 \\
\midrule
\multirow{6}{*}{VGG8} & GELU & 91.64 $\pm$ 0.04 & \cellcolor{green!22}\textbf{92.21 $\pm$ 0.24} & 54 $\pm$ 0 \\
& ReLU & \textbf{91.73 $\pm$ 0.11} & 91.61 $\pm$ 0.29 & 54 $\pm$ 0 \\
& Swish & 90.96 $\pm$ 0.12 & \textbf{92.09 $\pm$ 0.24} & 54 $\pm$ 0 \\
& LeakyReLU & \textbf{91.65 $\pm$ 0.28} & 91.64 $\pm$ 0.12 & 54 $\pm$ 0 \\
& {Rational 1} & \cellcolor{green!22}\textbf{92.57 $\pm$ 0.15} & 91.80 $\pm$ 0.31 & 58 $\pm$ 0 \\
& {Rational 2} & \cellcolor{orange!22}\textbf{92.50 $\pm$ 0.19} & \cellcolor{orange!22} 92.13 $\pm$ 0.31 & 58 $\pm$ 0 \\
\bottomrule
\end{tabular}
\end{table}

\Cref{fig:cifar10_vgg4_score_curves} visualizes the test-accuracy score curves for VGG4 in the baseline setting and in the augmented pipeline without GroupNorm. In both cases, rational activations reach strong accuracy earlier in training and converge to a better final value than the fixed baselines, so the score curves corroborate the advantages observed in the best-over-training summaries. Additional score curves for the remaining model and training variants are provided in Appendix~\ref{app:CIFAR-10}.

\begin{figure}
\centering
\includegraphics[width=0.49\columnwidth]{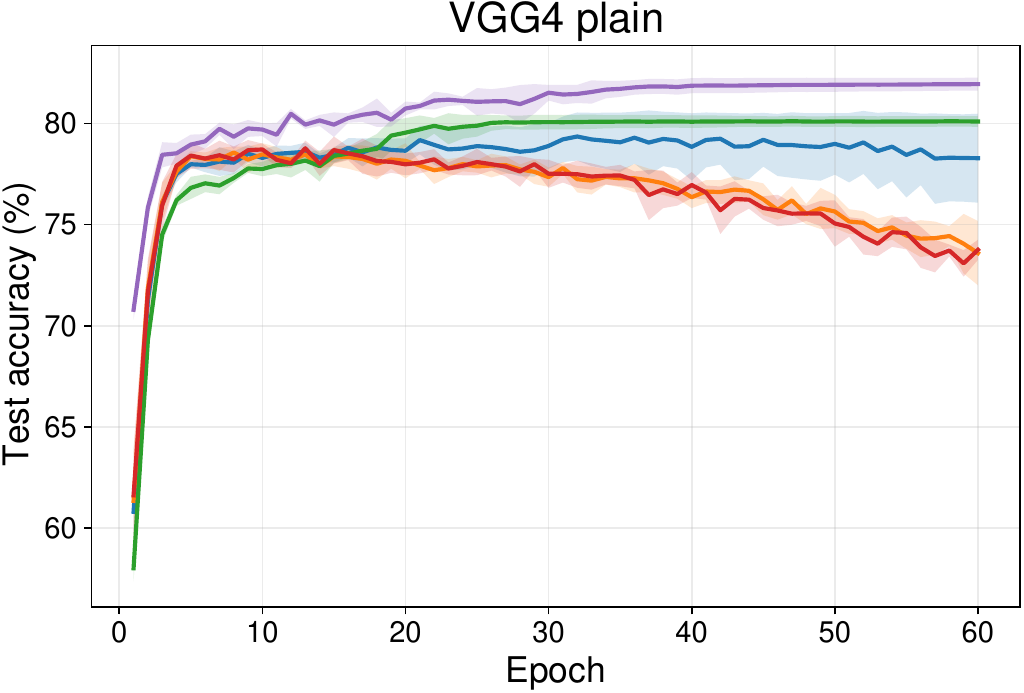}\hfill
\includegraphics[width=0.49\columnwidth]{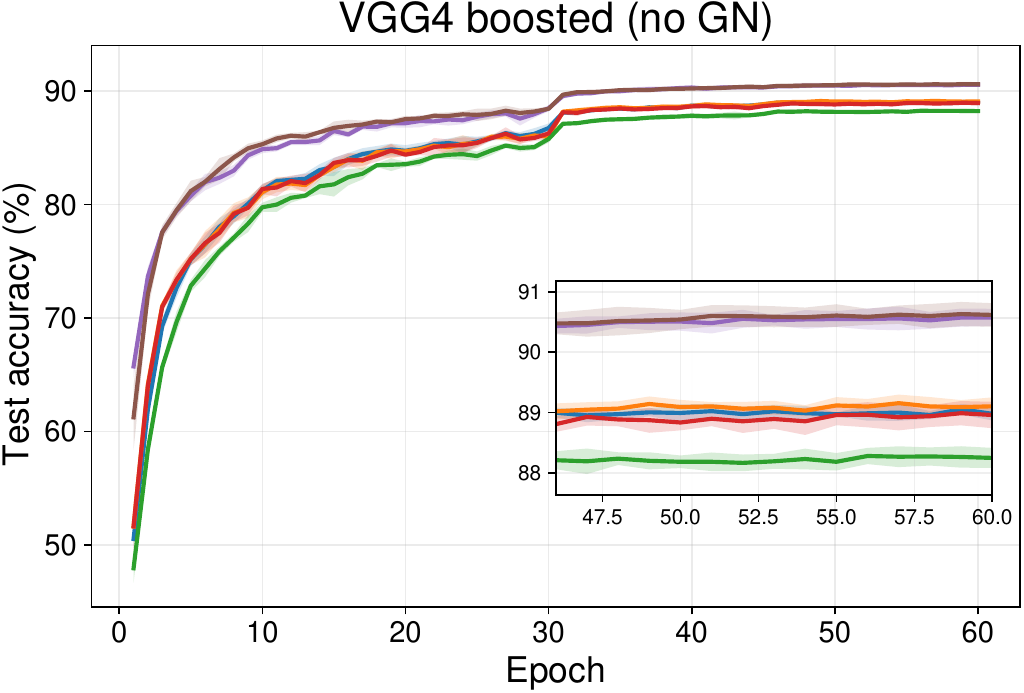}

{\scriptsize
\hspace*{-0.6em}%
\legline{c0}\,GELU\hspace{0.55em}%
\legline{c1}\,ReLU\hspace{0.55em}%
\legline{c2}\,Swish\hspace{0.55em}%
\legline{c3}\,LeakyReLU\hspace{0.55em}%
\legline{c4}\,Rational 1\hspace{0.55em}%
\legline{c5}\,Rational 2%
}

\caption{CIFAR-10 test-accuracy score curves for VGG4 in the plain setting (left) and the boosted setting without GroupNorm (right). Curves show mean across five seeds with $\pm 1$ standard-deviation shading. Rational converges earlier and to a better final accuracy than the fixed activations in both settings.}
\label{fig:cifar10_vgg4_score_curves}
\end{figure}

Beyond scalar accuracy, we directly inspect the learned shape of the Rational nonlinearity. Across most layers in the VGG8 feature stack, the activation starts close to GELU and quickly becomes uneven, with the curve dropping into a small dip near zero and then rising into a bump on the positive side, and these changes sit where the layer inputs appear most often in the histogram overlay. \Cref{fig:cifar10_rational_shape_example} shows a representative layer, and the full layerwise evolution across depth is provided in Appendix~\ref{app:CIFAR-10}. This kind of learned shape can be favorable: smooth activations with built in gating have repeatedly performed well in modern image models, and non monotone examples such as Swish and Mish report improved optimization and accuracy in practice~\cite{ramachandran2017searching,misra2019mish}.  If one closely examines~\cref{fig:cifar10_rational_shape_example}, there's a curiosity: the hidden-layer rational activations are evaluated mostly in regions where the rationals are smooth and slowly varying, so they are not being used to approximate kinks or high curvature; this suggests that an alternative rational architecture might better exploit rational expressivity and achieve substantially higher scores.

\begin{figure}
\centering
\includegraphics[page=6,width=0.98\columnwidth]{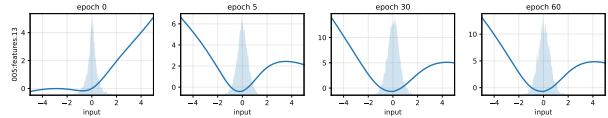}
\caption{Example learned Rational shape in VGG8 (representative feature-layer snapshot). Curves show the learned scalar nonlinearity at epochs 0, 5, 30, and 60; the light shaded overlay is the empirical density of the corresponding layer's pre-activation inputs estimated from held-out mini-batches and plotted on a secondary axis. The Rational quickly reallocates curvature to the high-density input region and develops a localized non-monotone gating shape.}
\label{fig:cifar10_rational_shape_example}
\end{figure}

\newcommand{\best}[1]{\cellcolor{green!12}\textbf{#1}}

\subsection{Offline Reinforcement Learning}

We study offline reinforcement learning on continuous-control locomotion benchmarks in MuJoCo \cite{todorov2012mujoco} using the Minari \emph{medium} datasets \cite{minari}, where training is restricted to a fixed collection of logged transitions with no additional environment interaction.  {Following the D4RL locomotion convention, the medium datasets are generated by a partially trained behavior policy, so they contain sustained locomotion and nontrivial rewards, yet they do not provide expert-level coverage of the highest-return state--action regions}~\cite{fu2020d4rl}.

\begin{table*}[t]
\centering
\caption{ {Offline RL on MuJoCo medium datasets. Mean $\pm$ standard deviation across five seeds. Best and second-best v5 normalized scores within each task block are highlighted. Left block: IQL. Right block: TD3+BC.}}
\label{tab:offline_rl_main}
\scriptsize
\renewcommand{\arraystretch}{1.06}
\setlength{\tabcolsep}{2.15pt}
\begingroup
\begin{minipage}[t]{0.49\textwidth}
\centering
\textbf{IQL}\par\vspace{0.25em}
\begin{tabular}{@{}ccccc@{}}
\toprule
Task & Act & Norm score $\uparrow$ & Return $\uparrow$ & Time $\downarrow$ \\
\midrule
\multirow{4}{*}{HalfCheetah} & GELU & 79.32 $\pm$ 7.04 & 12823.84 $\pm$ 1164.38 & 53 $\pm$ 1 \\
& ReLU & \cellcolor{orange!22}85.78 $\pm$ 6.35 & \cellcolor{orange!22} 13891.68 $\pm$ 1049.75 & 55 $\pm$ 1 \\
& SiLU & 77.46 $\pm$ 7.93 & 12515.82 $\pm$ 1310.57 & 54 $\pm$ 1 \\
& \textbf{Rational} & \cellcolor{green!22}\textbf{94.71 $\pm$ 0.78} & \cellcolor{green!22}\textbf{ 15368.13 $\pm$ 129.22} & 71 $\pm$ 2 \\
\midrule
\multirow{4}{*}{Hopper} & GELU & \cellcolor{orange!22}90.75 $\pm$ 1.60 & \cellcolor{orange!22} 3502.30 $\pm$ 61.32 & 54 $\pm$ 1 \\
& ReLU & 90.69 $\pm$ 1.60 & 3499.87 $\pm$ 61.63 & 55 $\pm$ 1 \\
& SiLU & 89.64 $\pm$ 2.38 & 3459.64 $\pm$ 91.53 & 57 $\pm$ 1 \\
& \textbf{Rational} & \cellcolor{green!22}\textbf{91.58 $\pm$ 0.66} & \cellcolor{green!22} \textbf{3534.30 $\pm$ 25.55} & 73 $\pm$ 1 \\
\midrule
\multirow{4}{*}{Walker2d} & GELU & 54.95 $\pm$ 10.77 & 3763.94 $\pm$ 737.55 & 56 $\pm$ 2 \\
& ReLU & \cellcolor{orange!22}61.25 $\pm$ 6.46 & \cellcolor{orange!22} 4195.10 $\pm$ 442.12 & 54 $\pm$ 1 \\
& SiLU & 58.15 $\pm$ 5.74 & 3983.11 $\pm$ 393.17 & 55 $\pm$ 0 \\
& \textbf{Rational} & \cellcolor{green!22}\textbf{72.28 $\pm$ 11.52} & \cellcolor{green!22} \textbf{4950.47 $\pm$ 788.80} & 77 $\pm$ 2 \\
\bottomrule
\end{tabular}
\end{minipage}\hfill
\begin{minipage}[t]{0.49\textwidth}
\centering
\textbf{TD3+BC}\par\vspace{0.25em}
\begin{tabular}{@{}ccccc@{}}
\toprule
Task & Act & Norm score $\uparrow$ & Return $\uparrow$ & Time $\downarrow$ \\
\midrule
\multirow{4}{*}{HalfCheetah} &\textbf{GELU }& \cellcolor{green!22}\textbf{62.61 $\pm$ 5.00 }& \cellcolor{green!22} \textbf{10059.83 $\pm$ 826.69} & 34 $\pm$ 0 \\
& ReLU & 58.82 $\pm$ 5.05 & 9433.06 $\pm$ 835.50 & 35 $\pm$ 2 \\
& SiLU & 53.55 $\pm$ 5.18 & 8560.98 $\pm$ 857.22 & 34 $\pm$ 0 \\
& {Rational} & \cellcolor{orange!22}{62.44 $\pm$ 6.11} & \cellcolor{orange!22} 10031.73 $\pm$ 1010.89 & 47 $\pm$ 1 \\
\midrule
\multirow{4}{*}{Hopper} & GELU & 92.43 $\pm$ 0.41 & 3567.01 $\pm$ 15.90 & 34 $\pm$ 0 \\
& ReLU & \cellcolor{orange!22}92.56 $\pm$ 0.37 & \cellcolor{orange!22}3572.08 $\pm$ 14.07 & 35 $\pm$ 0 \\
& SiLU & 90.90 $\pm$ 2.48 & 3508.26 $\pm$ 95.12 & 35 $\pm$ 0 \\
& \textbf{Rational} & \cellcolor{green!22}\textbf{93.05 $\pm$ 0.19} & \cellcolor{green!22}\textbf{3590.53 $\pm$ 7.16} & 48 $\pm$ 1 \\
\midrule
\multirow{4}{*}{Walker2d} & GELU & 65.23 $\pm$ 8.62 & 4467.87 $\pm$ 589.96 & 34 $\pm$ 1 \\
& ReLU & \cellcolor{orange!22}69.02 $\pm$ 13.01 & \cellcolor{orange!22} 4727.24 $\pm$ 890.54 & 34 $\pm$ 1 \\
& SiLU & 60.42 $\pm$ 9.99 & 4138.02 $\pm$ 684.09 & 34 $\pm$ 1 \\
& \textbf{Rational} & \cellcolor{green!22}\textbf{84.65 $\pm$ 1.82} & \cellcolor{green!22} \textbf{5796.67 $\pm$ 124.49} & 46 $\pm$ 1 \\
\bottomrule
\end{tabular}
\end{minipage}
\endgroup
\end{table*}
We evaluate learned policies by rollouts in the corresponding Gymnasium v5 environments \cite{gymnasium} and report both raw episode return and a D4RL-style normalized score \cite{fu2020d4rl} computed from random and expert anchors re-estimated to match this Gymnasium v5 evaluation specification, using the corresponding Minari expert dataset for the expert anchor. This normalization places HalfCheetah, Hopper, and Walker2d on a shared scale and supports cross-task comparisons between activations.

We compare Implicit Q-Learning \cite{kostrikov2021iql} and TD3+BC \cite{fujimoto2021td3bc} on the same tasks while keeping the algorithm and network architecture fixed and changing only the activation function. We sweep ReLU, SiLU, GELU, and Rational.  {Table~\ref*{tab:offline_rl_main} reports the final performance summaries for both algorithms in a single comparison table.} Across tasks and algorithms, rational activations surpass or match the strongest fixed activation in five of six settings under normalized score. Moreover, when rational leads, it typically does so by a clear margin, while in the single setting where it does not lead, the gap to the best fixed activation is small. This pattern is consistent with the parameter-efficiency motivation of our work: with the same network width and depth, a rational nonlinearity can efficiently realize the effective shapes needed by the best fixed activations, whereas a fixed activation typically needs additional architectural overhead to reproduce behaviors that a low-degree rational can represent within the same parameter budget.

To probe parameter efficiency more directly, we additionally ran an IQL width sweep with smaller and larger two-layer MLPs. The original setting uses width 256; the smaller and larger settings use widths 128 and 384, respectively. Rational models add only 80 trainable activation parameters relative to matched GELU models. Table~\ref*{tab:offline_rl_width_sweep} reports five-seed means and standard deviations for normalized score and return, and shows that the smaller rational model is competitive with, and in some tasks substantially better than, much larger GELU models, except where the task is already near saturation.

\begin{table}[t]
\centering
\caption{IQL width sweep on MuJoCo medium datasets. Mean $\pm$ standard deviation across five seeds.}
\label{tab:offline_rl_width_sweep}
\scriptsize
\renewcommand{\arraystretch}{1.04}
\setlength{\tabcolsep}{1.45pt}
\resizebox{\columnwidth}{!}{%
\begin{tabular}{@{}ccccc@{}}
\toprule
Task & Width / Params & Act & Norm score $\uparrow$ & Return $\uparrow$ \\
\midrule
\multirow{4}{*}{HalfCheetah}
& 128 / 77,967 & GELU & 66.11 $\pm$ 11.36 & 10638.87 $\pm$ 1879.24 \\
& 128 / 78,047 & \textbf{Rational} & 92.53 $\pm$ 1.47 & 15006.96 $\pm$ 242.66 \\
& 384 / 627,087 & GELU & 91.00 $\pm$ 3.41 & 14754.40 $\pm$ 564.32 \\
& 384 / 627,167 & \textbf{Rational} & \textbf{94.58 $\pm$ 1.08} & \textbf{15346.15 $\pm$ 177.81} \\
\midrule
\multirow{4}{*}{Hopper}
& 128 / 73,737 & GELU & 91.10 $\pm$ 1.26 & 3515.93 $\pm$ 48.23 \\
& 128 / 73,817 & \textbf{Rational} & 91.70 $\pm$ 1.01 & 3538.75 $\pm$ 39.00 \\
& 384 / 614,409 & GELU & 91.56 $\pm$ 1.07 & 3533.54 $\pm$ 40.99 \\
& 384 / 614,489 & \textbf{Rational} & \textbf{92.39 $\pm$ 0.71} & \textbf{3565.42 $\pm$ 27.29} \\
\midrule
\multirow{4}{*}{Walker2d}
& 128 / 77,967 & GELU & 51.48 $\pm$ 7.34 & 3526.54 $\pm$ 502.42 \\
& 128 / 78,047 & \textbf{Rational} & 60.01 $\pm$ 13.06 & 4110.27 $\pm$ 894.15 \\
& 384 / 627,087 & GELU & 66.45 $\pm$ 10.86 & 4551.08 $\pm$ 743.72 \\
& 384 / 627,167 & \textbf{Rational} & \textbf{78.62 $\pm$ 3.86} & \textbf{5383.99 $\pm$ 263.95} \\
\bottomrule
\end{tabular}
}%
\end{table}

 {Figure~\ref*{fig:offline_rl_main_diagnostics} summarizes two representative offline RL diagnostics. The top panels highlight learning curves for IQL on HalfCheetah-medium and TD3+BC on Walker2d-medium, showing v5-normalized evaluation score versus gradient updates under each activation. Rational reaches strong performance earlier in training and attains a higher plateau than the fixed activations in these examples, so the curves corroborate the best-over-training summaries in Table~\ref*{tab:offline_rl_main}. Full learning curves across tasks and both algorithms are provided in Appendix~\ref*{app:offline_rl}.}

 {The bottom panel inspects a representative Rational layer from the IQL HalfCheetah actor network. The learned curve quickly departs from its initialization target and reallocates curvature toward the high-density input region, yielding a localized gating shape that is aligned with the statistics induced by the fixed dataset and the evolving policy.}

\definecolor{matblue}{RGB}{31,119,180}
\definecolor{matorange}{RGB}{255,127,14}
\definecolor{matgreen}{RGB}{44,160,44}
\definecolor{matred}{RGB}{214,39,40}

\begin{figure}[t]
\centering
\includegraphics[width=0.49\columnwidth]{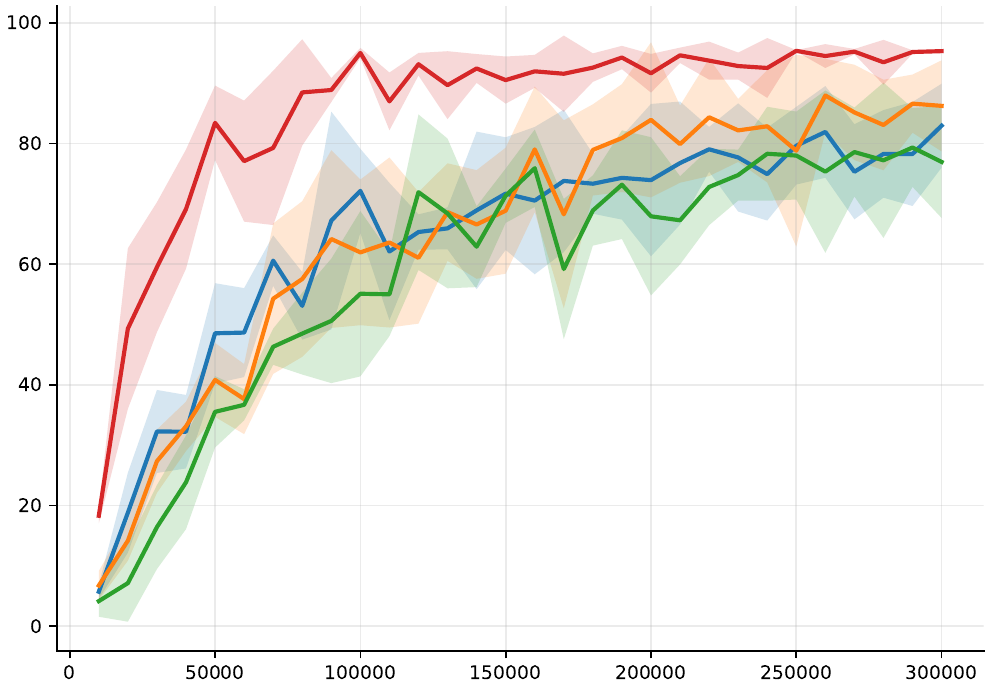}\hfill
\includegraphics[width=0.49\columnwidth]{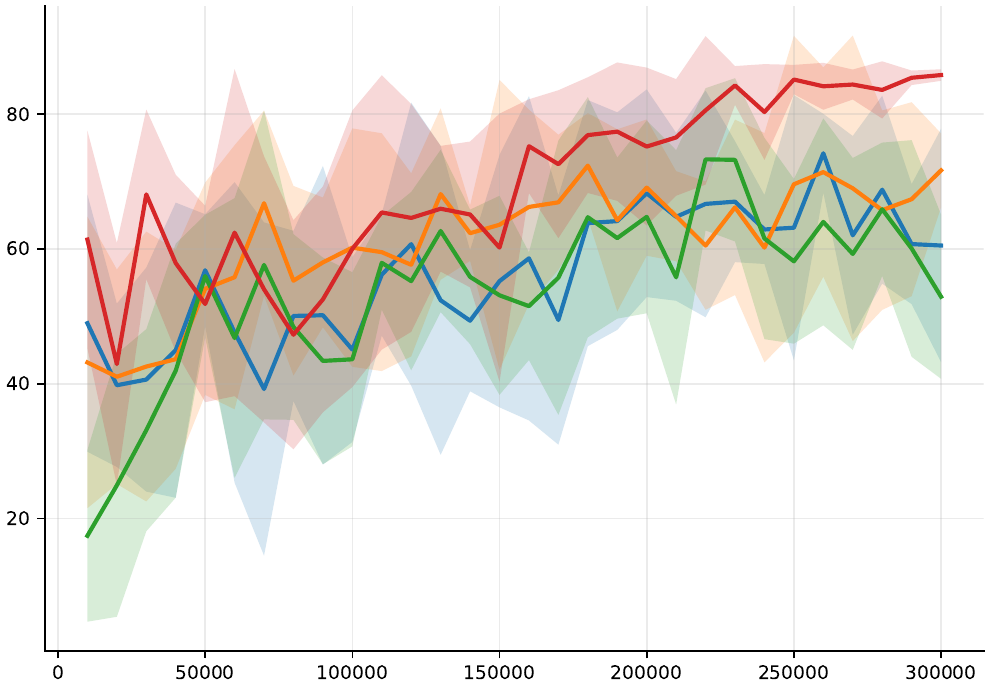}

{\scriptsize
\raisebox{0.25ex}{\color{matblue}\rule{11.5pt}{1.25pt}}\;GELU\hspace{12pt}
\raisebox{0.25ex}{\color{matorange}\rule{11.5pt}{1.25pt}}\;ReLU\hspace{12pt}
\raisebox{0.25ex}{\color{matgreen}\rule{11.5pt}{1.25pt}}\;SiLU\hspace{12pt}
\raisebox{0.25ex}{\color{matred}\rule{11.5pt}{1.25pt}}\;Rational
}

\vspace{0.4em}

\includegraphics[width=0.98\columnwidth]{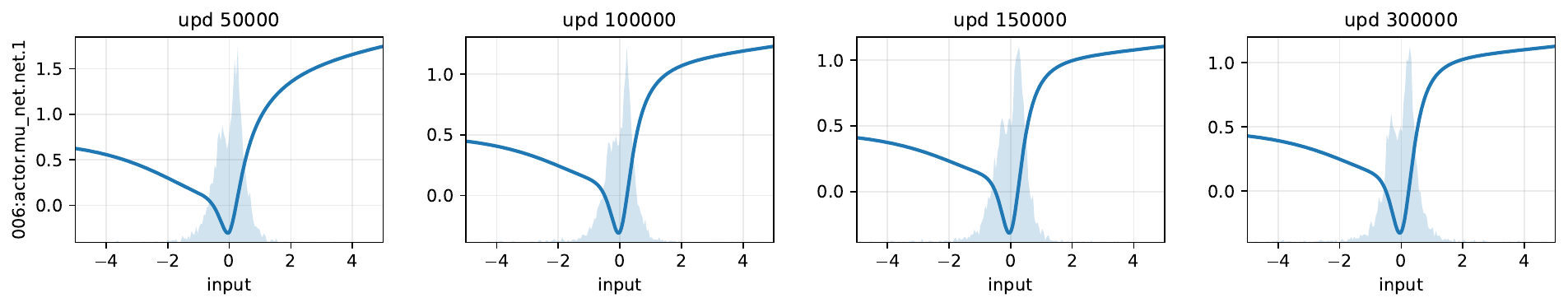}
\caption{ {Offline RL diagnostics. Top: learning curves under Gymnasium v5 evaluation for IQL on HalfCheetah-medium (left) and TD3+BC on Walker2d-medium (right); lines show mean across five seeds with one standard deviation shading. Bottom: representative learned Rational shape from the IQL actor on HalfCheetah-medium, with the light shaded overlay showing the empirical density of corresponding layer pre-activation inputs estimated from offline mini-batches. Appendix~\ref*{app:offline_rl} provides broader score curves and layerwise snapshots.}}
\label{fig:offline_rl_main_diagnostics}
\end{figure}

\subsection{Tiny ImageNet Image Classification}

We evaluate rational activations in transformer-based image classifiers on Tiny ImageNet, using ViT S 8, CaiT S24, and Swin T, while keeping each model family fixed and changing only the activation function \cite{Vaswani2017Transformer,Dosovitskiy2020ViT,Touvron2021CaiT,Liu2021Swin}. We report top-1 validation accuracy using an exponential moving average (EMA) of the model weights, meaning we maintain a smoothed copy of the weights throughout training by updating it as a weighted average of the previous EMA weights and the current weights, and we evaluate accuracy with this EMA copy at the best checkpoint; we also report total training time.

\Cref{tab:tinyimg_all_rearranged} shows that swapping the standard activation for a trainable low-degree rational nonlinearity typically matches or improves the best EMA validation accuracy within each transformer family, with modest runtime overhead.  {We additionally test a normalization-free ViT variant motivated by recent evidence that transformer normalization can be removed when paired with an appropriate adaptive element-wise nonlinearity}~\cite{zhu2025transformerswithoutnorm} {. In this modified model, the rational activation attains a large gain on Tiny ImageNet, while the fixed-activation variants diverge to NaNs and produce no usable checkpoint under the same setup, aligning with our broader finding that normalization can be disproportionately restrictive for adaptive rational nonlinearities.}

 {\Cref{fig:ema_vs_epoch_all_runs} plots the full EMA accuracy trajectories: for CaiT-S24 the rational run separates clearly and attains a higher plateau, for Swin-T the GELU and rational curves remain very close and repeatedly trade the lead over training, and for ViT-Small the rational run improves over GELU, with the normalization-free ViT-Small variant converging earlier and reaching its plateau sooner, allowing a smaller ViT-Small model with 21.7M parameters to achieve EMA accuracy comparable to Swin-T with 28.3M parameters.}

When the structure of the transformer permits, removing the LayerNorm for rational neural networks can yield a significant score boost. However, in some cases, we find that removing the LayerNorm is detrimental to training. Importantly, even in cases when we cannot remove the LayerNorm, we find that we match the scores attained by the standard activation functions. 

\begin{table}
\centering
\caption{Tiny ImageNet top-1 EMA validation accuracy at the best checkpoint. Total time reports the full training time in minutes. Best and second best within each model block are highlighted.  {In the no-norm ViT rows, ``NaN'' means that the run diverged and produced no usable checkpoint under the shared setup.}}
\label{tab:tinyimg_all_rearranged}
\renewcommand{\arraystretch}{1.08}
\scriptsize
\begin{tabular}{cccc}
\toprule
Model & Act & EMA Val Acc $\uparrow$ & Time min $\downarrow$ \\
\midrule
\multirow{3}{*}{CaiT S24} 
& ReLU & 52.94 & 481 \\
& GELU & 55.30 & 465 \\
& \textbf{Rational} & \textbf{56.15} & 526 \\
\midrule
\multirow{3}{*}{Swin T} 
& ReLU & 60.30 & 372 \\
& \textbf{GELU} & \textbf{60.76} & 372 \\
& Rational & 60.66 & 396 \\
\midrule
\multirow{3}{*}{ViT S 8} 
& ReLU & 57.26 & 600 \\
& GELU & 58.15 & 591 \\
& \textbf{Rational} & \textbf{58.35} & 637 \\
\midrule
\multirow{3}{*}{ViT S 8 no norm} & ReLU &  {\textit{NaN}} &  {\textit{NaN}} \\
& GELU &  {\textit{NaN}} &  {\textit{NaN}} \\
& \textbf{Rational} & \textbf{61.24} & 611 \\
\bottomrule
\end{tabular}
\end{table}

\definecolor{matpurple}{RGB}{148,103,189}
\definecolor{matbrown}{RGB}{140,86,75}
\definecolor{matpink}{RGB}{227,119,194}
\definecolor{matgray}{RGB}{127,127,127}
\definecolor{matolive}{RGB}{188,189,34}
\definecolor{matcyan}{RGB}{23,190,207}

\begin{figure}[t]
\centering
\includegraphics[width=\columnwidth]{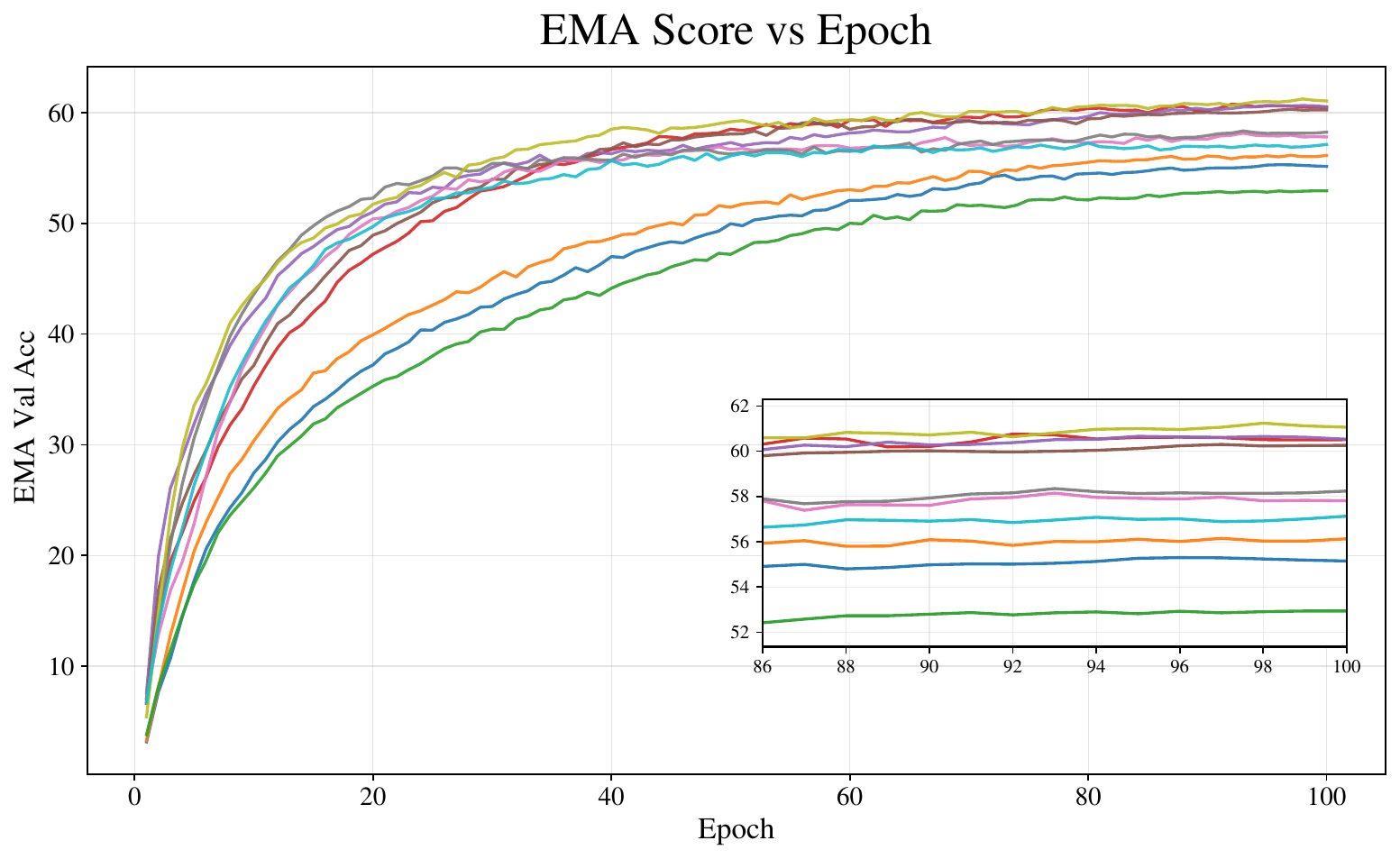}

{\scriptsize
\raisebox{0.25ex}{\color{matblue}\rule{11.5pt}{1.25pt}}\; {CaiT-S24\,+\,GELU}\hspace{12pt}
\raisebox{0.25ex}{\color{matorange}\rule{11.5pt}{1.25pt}}\; {CaiT-S24\,+\,Rational}\hspace{12pt}
\raisebox{0.25ex}{\color{matgreen}\rule{11.5pt}{1.25pt}}\; {CaiT-S24\,+\,ReLU}\hspace{12pt}
\raisebox{0.25ex}{\color{matred}\rule{11.5pt}{1.25pt}}\; {Swin-Tiny\,+\,GELU}
\raisebox{0.25ex}{\color{matpurple}\rule{11.5pt}{1.25pt}}\; {Swin-Tiny\,+\,Rational}\hspace{12pt}
\raisebox{0.25ex}{\color{matbrown}\rule{11.5pt}{1.25pt}}\; {Swin-Tiny\,+\,ReLU}\hspace{12pt}
\raisebox{0.25ex}{\color{matpink}\rule{11.5pt}{1.25pt}}\; {ViT-Small\,+\,GELU}\hspace{12pt}
\raisebox{0.25ex}{\color{matgray}\rule{11.5pt}{1.25pt}}\; {ViT-Small\,+\,Rational}
\\
\raisebox{0.25ex}{\color{matolive}\rule{11.5pt}{1.25pt}}\; {ViT-Small\,+\,Rational (noNorm12)}\hspace{12pt}
\raisebox{0.25ex}{\color{matcyan}\rule{11.5pt}{1.25pt}}\; {ViT-Small\,+\,ReLU}
}

\caption{ {EMA validation top-1 accuracy versus epoch on Tiny ImageNet for CaiT-S24, Swin-Tiny, and ViT-Small under different activations. The inset zooms into the final epochs to highlight late-training differences.}}
\label{fig:ema_vs_epoch_all_runs}
\end{figure}




\section{Limitations, weaknesses and future work}

Our empirical evaluation is constrained by the scale and breadth of experiments we can run, so our conclusions are grounded in small-to-mid scale vision models and offline MuJoCo benchmarks under fixed architectures and training budgets. All experiments were run on a single RTX 4070 laptop, which prevents us from studying the largest training regimes and from running the widest seed and hyperparameter sweeps. In particular, we do not yet evaluate LLM-scale training, larger and more diverse reinforcement learning suites, or a wide range of transformer variants and training recipes within ViT, CaiT, and Swin; broader sweeps and stronger variance studies would better characterize how the observed gains change with scale.

Our wall-clock timings are also influenced by implementation details. We rely on an existing rational-activation implementation rather than an in-house fused PyTorch kernel, so a portion of the runtime cost reflects framework and kernel-launch overhead that could be reduced with a native implementation. Looking forward, we aim to scale these evaluations to large language models and larger-scale reinforcement learning, where the approximation-theoretic motivation and the earlier, often better convergence observed here suggest rational activations may be especially effective.

\section*{Impact Statement}
This paper presents work whose goal is to advance the field
of Machine Learning. There are many potential societal
consequences of our work, none of which we feel must be
specifically highlighted here.

\section*{Acknowledgements}
This work was supported by the Defense Advanced Research Projects Agency (DARPA) through The Right Space (TRS) Disruption Opportunity (DARPA-PA-24-04-07). The views, findings, and conclusions expressed in this paper are those of the authors and do not necessarily reflect the official policy or position of DARPA, the U.S. Department of Defense, or the U.S. Government.

\section*{Software and Data}
 {All experiments used a single NVIDIA RTX 4070 laptop GPU with Python 3.13.3 and PyTorch 2.8.0. Code is available at \url{https://github.com/mtang398/Rational-Neural-Network}. CIFAR-10, Tiny ImageNet, and the Minari MuJoCo medium datasets are publicly available through their standard open-source distributions.}

\bibliographystyle{icml2026}
\bibliography{references}

\newpage
\appendix
\onecolumn

\section{Rational Neural Network Approximation of the GELU Activation}
\label{app:rational}

This appendix shows that the GELU activation can be approximated on $[-1,1]$ to within any $\varepsilon>0$ by a rational neural network of size $\mathcal{O}((\log\log(1/\varepsilon))^3)$. Moreover, we show that it is impossible to achieve this with a rational neural network of size smaller than $\Omega(\log\log(1/\varepsilon))$. 

\subsection{The construction of a rational neural network that approximates GELU}
Let $\varepsilon>0$. Our goal in this section is to construct a rational neural network that approximates the GELU activation
\[
G(x)=\tfrac{x}{2}\bigl(1+\tanh(\alpha(x+\beta x^3))\bigr),
\qquad \alpha=\sqrt{2/\pi},\;\beta=0.044715
\]
to within $\varepsilon$ in the uniform norm on $[-1,1]$. The argument proceeds in five steps: (1) We build a constant-width rational neural network $R_{\text{sqrt}}$ that approximates $\sqrt{x}$ on $[0,1]$ of depth $\mathcal{O}(\log\log(1/\varepsilon))$, (2) We combine $R_{\text{sqrt}}$ with the Sasaki--Kanada AGM–theta representation of $\log$  {\citep{sasaki1982log,borwein1987piagm}} to obtain a logarithm rational neural network $R_{\log}$ on an interval with depth $\mathcal{O}((\log\log(1/\varepsilon))^2)$, (3) We express $\operatorname{artanh}(z)=\tfrac12(\log(1+z)-\log(1-z))$ and use $R_{\log}$ together with the identities $\log(ab)=\log(a)+\log(b)$ and $\log(1)=0$ to obtain an $\operatorname{artanh}$ block $R_{\operatorname{artanh}}$ on a small interval $[-1/8,1/8]$, again with size $\mathcal{O}((\log\log(1/\varepsilon))^2)$. Fourth, we apply Halley’s method to the equation $\operatorname{artanh}(t)=s$, substituting $R_{\operatorname{artanh}}$ for $\operatorname{artanh}$, to construct a $\tanh$ block $R_{\tanh}$ on $[-1/8,1/8]$ whose size scales like $\mathcal{O}((\log\log(1/\varepsilon))^3)$. Finally, we compose $R_{\tanh}$ with the fixed cubic polynomial $P(x)=\alpha(x+\beta x^3)$ and a finite double-angle ladder for $\tanh$ to rescale $P(x)$ into $[-1/8,1/8]$, obtaining a constant-width rational network $R_{\mathrm{GELU}}$ that uniformly approximates $G(x)$ on $[-1,1]$ to within $\varepsilon$ with total size $\mathcal{O}((\log\log(1/\varepsilon))^3)$. The convergence behavior of the core steps used in this construction is illustrated in \Cref{fig:gelu_block_convergence}.

\begin{figure}
\centering
\includegraphics[width=\linewidth]{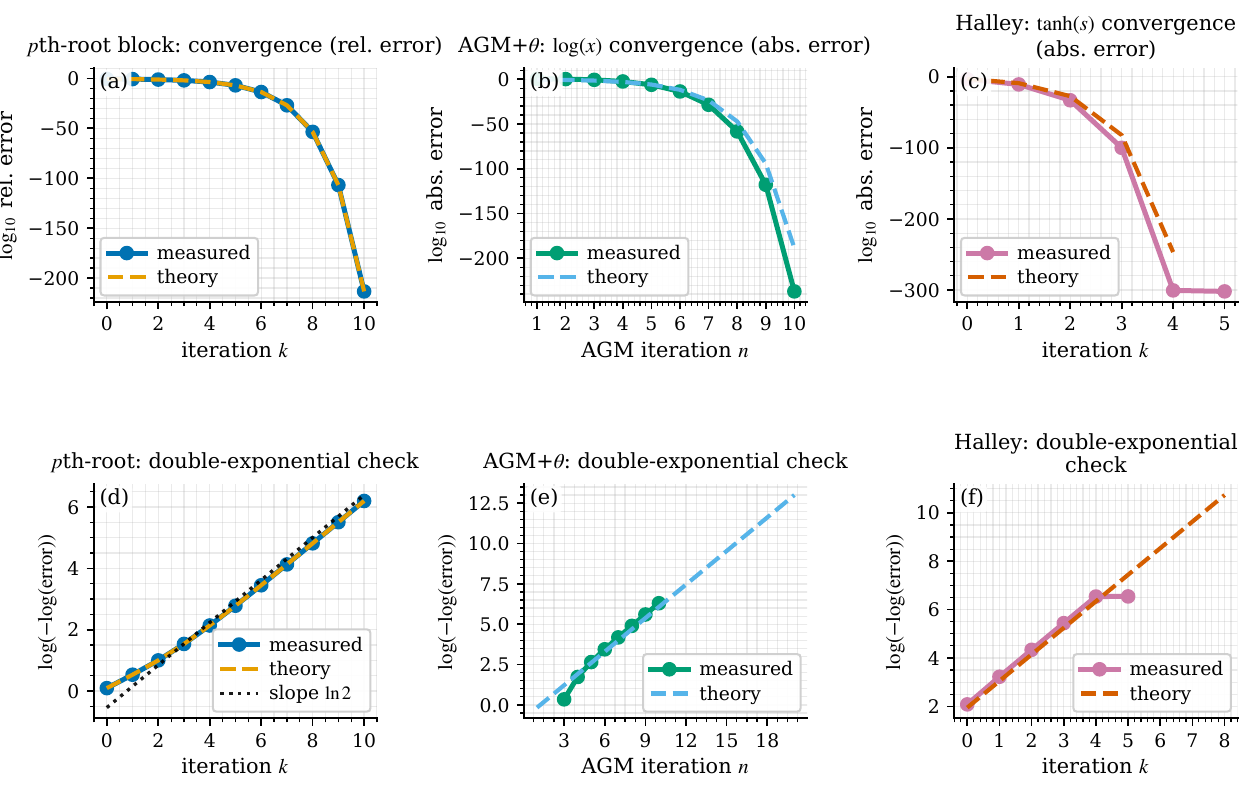}
\caption{Numerical verification of the convergence behavior of the core blocks used in the GELU construction. Panels (a,d) correspond to the composite $p$th-root iteration from Step~1, (b,e) to the AGM-theta logarithm construction from Step~2, and (c,f) to the Halley-based $\tanh$ construction from Step~4. The top row reports $\log_{10}$ error decay together with the corresponding theoretical rate guides, while the bottom row plots $\log(-\log(\mathrm{error}))$, which linearizes doubly exponential convergence and exhibits the expected slopes (approximately $\ln 2$ for the root and AGM-theta blocks, and $\ln 3$ for Halley).}
\label{fig:gelu_block_convergence}
\end{figure}

\subsubsection*{Step 1: Approximating $x^{1/p}$ on a fixed interval}

We fix an integer $p\ge 2$ and a parameter $\alpha_0\in(0,1)$, and we work on the interval
\[
x \in [\alpha_0^p,1].
\]
Following the scalar $p$th–root iteration of \citet[Sec.~2]{GAWLIK2021105577}, define for $\alpha\in(0,1)$
\begin{equation}
\label{eq:mu-def}
\mu(\alpha)
\;:=\;
\biggl(
  \frac{\alpha - \alpha^p}{(p-1)(1-\alpha)}
\biggr)^{1/p},
\end{equation}
and consider the coupled recursion
\begin{equation}
\label{eq:pth-root-iter}
\begin{aligned}
f_{0}(x) &= 1,\qquad &&\alpha_{0} = \alpha_0,\\[0.25em]
f_{k+1}(x)
&=
\frac{1}{p}\Bigl(
    (p-1)\,\mu(\alpha_k)\,f_k(x)
    +
    \frac{x}{\mu(\alpha_k)^{p-1} f_k(x)^{p-1}}
\Bigr),
\qquad &&k\ge 0,\\[0.25em]
\alpha_{k+1}
&=
\frac{p\,\alpha_k}{
  (p-1)\,\mu(\alpha_k)
  +
  \mu(\alpha_k)^{1-p}\alpha_k^{p}
},
\qquad &&k\ge 0.
\end{aligned}
\end{equation}
As in \citet[Sec.~2]{GAWLIK2021105577}, we introduce the scaled approximants
\begin{equation}
\label{eq:scaled-fk}
\tilde f_k(x)
\;:=\;
\frac{2\alpha_k}{1+\alpha_k}\,f_k(x),
\qquad k\ge 0.
\end{equation}
For each fixed $k$, the map $x\mapsto \tilde f_k(x)$ is a rational function obtained by composing the fixed bivariate rational update
\[
(x,f_k,\alpha_k)\longmapsto(f_{k+1},\alpha_{k+1})
\]
a total of $k$ times, starting from $(f_0,\alpha_0)=(1,\alpha_0)$.

\begin{lemma}[Composite $p$th root on a fixed interval]
\label{lem:pth-root}
Fix $p\ge 2$ and $\alpha_0\in(0,1)$, and let $\tilde f_k$ be defined by~\cref{eq:pth-root-iter}–\cref{eq:scaled-fk}. Then there exist constants $C_p,c_p>0$, depending only on $p$ and $\alpha_0$, such that
\begin{equation}
\label{eq:pth-root-rel-error}
\sup_{x\in[\alpha_0^p,1]}
\left|
\frac{\tilde f_k(x)-x^{1/p}}{x^{1/p}}
\right|
\;\le\;
C_p\,\exp\!\bigl(-c_p\,2^k\bigr)
\qquad\text{for all }k\ge 0.
\end{equation}
Consequently, given any $0<\varepsilon<1$, it suffices to take \(
k \;=\; \Theta\!\bigl(\log\log(1/\varepsilon)\bigr)
\) to obtain
\[
\sup_{x\in[\alpha_0^p,1]}
\bigl|\tilde f_k(x)-x^{1/p}\bigr|
\;\le\;\varepsilon.
\]
Moreover, each update in~\cref{eq:pth-root-iter} is a fixed bivariate rational map of uniformly bounded degree, so the composite map $x\mapsto \tilde f_k(x)$ can be implemented by a constant-width rational neural network whose total size grows like $\Theta(\log\log(1/\varepsilon))$.
\end{lemma}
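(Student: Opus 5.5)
The plan is to reduce the claim to a scalar recursion for the ratio $u_k(x):=f_k(x)/x^{1/p}$. We show that $\alpha_k$ is exactly the sharp lower bound of $x^{1/p}/f_k(x)$ over the interval, and then prove that $1-\alpha_k$ converges to zero quadratically.

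\textbf{Step 1 (invariant).} Set $h(s):=\tfrac1p\bigl((p-1)s+s^{1-p}\bigr)$ for $s>0$. By AM--GM, $h(s)\ge 1$ with equality only at $s=1$. Moreover $h$ is strictly decreasing on $(0,1]$ and strictly increasing on $[1,\infty)$. Dividing the update in~\cref{eq:pth-root-iter} by $x^{1/p}$ gives
\[
u_{k+1}(x)=h\bigl(\mu(\alpha_k)\,u_k(x)\bigr),
\]
so the same scalar map $h$ acts pointwise in $x$. We prove by induction that $u_k(x)\in[1,1/\alpha_k]$ for all $x\in[\alpha_0^p,1]$, with both endpoints attained.

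\begin{itemize}
\item \emph{Base case.} For $k=0$ we have $u_0=x^{-1/p}\in[1,1/\alpha_0]$.
\item \emph{Inductive step.} Suppose $s=\mu u_k\in[\mu,\mu/\alpha_k]$. A short computation shows that $\mu=\mu(\alpha_k)$ in~\cref{eq:mu-def} is exactly the value solving the equioscillation condition $h(\mu)=h(\mu/\alpha_k)$, with $\mu<1<\mu/\alpha_k$. The image $h([\mu,\mu/\alpha_k])$ is therefore $[1,h(\mu/\alpha_k)]$. This value equals $1/\alpha_{k+1}$ by the recursion for $\alpha_{k+1}$, since $h(\mu/\alpha_k)=\bigl((p-1)\mu+\mu^{1-p}\alpha_k^p\bigr)/(p\,\alpha_k)$.
\end{itemize}

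Consequently $x^{1/p}/f_k(x)\in[\alpha_k,1]$. For the scaled iterate this gives
\[
\tilde f_k(x)/x^{1/p}\in\Bigl[\tfrac{2\alpha_k}{1+\alpha_k},\,\tfrac{2}{1+\alpha_k}\Bigr],
\]
so the relative error is at most $(1-\alpha_k)/(1+\alpha_k)\le 1-\alpha_k$.

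\textbf{Step 2 (quadratic convergence).} Since $h(s)-1=\tfrac{p-1}{2}(s-1)^2+O(|s-1|^3)$ and the interval $[\mu,\mu/\alpha_k]$ contains $1$ and has length $\mu(1-\alpha_k)/\alpha_k$, we get
\[
1/\alpha_{k+1}-1\le C'(1-\alpha_k)^2,
\]
hence $1-\alpha_{k+1}\le C(1-\alpha_k)^2$. Here $C$ depends only on $p$ and a lower bound for $\alpha_k$. Because $1/\alpha_{k+1}=h(\mu/\alpha_k)<1/\alpha_k$ (the image interval is strictly shorter), the sequence $(\alpha_k)$ is increasing, so $\alpha_k\ge\alpha_0$ and $C=C(p,\alpha_0)$ is uniform.

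To enter the quadratic regime we use a uniform contraction. The map $\alpha\mapsto\alpha_{+}$ is continuous with $\alpha_+>\alpha$ on $[\alpha_0,1)$, so after $K_0=K_0(p,\alpha_0)$ steps we reach $C(1-\alpha_{K_0})\le 1/2$. From then on,
\[
1-\alpha_k\le C^{-1}2^{-2^{k-K_0}},
\]
which yields~\cref{eq:pth-root-rel-error} after absorbing $K_0$ into $C_p,c_p$. Since $x^{1/p}\le 1$, the absolute error is bounded by the relative error. Choosing $k=\Theta(\log\log(1/\varepsilon))$ then gives error at most $\varepsilon$.

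\textbf{Step 3 (network realization).} The numbers $\alpha_k$ and $\mu(\alpha_k)$ do not depend on $x$, so they are fixed weights. Each step maps $(x,f_k)\mapsto(x,f_{k+1})$ using three kinds of operations:
\begin{itemize}
\item affine maps;
\item the power $f^{p-1}$, obtained from products via the polarization $ab=\tfrac14((a+b)^2-(a-b)^2)$ with squaring activations;
\item one reciprocal $t\mapsto 1/t$ followed by a product with $x$.
\end{itemize}
Each of these is a bounded-degree rational activation, and the identity is carried along by a degree-one rational. All intermediate quantities stay in compact sets bounded away from $0$, because $f_k\in[x^{1/p},x^{1/p}/\alpha_0]\subset[\alpha_0,1/\alpha_0]$, so no pole is ever hit. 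Each step therefore costs $O(p)$ width and depth, constant for fixed $p$, and the total size is $\Theta(k)=\Theta(\log\log(1/\varepsilon))$.

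\textbf{Main obstacle.} I expect the hardest part to be the algebra in Step 1: confirming that~\cref{eq:mu-def} is exactly the equioscillation point $h(\mu)=h(\mu/\alpha)$, and that the $\alpha_{k+1}$ recursion equals $1/h(\mu/\alpha_k)$. I would verify this by clearing denominators in $h(\mu)=h(\mu/\alpha)$. This yields $\mu^p(p-1)(1-\alpha)=\alpha-\alpha^p$, which matches the definition of $\mu$. As a fallback, I would cite \citet[Sec.~2]{GAWLIK2021105577} for this identity.
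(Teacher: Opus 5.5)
Your proposal is correct and follows the same skeleton as the paper's proof. Both use the equioscillation bound $(1-\alpha_k)/(1+\alpha_k)$ on the relative error, doubly exponential decay of $1-\alpha_k$, absolute error controlled by relative error since $x^{1/p}\le 1$, and a layerwise realization with $\alpha_k$ and $\mu(\alpha_k)$ precomputed as fixed weights. The difference is that the paper cites Gawlik for the equioscillation identity and the decay rate, whereas you derive both from the unimodal map $h$ and the recursion $1-\alpha_{k+1}\le C(1-\alpha_k)^2$, which makes the argument self-contained. One step you only assert, $h(\mu/\alpha_k)<1/\alpha_k$, follows from the mean value theorem on $[1,\mu/\alpha_k]$, since $0\le h'<(p-1)/p$ on $[1,\infty)$ and $\mu<1$.
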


\begin{proof}
The analysis in \citet{GAWLIK2021105577}, applied with the fixed initial endpoint parameter $\alpha_0$, shows that the scaled iterates satisfy the equioscillation identity
\[
\max_{x\in[\alpha_0^p,1]}
\frac{\tilde f_k(x)-x^{1/p}}{x^{1/p}}
=
-\min_{x\in[\alpha_0^p,1]}
\frac{\tilde f_k(x)-x^{1/p}}{x^{1/p}}
=
\frac{1-\alpha_k}{1+\alpha_k}
=:\varepsilon_k,
\qquad k\ge 0,
\]
where $(\alpha_k)$ is the sequence generated from $\alpha_0$ by~\cref{eq:pth-root-iter}. Equivalently,
\[
\max_{x\in[\alpha_0^p,1]}
\left|
\frac{\tilde f_k(x)-x^{1/p}}{x^{1/p}}
\right|
=
\varepsilon_k.
\]
Moreover, the scalar iteration for $(\alpha_k)$ yields a doubly exponential decay
\[
\varepsilon_k
=
\frac{1-\alpha_k}{1+\alpha_k}
\le
C_p\,\exp\!\bigl(-c_p\,2^k\bigr),
\]
for suitable constants $C_p,c_p>0$ depending only on $p$ and $\alpha_0$; see, e.g., \citet[Thm.~3.2]{GAWLIK2021105577}. Hence
\[
\sup_{x\in[\alpha_0^p,1]}
\left|
\frac{\tilde f_k(x)-x^{1/p}}{x^{1/p}}
\right|
\le
C_p\,\exp\!\bigl(-c_p\,2^k\bigr),
\]
which proves~\cref{eq:pth-root-rel-error}.

To achieve a target absolute accuracy $\varepsilon$, note that $0<x^{1/p}\le 1$ on $[\alpha_0^p,1]$. Therefore
\[
\sup_{x\in[\alpha_0^p,1]}
\bigl|\tilde f_k(x)-x^{1/p}\bigr|
\le
\sup_{x\in[\alpha_0^p,1]}
\left|
\frac{\tilde f_k(x)-x^{1/p}}{x^{1/p}}
\right|.
\]
Thus it is enough to enforce
\[
C_p\,\exp\!\bigl(-c_p\,2^k\bigr)\le \varepsilon,
\]
which is equivalent to
\[
2^k \;\gtrsim\; \frac{1}{c_p}\,\log\frac{C_p}{\varepsilon}
\quad\Longrightarrow\quad
k \;\gtrsim\; \log\log\frac{1}{\varepsilon},
\]
so $k=\Theta(\log\log(1/\varepsilon))$ layers suffice.

Finally, for each fixed $k$, the scalars $\alpha_k$ and $\mu(\alpha_k)$ are determined only by $p$ and the initial value $\alpha_0$, and may be precomputed as layer-dependent constants. With these constants fixed, the update
\[
(x,f_k)\longmapsto f_{k+1}
=
\frac{1}{p}\Bigl(
    (p-1)\,\mu(\alpha_k)\,f_k
    +
    \frac{x}{\mu(\alpha_k)^{p-1} f_k^{p-1}}
\Bigr)
\]
is a rational map whose numerator and denominator degrees are bounded by constants depending only on $p$ and not on $k$ or $\varepsilon$. Thus, keeping $x$ as an input channel and propagating $f_k$ as a state variable, each iteration can be realized by a constant-size rational block with layer-dependent coefficients. Composing $k=\Theta(\log\log(1/\varepsilon))$ such blocks yields a constant-width rational realization of total size $\Theta(\log\log(1/\varepsilon))$ that approximates $x^{1/p}$ to accuracy $\varepsilon$ on $[\alpha_0^p,1]$.
\end{proof}

We denote by $R_{p\text{-root}}(x)$ the resulting constant-width rational neural network obtained by composing the update \cref{eq:pth-root-iter} a total of $k=\Theta(\log\log(1/\varepsilon))$ times and reading out the scalar $\tilde f_k(x)$ in \cref{eq:scaled-fk}. By \cref{lem:pth-root}, $R_{p\text{-root}}(x)$ satisfies
\[
\sup_{x\in[\alpha_0^p,1]}
\bigl|R_{p\text{-root}}(x)-x^{1/p}\bigr|
\;\le\;\varepsilon,
\]
with network size $\Theta(\log\log(1/\varepsilon))$.

\subsubsection*{Step 2: AGM-theta approximation of $\log(x)$ away from $1$}

We now construct a constant-width rational neural network that approximates $\log(x)$ on a fixed domain separated from $x=1$ by combining the square-root block from Step~1 (the case $p=2$ of \cref{lem:pth-root}) with an AGM--theta representation of the logarithm due to Sasaki and Kanada  {\citep{sasaki1982log}; for the corrected \(q^4\)-form used below, see \citealt[Eq.~(40)]{brent2018borwein}, and for the underlying theta-function and elliptic-integral identities see \citealt[Chs.~2 and 7]{borwein1987piagm}}. Fix a separation parameter $0<\sigma<1$ and a small positive cutoff $\tau>0$ and consider
\[
\mathcal D_{\sigma,\tau}
:=\{x>0:\ |x-1|\ge\sigma,\ \tau\le x\le 1/\tau\},
\]
so that $\mathcal D_{\sigma,\tau}$ excludes a small neighbourhood of $1$ and extremely small or large values of $x$. Our goal is to construct, for any $0<\varepsilon<1$, a constant-width rational neural network $R_{\log}(x)$ that approximates $\log(x)$ to within $\varepsilon$ on $\mathcal D_{\sigma,\tau}$ with total size $\Theta((\log\log(1/\varepsilon))^2)$. Notice that for $x\in\mathcal D_{\sigma,\tau}\cap(0,1)$ we can use the reciprocity \(\log(x)=-\log(1/x)\), so it suffices to construct an approximation for $\log(y)$ when $y\ge 1$ and $|y-1|\ge\sigma$.  {We set \(q=1/y\), so \(q\) ranges over a compact subinterval of \((0,1)\) determined by \(\sigma\) and \(\tau\), and \(\log(y)=\log(1/q)\).}

The construction has two AGM stages: 
\begin{enumerate}
    \item The first evaluates the Jacobi theta functions $\theta_2(0,q^4)$ and $\theta_3(0,q^4)$ via the complete elliptic integral,  {using standard theta--elliptic-integral identities and the AGM representation of the complete elliptic integral; see \citet[Chs.~2 and 7]{borwein1987piagm} and \citet{brent2018borwein}.}

To describe this stage, recall the standard relations between the theta functions, the elliptic modulus $k\in(0,1)$ corresponding to $q$, and the complete elliptic integral $K(k)$; see  {\citet[Chs.~2 and 7]{borwein1987piagm}}. One has
\[
\theta_3(0,q)^2 = \frac{2}{\pi}K(k),
\qquad
\theta_2(0,q)^2 = \frac{2}{\pi}k\,K(k),
\]
and
\[
K(k)=\frac{\pi}{2\,\operatorname{AGM}(1,\sqrt{1-k^2})}.
\]
Thus, given $q$, we first compute the associated modulus $k$, then start with
\[
a_0^{\mathrm{ell}}=1,\qquad b_0^{\mathrm{ell}}=\sqrt{1-k^2},
\]
and perform $M_\sigma$ AGM steps
\[
a_{j+1}^{\mathrm{ell}}=\frac{a_j^{\mathrm{ell}}+b_j^{\mathrm{ell}}}{2},
\qquad
b_{j+1}^{\mathrm{ell}}=\sqrt{a_j^{\mathrm{ell}}b_j^{\mathrm{ell}}},
\]
to obtain an approximation $K_{M_\sigma}$ to $K(k)$ and hence approximations to $\theta_2(0,q^4)$ and $\theta_3(0,q^4)$ by the formulas above and the standard nome-rescaling  {\citep[Chs.~2 and 7]{borwein1987piagm}}. 

\item In the second stage we apply the real AGM directly to the pair
\[
a_0^{\log}=\theta_2(0,q^4)^2,\qquad
b_0^{\log}=\theta_3(0,q^4)^2,
\]
where in practice we insert the outputs of the first stage. We then iterate
\[
a_{j+1}^{\log}=\frac{a_j^{\log}+b_j^{\log}}{2},
\qquad
b_{j+1}^{\log}=\sqrt{a_j^{\log}b_j^{\log}},
\qquad j\ge 0.
\]
The Sasaki--Kanada identity  {\citep{sasaki1982log}; see also the corrected \(q^4\)-form in \citealt[Eq.~(40)]{brent2018borwein}}
\begin{equation}
\label{eq:sasaki-kanada}
\log\frac{1}{q}
=
\frac{\pi/4}{\operatorname{AGM}\!\bigl(\theta_2(0,q^4)^2,\theta_3(0,q^4)^2\bigr)},
\qquad 0<q<1
\end{equation}
 {This is the \(q\mapsto q^4\) form of the theta--AGM logarithm formula; the factor \(\pi/4\) follows from \(\log(1/q^4)=4\log(1/q)\), together with the symmetry of \(\operatorname{AGM}\).}
It then states that in exact arithmetic
\[
\log\frac{1}{q}
=
\frac{\pi}{4\,\lim_{j\to\infty}a_j^{\log}}.
\]
\end{enumerate}

Because $q$ stays in a compact subinterval of $(0,1)$ when $x\in\mathcal D_{\sigma,\tau}$, the arguments of all square roots in both stages lie in a compact interval $[c_{\sigma,\tau},C_{\sigma,\tau}]\subset(0,\infty)$. To implement this two-stage AGM scheme by a rational neural network we replace each occurrence of $\sqrt{\cdot}$ by the square-root block $R_{2\textnormal{-root}}$ from \cref{lem:pth-root}. That lemma, applied with $p=2$ and after rescaling to $[c_{\sigma,\tau},C_{\sigma,\tau}]$, yields for any $0<\delta<1$ a constant-width rational network $R_{2\textnormal{-root}}$ such that
\[
\sup_{z\in[c_{\sigma,\tau},C_{\sigma,\tau}]}
\bigl|R_{2\textnormal{-root}}(z)-\sqrt{z}\bigr|\le\delta,
\]
with depth $\Theta(\log\log(1/\delta))$ and uniformly bounded degrees in all rational activations.

\begin{lemma}[AGM-theta approximation of $\log$ away from $1$]
\label{lem:log-agm-theta}
Fix a separation parameter $0<\sigma<1$ and a (possibly extremely small) positive cutoff $\tau>0$ and consider $\log(x)$ on $\mathcal D_{\sigma,\tau}=\{x>0:|x-1|\ge\sigma,\ \tau\le x\le 1/\tau\}$. There exists, for every $0<\varepsilon<1$, a constant-width rational neural network $R_{\log}(x)$ such that
\[
\sup_{x\in\mathcal D_{\sigma,\tau}}
\bigl|R_{\log}(x)-\log(x)\bigr|
\le\varepsilon,
\]
and the total network size satisfies
\[
\operatorname{size}\bigl(R_{\log}\bigr)
=
\Theta\bigl((\log\log(1/\varepsilon))^2\bigr).
\]
\end{lemma}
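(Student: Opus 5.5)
The plan is to build $R_{\log}$ as a fixed composition of four blocks: a reciprocity preprocessing block, the first (elliptic) AGM stage, the second (logarithmic) AGM stage, and a final rational readout $z\mapsto \pi/(4z)$. Error bounds are then propagated through these blocks by Lipschitz estimates.

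\emph{Preprocessing.} On $\mathcal D_{\sigma,\tau}$ we set $q=x$ if $x<1$ and $q=1/x$ if $x>1$, with the sign flip $\log x=\pm\log(1/q)$. Since $\mathcal D_{\sigma,\tau}$ has two components, I would treat each as a separate affine-input branch. Equivalently, one can use a single fixed rational map sending both components into the compact interval $q\in[\tau,1/(1+\sigma)]\cup[\tau,1-\sigma]\subset(0,1)$, and fix the sign at the end. In either case this is $\mathcal O(1)$ size.

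\emph{First stage.} The modulus $k=k(q)$ equals $\theta_2(0,q)^2/\theta_3(0,q)^2$. I would avoid evaluating $k$ from theta series. Instead I would build the map directly for the nome $q^4$: the theta values at $q^4$ follow from those at $q$ by the standard Landen/AGM descent, which needs only square roots, products and quotients. Alternatively, since $q$ lies in a compact set bounded away from $1$, the quantities $\theta_2(0,q^4)^2$ and $\theta_3(0,q^4)^2$ are analytic in $q$ there and are given by the elliptic-integral formulas with $\sqrt{1-k^2}$ computed by $R_{2\textnormal{-root}}$.

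\emph{AGM convergence count.} Because all AGM inputs lie in $[c_{\sigma,\tau},C_{\sigma,\tau}]\subset(0,\infty)$, the classical estimate $|a_j-b_j|\le C\,\rho^{2^j}$ with $\rho<1$, depending only on $\sigma,\tau$, holds in both stages. Hence $M=\Theta(\log\log(1/\varepsilon))$ AGM steps in each stage give truncation error $\varepsilon/4$.

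\emph{Replacing square roots.} Each exact $\sqrt{\cdot}$ is replaced by $R_{2\textnormal{-root}}$ with tolerance $\delta$. By Lemma~\ref{lem:pth-root} this block has depth $\Theta(\log\log(1/\delta))$. With $\mathcal O(M)$ square roots composed in sequence, the total size is $\mathcal O(M\log\log(1/\delta))$.

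The main obstacle is the perturbation analysis: showing that $\delta$ need only be polynomial in $\varepsilon$, so that $\log\log(1/\delta)=\mathcal O(\log\log(1/\varepsilon))$. For this I would argue as follows.
\begin{enumerate}
\item An AGM step $(a,b)\mapsto((a+b)/2,\sqrt{ab})$ is $1$-Lipschitz in the max norm on the compact set where $a,b\ge c>0$, since $\partial\sqrt{ab}/\partial a=\tfrac12\sqrt{b/a}$ and near convergence $a\approx b$. Consequently the perturbed iterates satisfy $\|\tilde v_j-v_j\|\le \|\tilde v_0-v_0\|+j\delta$, and they stay inside a slightly enlarged compact set on which $R_{2\textnormal{-root}}$ remains accurate. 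To guarantee this I would rescale the square-root block to a fixed enlargement $[c/2,2C]$.
\item The readout $z\mapsto\pi/(4z)$ and the first-to-second stage maps are Lipschitz with constants depending only on $\sigma,\tau$.
\item Combining these, the total error is at most $\varepsilon/4+L_{\sigma,\tau}\,M\,\delta$. Choosing $\delta=\varepsilon/(4L_{\sigma,\tau}M)$ gives total error at most $\varepsilon$, and since $M=\Theta(\log\log(1/\varepsilon))$ we get $\log\log(1/\delta)=\Theta(\log\log(1/\varepsilon))$.
\end{enumerate}

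The size is therefore $\Theta(M\cdot\log\log(1/\delta))=\Theta((\log\log(1/\varepsilon))^2)$. Width stays constant because each stage carries only the state $(x,a_j,b_j)$ plus the inner state of one square-root block at a time. The matching lower order $\Theta$ simply records that this construction uses that many layers.
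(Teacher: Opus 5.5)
Your outline matches the paper's proof wherever the paper actually argues. Both use two AGM stages of $\Theta(\log\log(1/\varepsilon))$ steps, replace every $\sqrt{\cdot}$ by the Step~1 block at tolerance $\delta=\Theta(\varepsilon/N)$, accumulate the perturbation linearly, and count size as $N\cdot\Theta(\log\log(1/\delta))$.

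The genuine gap is the first stage: getting the AGM inputs $\theta_2(0,q^4)^2,\theta_3(0,q^4)^2$ from $q$ at $\log\log$ cost. Both of your routes presuppose the quantity to be computed. Landen descent needs the theta values at nome $q$. The elliptic-integral route needs the modulus $k(q)=\theta_2(0,q)^2/\theta_3(0,q)^2$, which is the inverse-nome problem. For $q$ in a fixed compact subset of $(0,1)$ the theta series need $\Theta(\sqrt{\log(1/\varepsilon)})$ terms. Analyticity in $q$ alone only gives, via Chebyshev truncation, a network of size $\Theta(\log(1/\varepsilon))$. Either way the $(\log\log(1/\varepsilon))^2$ bound is lost. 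The paper's proof has the same hole: ``given $q$, we first compute the associated modulus $k$'' is never justified. So you inherited the gap rather than introduced it, but as written the argument does not establish the lemma.

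\textbf{A fix.} Keep Sasaki--Kanada but make the nome tiny rather than keeping it in a compact interval.
\begin{itemize}
\item Choose $m=\Theta(\log\log(1/\varepsilon))$, compute $Q=q^{2^m}$ with $m$ squaring layers, and use $\log(1/q)=2^{-m}\log(1/Q)$.
\item For tiny $Q$ one has $\theta_2(0,Q^4)^2=4Q^2(1+\mathcal{O}(Q^8))$ and $\theta_3(0,Q^4)^2=1+\mathcal{O}(Q^4)$. So the first stage collapses to the explicit pair $(4Q^2,1)$.
\item Since the AGM is homogeneous and monotone, this substitution costs relative error $\mathcal{O}(Q^4)$. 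After the factor $2^{-m}$ that is absolute error $\mathcal{O}(Q^4\log(1/q))$, which is below $\varepsilon$ once $2^m\gtrsim\log(1/\varepsilon)$.
\item The square-root inputs now reach down to $4Q^2\ge\varepsilon^{\mathcal{O}(1)}$. For $p=2$ the Step~1 iteration is $\alpha_{k+1}=2\sqrt{\alpha_k}/(1+\alpha_k)$. It gives relative accuracy $\delta$ on $[\alpha_0^2,1]$ after $\mathcal{O}(\log\log(1/\alpha_0)+\log\log(1/\delta))$ steps.
\item $\operatorname{AGM}(4Q^2,1)$ converges in $\mathcal{O}(\log\log(1/Q)+\log\log(1/\varepsilon))$ steps, so the overall count is unchanged.
\end{itemize}

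\textbf{Stability.} Work in coordinates $(\log a,\log b)$. There the AGM step is exactly nonexpansive in $\ell^\infty$, because each component has nonnegative partial derivatives summing to $1$. Hence relative errors grow by $\mathcal{O}(\delta)$ per step, uniformly even for tiny arguments. This also corrects your $1$-Lipschitz claim, which is false in absolute coordinates. The second row of the Jacobian has absolute sum $\tfrac12(\sqrt{b/a}+\sqrt{a/b})\ge 1$, with equality only at $a=b$. That error is harmless on a compact set: the resulting factor $K^N$ is polylogarithmic in $1/\varepsilon$, so $\log\log(1/\delta)$ is unchanged.

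\textbf{Branching.} A rational network cannot branch on $x<1$ versus $x>1$, so your two-branch preprocessing is not available. Instead use the fixed rescaling $q=\tau x/2\in[\tau^2/2,1/2]$, with $\log x=\log(2/\tau)-\log(1/q)$ and the constant $\log(2/\tau)$ hard-wired.
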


\begin{proof}
First consider the idealized scheme in which both AGM stages use exact arithmetic and exact square roots. The classical AGM analysis for complete elliptic integrals shows that there exist constants $C^{\mathrm{ell}}_{\sigma,\tau},c^{\mathrm{ell}}_{\sigma,\tau}>0$ such that, after $M_\sigma$ elliptic AGM steps in the first stage, the error in $K(k)$ (and hence in $\theta_2(0,q^4)$ and $\theta_3(0,q^4)$) satisfies
\[
\bigl|K_{M_\sigma}(k)-K(k)\bigr|
\le
C^{\mathrm{ell}}_{\sigma,\tau}\exp\bigl(-c^{\mathrm{ell}}_{\sigma,\tau}2^{M_\sigma}\bigr),
\]
see, for example,  {\citet[Chs.~2 and 7]{borwein1987piagm}}. Likewise, the analysis of the second stage based on~\cref{eq:sasaki-kanada} yields constants $C^{\log}_{\sigma,\tau},c^{\log}_{\sigma,\tau}>0$ such that, after $L$ logarithm AGM steps,
\[
\left|
\log\frac{1}{q}
-
\frac{\pi}{4\,a_L^{\log}}
\right|
\le
C^{\log}_{\sigma,\tau}\exp\bigl(-c^{\log}_{\sigma,\tau} 2^L\bigr),
\]
for all $q$ arising from $x\in\mathcal D_{\sigma,\tau}$; see  {\citet{sasaki1982log} and \citet[Eq.~(40)]{brent2018borwein}}.
Given $0<\varepsilon<1$, we may therefore choose
\[
M_\sigma=\Theta\!\bigl(\log\log(1/\varepsilon)\bigr),
\qquad
L=\Theta\!\bigl(\log\log(1/\varepsilon)\bigr),
\]
so that the truncation errors from the elliptic and logarithm stages are each at most, say, $\varepsilon/4$ uniformly on $\mathcal D_{\sigma,\tau}$. In particular, the total number of AGM steps across both stages is
\[
N = M_\sigma + L = \Theta\!\bigl(\log\log(1/\varepsilon)\bigr).
\]

We now replace each exact square root by $R_{2\textnormal{-root}}$. In both AGM stages, one iteration sends $(a,b)\in[c_{\sigma,\tau},C_{\sigma,\tau}]^2$ to
\[
\Phi(a,b)
=
\Bigl(\frac{a+b}{2},\sqrt{ab}\Bigr),
\]
and the implemented map is \(\widetilde\Phi(a,b)
=
\Bigl(\frac{a+b}{2},R_{2\textnormal{-root}}(ab)\Bigr)\), with \(\bigl|R_{2\textnormal{-root}}(z)-\sqrt{z}\bigr|\le\delta\) on $[c_{\sigma,\tau}^2,C_{\sigma,\tau}^2]$. Since $\Phi$ is smooth on $[c_{\sigma,\tau},C_{\sigma,\tau}]^2$ and its derivatives are bounded by constants depending only on $\sigma$ and $\tau$, a standard stability argument shows that the discrepancy between the exact AGM iterates and the inexact ones after $N$ applications of $\widetilde\Phi$ is bounded by $C'_{\sigma,\tau} N\delta$ for some $C'_{\sigma,\tau}>0$ independent of $\varepsilon$. The resulting perturbation in the final value of $\log(1/q)$ is therefore also $O(N\delta)$ uniformly on $\mathcal D_{\sigma,\tau}$.

To make this perturbation smaller than, for example, $\varepsilon/2$, it suffices to choose \(\delta = \varepsilon/(2C'_{\sigma,\tau} N)\). Since \(N=\Theta(\log\log(1/\varepsilon))\) as \(\varepsilon\to 0\), this choice implies \(\log\log(1/\delta)=\Theta(\log\log(1/\varepsilon))\), because $\delta$ differs from $\varepsilon$ only by a factor that is polynomial in $\log\log(1/\varepsilon)$. \Cref{lem:pth-root} with $p=2$ then shows that each square-root block $R_{2\textnormal{-root}}$ achieving accuracy $\delta$ on $[c_{\sigma,\tau},C_{\sigma,\tau}]$ can be implemented by a constant-width rational network of depth \( \Theta\!\bigl(\log\log(1/\delta)\bigr) = \Theta\!\bigl(\log\log(1/\varepsilon)\bigr). \)

Each AGM iteration uses only a constant number of such square-root calls. Therefore one AGM step corresponds to a rational subnetwork of depth \(\Theta(\log\log(1/\varepsilon))\). Since the total number of AGM steps across both stages is \(N=\Theta(\log\log(1/\varepsilon))\), the overall depth, and hence the total size, of the network implementing $\log(1/q)$ is \(
\Theta\!\bigl((\log\log(1/\varepsilon))^2\bigr),
\) with constant width and uniformly bounded degrees in all rational activations. 
\end{proof}

We will use $R_{\log}(x)$ as the logarithm block in the subsequent steps of the GELU construction.

\subsubsection*{Step 3: Approximating $\operatorname{artanh}$ on a small interval}

Using the logarithm block $R_{\log}$ from~\cref{lem:log-agm-theta}, we obtain an $\operatorname{artanh}$ block on a small interval around the origin. Fix, for concreteness, $z\in[-\tfrac18,\tfrac18]$ and recall
\[
\operatorname{artanh}(z)
=
\tfrac12\bigl(\log(1+z)-\log(1-z)\bigr),\qquad |z|<1.
\]
On $[-1/8,1/8]$ we have $1\pm z\in[7/8,9/8]$. For any such $u$ we write \(
\log(u)=\log(ab)=\log(a)+\log(b),
\) choosing a fixed factorization $u=ab$ so that both $a,b\in\mathcal D_{\sigma,\tau}$ and each $\log(\cdot)$ on the right is evaluated by $R_{\log}$; if at any stage the argument equals $1$ we simply hard-wire $\log(1)=0$ instead of calling $R_{\log}$. This gives a modified logarithm block $\widetilde R_{\log}$ that agrees with $\log$ on $[7/8,9/8]$ up to error $\varepsilon$. We then define
\[
R_{\operatorname{artanh}}(z)
:=
\tfrac12\Bigl(\widetilde R_{\log}(1+z)-\widetilde R_{\log}(1-z)\Bigr),
\]
which satisfies
\[
\sup_{z\in[-1/8,1/8]}
\bigl|R_{\operatorname{artanh}}(z)-\operatorname{artanh}(z)\bigr|
\le\varepsilon,
\]
and has size $\Theta\bigl((\log\log(1/\varepsilon))^2\bigr)$, since we only add $\mathcal{O}(1)$ affine maps and constant corrections on top of $R_{\log}$.

\subsubsection*{Step 4: Halley iteration for $\tanh$ on a small interval}

We now construct a constant-width rational neural network that approximates $\tanh$ on a fixed small interval around the origin using the $\operatorname{artanh}$ block from Step~3. Fix, for concreteness, an interval \(
s\in\Bigl[-\tfrac{1}{8},\tfrac{1}{8}\Bigr],
\) and define the scalar iteration
\begin{equation}
\label{eq:halley-tanh-iter}
\begin{aligned}
t_0(s) &= s,\\[0.25em]
t_{k+1}(s)
&=
t_k(s)
-
\frac{(1-t_k(s)^2)\bigl(\operatorname{artanh}(t_k(s))-s\bigr)}{1-t_k(s)\bigl(\operatorname{artanh}(t_k(s))-s\bigr)},
\qquad k\ge 0.
\end{aligned}
\end{equation}
The update~\cref{eq:halley-tanh-iter} is the Halley method applied to the equation $\operatorname{artanh}(t)=s$, written in a form that uses only rational combinations of $t_k(s)$ and $\operatorname{artanh}(t_k(s))$.

\begin{lemma}[Halley-based approximation of $\tanh$ on a small interval]
\label{lem:halley-tanh}
Fix $s\in[-1/8,1/8]$ and let $t_k(s)$ be defined by~\cref{eq:halley-tanh-iter} with exact $\operatorname{artanh}$. Then there exist constants $C_{\tanh},c_{\tanh}>0$ such that
\[
\sup_{s\in[-1/8,1/8]}
\bigl|t_k(s)-\tanh(s)\bigr|
\;\le\;
C_{\tanh}\,\exp\!\bigl(-c_{\tanh}\,2^k\bigr)
\qquad\text{for all }k\ge 0.
\]
Consequently, given any $0<\varepsilon<1$, it suffices to take \(
k \;=\; \Theta\!\bigl(\log\log(1/\varepsilon)\bigr)
\)
to obtain
\[
\sup_{s\in[-1/8,1/8]}
\bigl|t_k(s)-\tanh(s)\bigr|
\;\le\;\varepsilon.
\]
Moreover, if each occurrence of $\operatorname{artanh}$ in~\cref{eq:halley-tanh-iter} is replaced by the block $R_{\operatorname{artanh}}$ from Step~3 (with appropriately chosen accuracy), then $s\mapsto t_k(s)$ can be implemented by a constant-width rational neural network whose total size grows like \(
\Theta\!\bigl((\log\log(1/\varepsilon))^3\bigr).
\)
\end{lemma}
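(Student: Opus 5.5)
The plan is to treat \cref{eq:halley-tanh-iter} as Halley's method for $f_s(t)=\operatorname{artanh}(t)-s$. First prove cubic (hence doubly exponential) convergence in exact arithmetic. Then show that the iteration is stable when $\operatorname{artanh}$ is replaced by $R_{\operatorname{artanh}}$ with accuracy $\delta$. Finally count layers.

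\textbf{Step 1: exact convergence.} First I will verify the algebra. With $f_s'(t)=1/(1-t^2)$ and $f_s''(t)=2t/(1-t^2)^2$, the Halley correction $2f_sf_s'/(2f_s'^2-f_sf_s'')$ simplifies to $(1-t^2)f_s/(1-tf_s)$, which is exactly \cref{eq:halley-tanh-iter}. The root is $t^\ast=\tanh(s)$. Define the Halley map $H_s(t)=t-(1-t^2)f_s(t)/(1-tf_s(t))$ on $I=[-1/7,1/7]$. There $f_s$, its derivatives, and $1/(1-tf_s)$ are uniformly bounded in $s\in[-1/8,1/8]$, because $|tf_s|\le\tfrac17(\operatorname{artanh}(1/7)+1/8)<1/20$. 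The standard Halley error analysis (Taylor expansion to third order around $t^\ast$) then gives a uniform constant $K$ with
\[
|H_s(t)-t^\ast|\le K|t-t^\ast|^3,\qquad t\in I .
\]
The initial error is $e_0=|s-\tanh s|\le|s|^3/3\le 1/1536$. So $Ke_0^2<1$ once $K$ is moderate; if needed, apply a constant number of preliminary steps, or simply check the explicit constant. Induction then gives $e_k\le K^{-1/2}(K^{1/2}e_0)^{3^k}$. This is bounded by $C_{\tanh}\exp(-c_{\tanh}2^k)$, and in fact decays faster. The same bound shows the iterates never leave $I$, since $|\tanh s|\le 0.1244$ and $e_k\le e_0$. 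Hence $k=\Theta(\log\log(1/\varepsilon))$ suffices.

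\textbf{Step 2: inexact iteration.} Build $R_{\operatorname{artanh}}$ on $I$ rather than $[-1/8,1/8]$, to leave a safety margin; Step~3 applies verbatim, since $1\pm z\in[6/7,8/7]$. Give it accuracy $\delta$. The implemented map $\widetilde H_s$ differs from $H_s$ by at most $L\delta$ on $I$, where $L$ bounds $\partial H/\partial f$ and the denominator stays away from $0$ for small $\delta$. This yields the perturbed recursion $\tilde e_{k+1}\le K\tilde e_k^3+L\delta$. As long as $\tilde e_k\le\min(e_0,(2K)^{-1/2})$, this gives $\tilde e_k\le e_k+2L\delta$, and the iterates stay in $I$. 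Choosing $\delta=\varepsilon/(4L)$ and $k$ so that $e_k\le\varepsilon/2$ gives total error at most $\varepsilon$. As in \cref{lem:log-agm-theta}, $\log\log(1/\delta)=\Theta(\log\log(1/\varepsilon))$.

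\textbf{Step 3: size count.} Each Halley step uses one call to $R_{\operatorname{artanh}}$, of size $\Theta((\log\log(1/\delta))^2)=\Theta((\log\log(1/\varepsilon))^2)$. It also uses $\mathcal{O}(1)$ multiplications and one division. These are realized by a constant number of bounded-degree rational layers: products via $ab=\tfrac14((a+b)^2-(a-b)^2)$, and division via the activation $x\mapsto 1/x$ on a range bounded away from $0$. A constant-width channel carries $s$ forward. Composing $k=\Theta(\log\log(1/\varepsilon))$ such steps gives total size $\Theta((\log\log(1/\varepsilon))^3)$.

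\textbf{Main obstacle.} I expect the hardest part to be Step~2: guaranteeing that the perturbed iterates stay inside the region where $R_{\operatorname{artanh}}$ is accurate and the denominator $1-tf$ is bounded away from $0$, so that the error does not compound across the $k$ steps. Because $H_s'(t^\ast)=0$, the map is strongly contractive near the root, so errors add rather than multiply. The margin obtained by enlarging the interval to $[-1/7,1/7]$ is what should make this work.
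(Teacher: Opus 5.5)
Your proposal is correct and follows the same route as the paper: Halley's method for $\operatorname{artanh}(t)=s$ with uniform local cubic convergence, a stability argument for replacing $\operatorname{artanh}$ by $R_{\operatorname{artanh}}$, and a count of $k=\Theta(\log\log(1/\varepsilon))$ steps, each costing $\Theta((\log\log(1/\varepsilon))^2)$.

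Your version is more careful than the paper's in three places: you check the basin explicitly via $e_0\le 1/1536$, you keep the iterates inside the enlarged interval $[-1/7,1/7]$, and you use $H_s'(t^\ast)=0$ so that perturbations do not accumulate, which gives $\delta=\Theta(\varepsilon)$ instead of the paper's $\delta=\Theta(\varepsilon/k)$. One small fix: in the comparison $\tilde e_k\le e_k+2L\delta$, take $e_k$ to be the majorant sequence $e_{k+1}=Ke_k^3$ and use that $Ke_0^2$ is tiny, because the threshold $\tilde e_k\le(2K)^{-1/2}$ alone does not absorb the cross term $6Ke_k^2L\delta$.
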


\begin{proof}
For each fixed $s\in[-1/8,1/8]$, the function $F(t;s):=\operatorname{artanh}(t)-s$ has a unique simple root at $t=\tanh(s)$ with $F'(t;s)=1/(1-t^2)\neq 0$ on a neighbourhood of this root. It is classical that Halley’s method applied to a scalar equation with a simple root converges locally with at least quadratic (in fact cubic) order; see, e.g., standard analyses of Halley’s iteration. Since $s$ ranges over a compact interval and $\tanh(s)$ stays in a compact subinterval of $(-1,1)$, we may choose a uniform neighbourhood on which the convergence constants can be taken independent of $s$. This yields the doubly exponential error bound
\[
\bigl|t_k(s)-\tanh(s)\bigr|
\;\le\;
C_{\tanh}\,\exp\!\bigl(-c_{\tanh}\,2^k\bigr),
\]
for suitable $C_{\tanh},c_{\tanh}>0$, uniformly in $s\in[-1/8,1/8]$. The choice $k=\Theta(\log\log(1/\varepsilon))$ then gives the stated $\varepsilon$–accuracy. To obtain a rational neural network, we view~\cref{eq:halley-tanh-iter} as defining a fixed bivariate rational update
\(
(s,t_k)\longmapsto t_{k+1}
\)
once $\operatorname{artanh}$ is approximated by the block $R_{\operatorname{artanh}}$. Step~3 shows that for any $0<\delta<1$ we can construct $R_{\operatorname{artanh}}$ with
\[
\sup_{z\in[-1/8,1/8]}
\bigl|R_{\operatorname{artanh}}(z)-\operatorname{artanh}(z)\bigr|\le\delta,
\]
and constant-width network size $\Theta((\log\log(1/\delta))^2)$. Each Halley update then uses a constant number of affine/rational operations and a single call to $R_{\operatorname{artanh}}$, so one iteration corresponds to a constant-width rational subnetwork of size $\Theta((\log\log(1/\delta))^2)$.

As in the stability argument for~\cref{lem:log-agm-theta}, the perturbation introduced by replacing $\operatorname{artanh}$ with $R_{\operatorname{artanh}}$ accumulates at most linearly in the number of iterations: after $k$ steps the discrepancy between the exact and inexact iterates is bounded by $C' k \delta$ for some $C'>0$ independent of $\varepsilon$. We choose \(
k=\Theta\!\bigl(\log\log(1/\varepsilon)\bigr),
\qquad
\delta=\Theta\!\Bigl(\frac{\varepsilon}{k}\Bigr),
\) so that the ideal Halley error and the accumulated approximation error are each at most $\varepsilon/2$ on $[-1/8,1/8]$. Since $k=\Theta(\log\log(1/\varepsilon))$, this choice ensures \(
\log\log\frac{1}{\delta}
=
\Theta\!\bigl(\log\log(1/\varepsilon)\bigr),
\) and therefore each $R_{\operatorname{artanh}}$ call has size $\Theta((\log\log(1/\varepsilon))^2)$. Composing $k=\Theta(\log\log(1/\varepsilon))$ Halley layers gives a constant-width rational neural network of total size
\[
\Theta\!\bigl(\log\log(1/\varepsilon)\bigr)\cdot
\Theta\!\bigl((\log\log(1/\varepsilon))^2\bigr)
=
\Theta\!\bigl((\log\log(1/\varepsilon))^3\bigr),
\]
with the desired uniform $\varepsilon$–approximation to $\tanh$ on $[-1/8,1/8]$.
\end{proof}

We denote by $R_{\tanh}(s)$ the resulting constant-width rational neural network obtained by composing~\cref{eq:halley-tanh-iter} a total of $k=\Theta(\log\log(1/\varepsilon))$ times with $\operatorname{artanh}$ replaced by $R_{\operatorname{artanh}}$, and reading out the scalar $t_k(s)$. By~\cref{lem:halley-tanh}, $R_{\tanh}(s)$ satisfies
\[
\sup_{s\in[-1/8,1/8]}
\bigl|R_{\tanh}(s)-\tanh(s)\bigr|
\;\le\;\varepsilon,
\]
with network size $\Theta\bigl((\log\log(1/\varepsilon))^3\bigr)$.

\subsubsection*{Step 5: Approximating the GELU activation}

Finally we combine the preceding blocks to approximate the full GELU activation
\[
G(x)=\tfrac{x}{2}\bigl(1+\tanh(\alpha(x+\beta x^3))\bigr),
\qquad \alpha=\sqrt{2/\pi},\ \beta=0.044715,
\]
on $[-1,1]$. Set $P(x)=\alpha(x+\beta x^3)$, so $P$ is a fixed polynomial map of constant complexity and $|P(x)|\le U:=\alpha(1+\beta)<1$ for $|x|\le 1$. Choose an integer $m\ge 1$, depending only on $U$, such that $U/2^m\le 1/8$. Using the double-angle identity $\tanh(x)=2\tanh(x/2)/(1+\tanh^2(x/2))$ repeatedly, we can write
\[
\tanh(P(x))=\Psi^{(m)}\!\Bigl(\tanh\bigl(P(x)/2^m\bigr)\Bigr),
\qquad
\Psi(t)=\frac{2t}{1+t^2},
\]
where $\Psi^{(m)}$ is the $m$-fold composition of the fixed rational map $\Psi$. For $|x|\le 1$ we have $|P(x)/2^m|\le 1/8$, so $\tanh(P(x)/2^m)$ lies in the domain of the small-interval $\tanh$ block $R_{\tanh}$ from Step~4. Choosing the parameters of $R_{\tanh}$ so that it approximates $\tanh$ on $[-1/8,1/8]$ to accuracy $\varepsilon'$ and propagating this error through the $m=\mathcal{O}(1)$ applications of $\Psi$ and the outer affine map $x\mapsto \tfrac{x}{2}(1+\cdot)$ shows that we can achieve
\[
\sup_{x\in[-1,1]}\bigl|R_{\mathrm{GELU}}(x)-G(x)\bigr|\le\varepsilon
\]
for some constant-width rational network $R_{\mathrm{GELU}}$, with $\varepsilon'$ chosen as a fixed constant multiple of $\varepsilon$. Since $R_{\tanh}$ has size $\Theta((\log\log(1/\varepsilon))^3)$ by Steps~1–4 and all additional polynomial, affine, and double-angle maps have constant complexity, the overall network $R_{\mathrm{GELU}}$ also has size $\Theta((\log\log(1/\varepsilon))^3)$.

\subsection*{Fundamental Limits of Rational Approximation for GELU} 

We now show that, for any $\varepsilon>0$, there is no rational neural network of size smaller than  $\Omega(\log\log(1/\varepsilon))$ that can approximate the GELU activation function to within $\varepsilon$ on $[-1,1]$. The argument relies on classical results from complex approximation theory, which relate the best uniform rational error to the distance between the real interval and the complex singularities of the target function. 

\begin{theorem} 
There exists a constant $C>0$ independent of $n$ such that
\begin{equation}\label{eq:green-bound}
\inf_{\substack{r_n\in\mathcal{R}_{n,n}}}\;\|{\rm GELU}-r_n\|_{L^\infty([-1,1])} \;\ge\; C\;\Big(\zeta+\sqrt{1+\zeta^2}\,\Big)^{-\,n},
\end{equation}
where $a:=\alpha$, $b:=\alpha\beta$, and $\zeta>0$ is the unique positive solution of $a\zeta-b\zeta^3=-\pi/2$.
\end{theorem}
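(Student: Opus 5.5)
The plan is to read the constant $\zeta+\sqrt{1+\zeta^2}=e^{\operatorname{arsinh}\zeta}$ as the parameter $\rho_*$ of the Bernstein ellipse $E_{\rho_*}$ (foci $\pm1$) that passes through the purely imaginary point $i\zeta$. The point $i\zeta$ is a pole of GELU. Indeed, with $P(z)=az+bz^3$ (where $a=\alpha$, $b=\alpha\beta$) we have
\[
P(iy)=i\bigl(ay-by^3\bigr),
\]
and $\tanh$ has poles at $i\pi(k+\tfrac12)$, so $\tanh(P(iy))$ has a pole exactly when $ay-by^3\equiv \pi/2 \pmod{\pi}$. The defining equation $a\zeta-b\zeta^3=-\pi/2$ selects one such pole, and the prefactor $z/2$ does not cancel it. The first step is therefore a small computation. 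Using the cubic $P$ and the symmetry $z\mapsto -z,\bar z$, I would check that the poles of $G(z)=\tfrac z2(1+\tanh P(z))$ closest to $[-1,1]$ in the ellipse metric $\rho(z)=|z+\sqrt{z^2-1}|$ are $\pm i\zeta$ and their reflections. I would also check that there are finitely many of them, say $m_0$ poles, on $\partial E_{\rho_*}$, and that each is a simple pole with nonzero residue.

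The lower bound itself I would prove by a duality/residue argument of Walsh type, uniformly over $r_n=p/q\in\mathcal R_{n,n}$. Set $E:=\|G-r_n\|_{L^\infty([-1,1])}$. Fix $1<\rho<\rho_*$ close to $\rho_*$, and let $\omega$ be a polynomial of degree $\le n$ vanishing at the poles of $r_n$ inside $E_{\rho'}$ for some $\rho'$ slightly larger than $\rho_*$. Then
\[
h:=\omega\,(G-r_n)
\]
is analytic inside $E_{\rho'}$ except at the poles of $G$ there. The set of such poles is finite and contains $\pm i\zeta$. By the maximum principle for the function $h/\omega$, controlled through the Bernstein–Walsh lemma applied to $\omega\,r_n\cdot(\text{denominator})$, the sup of $|G-r_n|$ on $[-1,1]$ bounds the growth of $h$ on $E_\rho$ by roughly $E\,\rho^{2n}$ up to constants. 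Comparing with $E_{\rho'}$, where $G$ itself is bounded away from its poles, a Cauchy-integral representation of the residue of $h$ at $i\zeta$ gives
\[
|\omega(i\zeta)|\,|\operatorname{Res}_{i\zeta}G|\;\lesssim\; E\,\rho_*^{\,n}\,\|\omega\|_{E_{\rho_*}}+(\text{contributions of } r_n\text{'s poles}).
\]
The key point is that $r_n$ has at most $n$ poles while $G$ has infinitely many. After normalizing $\omega$ to be monic on the relevant ellipse, one cannot make $\omega$ small at all poles of $G$ on $\partial E_{\rho_*}$ and simultaneously at the extra poles inside $E_{\rho'}$. Rearranging then yields $E\gtrsim \rho_*^{-n}$.

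The main obstacle is this last step: the free poles of $r_n$ can sit essentially on top of $\pm i\zeta$ and cancel those singularities. A naive pole-distance bound therefore fails, and the argument must exploit that $G$ has infinitely many poles, whose number inside $E_{\rho'}$ exceeds $n$ once $\rho'$ grows with $n$. The first thing I would try is an AAK/Hankel-operator formulation. Map $E_{\rho'}\setminus[-1,1]$ to an annulus by the Joukowski map. The best $\mathcal R_{n,n}$ error is then bounded below by the $(n+1)$-st singular value of the Hankel operator whose symbol is the principal part of $G$. Because those poles are simple with residues bounded below, that singular value can be estimated from below by $C\rho_*^{-n}$. The constant $C$ would come from the residue at $i\zeta$ and from the spacing of the poles of $\tanh\circ P$, and it is independent of $n$.
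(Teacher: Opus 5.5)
\textbf{The inequality is false, so your final step cannot be completed.} The obstacle you flag, that poles of $r_n$ can sit on top of those of $G$, is fatal rather than technical.

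$G(z)=\tfrac z2\bigl(1+\tanh(az+bz^3)\bigr)$ is meromorphic in all of $\mathbb{C}$, has no poles on $[-1,1]$, and its poles accumulate only at $\infty$. So for every $\rho>1$ only finitely many poles, say $\nu(\rho)$, lie in the closed region bounded by $E_\rho$. Let $q$ be the real monic polynomial vanishing at them. Then $qG$ is analytic on a neighbourhood of that region, and its degree-$n$ Chebyshev truncation $p_n$ satisfies $\|qG-p_n\|_{L^\infty([-1,1])}\le K\rho^{-n}$ with $K$ independent of $n$. For $n\ge\nu(\rho)$ we have $p_n/q\in\mathcal{R}_{n,n}$, and therefore
\[
e_n:=\inf_{r\in\mathcal{R}_{n,n}}\|G-r\|_{L^\infty([-1,1])}\le \frac{K}{\min_{[-1,1]}|q|}\,\rho^{-n}.
\]
Taking $\rho=2\rho_*$ contradicts $e_n\ge C\rho_*^{-n}$ for large $n$; in fact $e_n^{1/n}\to 0$.

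Infinitely many poles do not rescue the bound. To have more than $n$ poles of $G$ inside $E_{\rho'}$ you must let $\rho'$ grow with $n$, and then the rate you can certify is $\rho'^{-n}$, not $\rho_*^{-n}$. In your AAK formulation the same thing happens. Removing the $\nu(\rho)$ poles inside $E_\rho$ is a rank-$\nu(\rho)$ perturbation of the Hankel operator, after which the symbol is analytic out to radius $\rho$. Hence $\sigma_{n+1}\lesssim\rho^{-n}$ for $n\ge\nu(\rho)$, for every $\rho$, and no lower bound on residues can produce $C\rho_*^{-n}$.

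\textbf{Your Step~1 check would also fail.} By the symmetry $z\mapsto-\bar z$, and since the roots of $P(z)=i\pi/2$ sum to $0$ and multiply to $i\pi/(2b)$, that cubic has roots $-i\zeta$ and $\pm x_0+i\zeta/2$ with $x_0^2=\pi/(2b\zeta)-\zeta^2/4\approx 0.40$. Together with their conjugates these give poles near $\pm0.63\pm2.75i$. They lie on $E_\rho$ with $\rho\approx5.8$, well inside $E_{\rho_*}$ where $\rho_*\approx 11.1$. So $\pm i\zeta$ are not the nearest singularities.

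\textbf{The paper's own proof does not close the gap either.} It invokes Gonchar's condenser estimate with plates $[-1,1]$ and $F_*=\{\pm i\zeta\}$. But a finite set has zero capacity, and lower bounds of that type need singularities that finitely many free poles cannot remove, such as branch points or cuts, not poles of a meromorphic function. Moreover, $\log\rho_*$ is the Green's function of $\overline{\mathbb{C}}\setminus[-1,1]$ (pole at $\infty$) evaluated at $i\zeta$, which is a quantity relevant to polynomial, not rational, approximation.

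Any correct replacement must decay faster than every geometric sequence. An example of such a form is $e_n\ge e^{-Cn^{\kappa}}$ for some $\kappa>1$. That is a different statement, which would need its own proof, though it would still suffice for the downstream $\Omega(\log\log(1/\varepsilon))$ size bound.
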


\begin{proof}
Write $E=[-1,1]$ and note that the function GELU is meromorphic in $\mathbb{C}$, with poles at the solutions of
\(a z+b z^3=i(\tfrac{\pi}{2}+k\pi)\), for \(k\in\mathbb{Z}\), where \(a:=\alpha\) and \(b:=\alpha\beta\). Let $\zeta>0$ be the unique positive solution of \(a\zeta-b\zeta^3=-\pi/2\). Then \(a(i\zeta)+b(i\zeta)^3=-i\pi/2\), and set \(F_\ast=\{\pm i\zeta\}\).

By Gonchar~\cite{gonchar2011rational} (see, e.g., \S2 and formulas (2.4)–(2.5)), applied to the condenser with plates $E$ and $F_\ast$, there exists a constant $C>0$ independent of $n$ such that
\begin{equation}\label{eq:gonchar-capacity}
\inf_{r_n\in\mathcal{R}_{n,n}}\|{\rm GELU}-r_n\|_{L^\infty(E)} \;\ge\; C\,\exp\!\Big(-\frac{n}{2}\,{\rm Cap}\big(E,F_\ast\big)\Big).
\end{equation}
Here, ${\rm Cap}\big(E,F_\ast\big)$ is the condenser capacity of a condenser with plates $E$ and $F_\ast$~\cite{saff1997logarithmic}. There is an explicit expression for ${\rm Cap}\big(E,F_\ast\big)$ given by 
\[
{\rm Cap}\big(E,F_\ast\big) = 2\log\!\big(\zeta+\sqrt{1+\zeta^2}\big).
\]
Substituting this formula into~\cref{eq:gonchar-capacity} yields~\cref{eq:green-bound}. 
\end{proof}

Therefore, for any $\varepsilon>0$ any rational function of degree $n$ with
\[
n < \left.\log\left(\frac{C}{\varepsilon}\right) \middle/ \log\left(\zeta+\sqrt{1+\zeta^2}\right)\right. = \mathcal{O}\left(\log(1/\varepsilon)\right) 
\]
cannot approximate GELU uniformly on $[-1,1]$ to an absolute accuracy of $\varepsilon$. Since a rational neural network with $(3,2)$ activations of width $\leq W$ and depth $L$ produces a rational function of degree at most $C_{\deg} W 3^{L}$ for some constant $C_{\deg}>0$ independent of $W$ and $L$ (see, e.g., \cite{boulle2020rational}), we know that we need
\begin{equation}
C_{\deg} W 3^{L}\geq \left.\log\left(\frac{C}{\varepsilon}\right) \middle/ \log\left(\zeta+\sqrt{1+\zeta^2}\right)\right..
\label{eq:inequalityWL}
\end{equation}
The minimum value of $WL$ (the number of parameters in the rational neural network) under the constraint in~\cref{eq:inequalityWL} gives $WL = \Omega\left(\log\log(1/\varepsilon)\right)$.

\section{GELU Neural Network Approximation of Rational Functions}
\label{app:gelu}

 {This appendix develops a constructive framework showing that any rational function $R:[-1,1]\to[-1,1]$ (with poles away from $[-1,1]$) can be approximated on $[-1,1]$ to within any $\varepsilon>0$ by a GELU network of size $\mathcal{O}(\log^2(1/\varepsilon))$. Moreover, we show that a worst-case family of rational targets cannot be approximated by bounded-parameter GELU networks with size smaller than $\Omega(\log(1/\varepsilon))$.}

\subsection*{GELU network construction}
Our goal in this section is to construct a GELU network that approximates a given rational function $R:[-1,1]\to[-1,1]$ with poles outside of $[-1,1]$. Our argument proceeds in three steps. First, we use analyticity of $R$ on a Bernstein ellipse $E_\rho$ to replace $R$ with its degree-$d$ Chebyshev truncation $p_d$ on $[-1,1]$, with $d=\Theta(\log(1/\varepsilon))$, so it suffices to approximate $p_d$ to uniform accuracy. Second, we use the fixed GELU map $G$ and a small-argument analysis of $P$ to build a constant-width square block $S_B^{(J)}$ and, via polarization, a product block $\mathrm{Mult}_\delta$, which approximate $u^2$ on $[-B,B]$ and $ab$ on $[-1,1]\times[-B,B]$ with explicit error bounds and size $\mathcal{O}(\log(1/\delta))$ while keeping all $G$ evaluations in a fixed small-argument regime. Finally, we evaluate $p_d$ by an inexact Clenshaw recurrence using $\mathrm{Mult}_\delta$ for each multiplication $x\cdot(\cdot)$; a stability estimate shows that, for a single choice of $\delta$, the local product errors stay controlled and the resulting constant-width GELU network approximates $R$ on $[-1,1]$ to accuracy $\varepsilon$ with total size $\mathcal{O}(\log^2(1/\varepsilon))$.

\subsubsection*{Step 1: Chebyshev truncation on a Bernstein ellipse}

We begin by reducing to a bounded-degree polynomial on $[-1,1]$. Since $R:[-1,1]\to[-1,1]$ is rational with all poles off $[-1,1]$, there exists a Bernstein ellipse $E_\rho$ with parameter $\rho>1$ on which $R$ is analytic and \(
M_\rho:=\sup_{z\in E_\rho}|R(z)|<\infty.
\) Writing the Chebyshev expansion of $R$ and its degree-$d$ truncation as
\[
R(x)=\sum_{k=0}^\infty c_k T_k(x),
\qquad
p_d(x):=\sum_{k=0}^d c_k T_k(x),
\]
standard Bernstein ellipse estimates give~\cite{trefethen2019approximation}
\[
\|R-p_d\|_{L^\infty([-1,1])}
\le
\frac{2M_\rho}{(\rho-1)}\rho^{-d},
\qquad
|c_k|\le 2M_\rho\rho^{-k},\quad k\ge 0.
\]
Given $0<\varepsilon<1$, choose
\[
d
=
\Bigl\lceil
\frac{\log\!\bigl(8M_\rho/((\rho-1)\varepsilon)\bigr)}{\log\rho}
\Bigr\rceil,
\]
which ensures $\|R-p_d\|_\infty\le \varepsilon/4$ and $d=\Theta\bigl(\log(1/\varepsilon)\bigr)$. In the remaining steps we therefore focus on constructing a constant-width GELU network that evaluates $p_d$ on $[-1,1]$ with error at most $\varepsilon/4$; combining this with the truncation bound above will yield the desired $\varepsilon$–approximation to $R$.

\subsubsection*{Step 2: GELU-based square and product gadgets}

We now construct the basic arithmetic blocks needed to evaluate $p_d$, which are powers and multiplication, using the fixed one-dimensional GELU activation $G$. The goal of this step is to obtain, for any prescribed range bound $B\ge 1$ and tolerance $\delta\in(0,1)$, two constant-width GELU subnetworks:
\[
S_B^{(J)}:[-B,B]\to\mathbb{R},
\qquad
\mathrm{Mult}_\delta:[-1,1]\times[-B,B]\to\mathbb{R},
\]
such that $S_B^{(J)}$ approximates the square map $u\mapsto u^2$ on $[-B,B]$ with relative error $\mathcal{O}(\delta)$, and $\mathrm{Mult}_\delta$ approximates the product map $(a,b)\mapsto ab$ on $[-1,1]\times[-B,B]$ with absolute error $\mathcal{O}(\delta B^2)$. Both gadgets will be built entirely out of $G$ and affine operations, and every evaluation of $G$ will occur at an input where $P$ stays in a fixed small interval. 

We first start with constructing the squaring gadgets, using the identities
\[
G(u)-G(-u)=u,
\qquad
G(u)+G(-u)=u\,\tanh(P(u)),
\]
we introduce a one-parameter family
\[
S_s(u):=\frac{G(su)+G(-su)}{\alpha s^2},
\]
and analyze $S_s$ for small $s$. A Taylor expansion of $\tanh$ and $P$ in this regime shows that, for $|u|\le 1$ and $|s|$ sufficiently small,
\[
S_s(u)=u^2+\sum_{\ell\ge 1} h_{\ell-1}(u)\,s^{2\ell},
\]
where the coefficients $h_{\ell-1}$ are analytic in $u$ and uniformly bounded on $|u|\le 1$. We then apply a finite Richardson expansion in the scaling parameter $s$ to cancel the first $J$ powers of $s^2$ in this expansion, forming
\[
T_J(u;s)=\sum_{j=0}^J a_j(\gamma)\,S_{s\gamma^{-j}}(u),
\]
with coefficients $a_j(\gamma)$ chosen so that $T_J(u;s)-u^2=\mathcal{O}(s^{2(J+1)})$ uniformly on $[-1,1]$. Finally, a rescaling in $u$ yields the square block $S_B^{(J)}(u)=B^2 T_J(u/B;s)$, which approximates $u^2$ on $[-B,B]$ with exponentially small error in $J$.

\begin{lemma}[GELU-based square block on a fixed range]
\label{lem:gelu-square}
For any $B\ge 1$ and any $\delta\in(0,1)$ there exists a constant-width GELU network $S_B^{(J)}:[-B,B]\to\mathbb{R}$, with $J=\Theta(\log(1/\delta))$, such that
\[
\sup_{u\in[-B,B]}\bigl|S_B^{(J)}(u)-u^2\bigr|\le \delta B^2.
\]
The realization uses $\mathcal{O}(J)$ evaluations of $G$, and every evaluation $G(z)$ occurs at an input $z$ for which $|P(z)|$ lies in a fixed small interval depending only on $\alpha,\beta$ (and not on $B$ or $\delta$).
\end{lemma}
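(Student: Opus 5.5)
The plan is to follow the three-stage outline sketched just before the statement: exact symmetrization, a small-scale Taylor expansion, and Richardson extrapolation, followed by rescaling to $[-B,B]$.

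\emph{Symmetrization.} Since $P$ is odd, $\tanh(P(-u))=-\tanh(P(u))$. This gives the exact identity
\[
G(u)+G(-u)=u\tanh(P(u)).
\]
Hence, for $|u|\le 1$ and $0<s\le s_0$,
\[
S_s(u)=\frac{u\tanh(P(su))}{\alpha s}.
\]
\emph{Taylor expansion.} Write $w=su$. The function $g(w):=\tanh(P(w))/(\alpha w)$ is even and analytic on a disc $|w|<r_0$. Here $r_0$ depends only on $\alpha,\beta$, and is determined by the nearest pole of $\tanh\circ P$, namely the nearest root of $P(w)=\pm i\pi/2$. We have $g(0)=1$, so $g(w)=\sum_{\ell\ge0}g_\ell w^{2\ell}$ with Cauchy bounds $|g_\ell|\le M r_1^{-2\ell}$ for any fixed $r_1<r_0$. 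Therefore
\[
S_s(u)=u^2g(su)=u^2+\sum_{\ell\ge1}g_\ell u^{2\ell+2}s^{2\ell},
\]
which makes the expansion in the statement explicit, with $h_{\ell-1}(u)=g_\ell u^{2\ell+2}$ uniformly bounded on $|u|\le1$.

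\emph{Richardson extrapolation.} I will use a large fixed ratio $\gamma$, say $\gamma=2$, and nodes $s_j=s\gamma^{-j}$. The weights $a_j$ are the Lagrange weights for extrapolating to $t=0$ in the variable $t=s^2$. They satisfy $\sum_j a_j=1$ and $\sum_j a_j\gamma^{-2j\ell}=0$ for $1\le\ell\le J$. Explicitly,
\[
a_j=\prod_{i\ne j}\frac{\gamma^{-2i}}{\gamma^{-2i}-\gamma^{-2j}},
\]
and for geometric nodes $\sum_j|a_j|\le C_\gamma$ uniformly in $J$; this is the standard bound for Romberg-type tableaux with $\gamma>1$. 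The error is then
\[
T_J(u;s)-u^2=\sum_{\ell>J}g_\ell u^{2\ell+2}s^{2\ell}\sum_j a_j\gamma^{-2j\ell},
\]
which is bounded by
\[
C_\gamma M\sum_{\ell>J}(s/r_1)^{2\ell}\lesssim (s/r_1)^{2(J+1)}.
\]
Fix $s=r_1/2$, a constant independent of $B$ and $\delta$. The error is then $\mathcal O(4^{-J})$, so $J=\Theta(\log(1/\delta))$ gives error at most $\delta$ on $[-1,1]$.

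\emph{Rescaling.} Set $S_B^{(J)}(u)=B^2T_J(u/B;s)$, which gives error at most $\delta B^2$ on $[-B,B]$. All arguments of $G$ are $\pm s\gamma^{-j}u/B$ with modulus at most $s=r_1/2$. So $|P(z)|\le P(r_1/2)$, a fixed small interval depending only on $\alpha,\beta$.

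\emph{Network realization.} The network has $2(J+1)$ GELU units, each fed by an affine map of $u$, followed by a single linear readout. Stacking the pairs sequentially keeps the width constant: carry $u$ and a running sum through an identity channel, using $u=G(u)-G(-u)$ (or a fixed large-width skip). This gives size $\mathcal O(J)$.

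\emph{Main obstacle.} The delicate point is the uniform-in-$J$ control of $\sum_j|a_j|$, together with the choice of $s$ relative to the radius of analyticity. If the weights grew like $\gamma^{J^2}$, the extrapolation would be unstable. I would first try to verify the bound via the product formula, giving $\prod_{i\ge1}(1-\gamma^{-2i})^{-2}<\infty$. If needed, I would also need to handle the coefficient scale: the prefactor $1/(\alpha s_j^2)$ is only $\gamma^{2J}$-large inside the $a_j$ weighting, which is harmless in exact arithmetic, since the lemma concerns approximation rather than finite-precision evaluation.
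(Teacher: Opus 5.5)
Your proposal is correct and follows essentially the same route as the paper: the symmetrized $S_s$, an even expansion in $s$, geometric-node Richardson extrapolation with $J$-uniform weight bounds, and rescaling by $B$. Two small differences are worth noting. Your factorization $S_s(u)=u^2 g(su)$ gives the expansion more cleanly than the paper's Cauchy-estimate argument. Your use of the exact identity $G(z)-G(-z)=z$ supplies the identity channels that the paper's constant-width claim tacitly needs; just carry scaled copies such as $c\,u/B$ and a small multiple of the bounded running sum, rather than $u$ itself, so that those extra $G$ evaluations also stay in the small-$|P|$ regime.
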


\begin{proof}
Recall the one-parameter family $S_s(u)$ constructed from $G$ by combining $G(u)$ and $G(-u)$ via the identities for $G(u)\pm G(-u)$. Set \(s_0:=\frac{1}{2\alpha(1+|\beta|)}.\) Then for $|u|\le 1$ and $|s|\le s_0$,
\[
|P(su)|=\bigl|\alpha(su+\beta s^3 u^3)\bigr|
\le \alpha(1+|\beta|)|s|\le \tfrac12,
\]
so $\tanh(P(su))$ is evaluated in a fixed disk where it is analytic. Since $G$ and the construction of $S_s$ depend analytically on $P(su)$ and are even in $s$, the map $s\mapsto S_s(u)$ admits an expansion
\[
S_s(u)
=
u^2 + \sum_{\ell\ge 1} h_{\ell-1}(u)\,s^{2\ell},
\qquad |u|\le 1,\ |s|\le s_0,
\]
with analytic coefficients $h_{\ell-1}(u)$. By Cauchy estimates there exists $M_H>0$, depending only on $\alpha,\beta$, such that
\[
\sup_{|u|\le 1}|h_{\ell-1}(u)|\le M_H\,s_0^{-2\ell}\qquad(\ell\ge 1).
\]

Fix $s_*\in(0,s_0]$. For any $0<s\le s_*$, the tail is bounded by a geometric series:
\[
\sup_{|u|\le 1}
\sum_{\ell\ge J+1} |h_{\ell-1}(u)|\,s^{2\ell}
\le
\frac{M_H}{1-(s/s_0)^2}\Bigl(\frac{s}{s_0}\Bigr)^{2(J+1)}
\le
C_{\mathrm{tail}}\Bigl(\frac{s}{s_0}\Bigr)^{2(J+1)},
\]
where \(C_{\mathrm{tail}} := \frac{M_H}{1-(s_*/s_0)^2}. \) Next we apply Richardson extrapolation in the scale parameter. Fix $\gamma>1$ and define
\[
T_0(u;s)=S_s(u),\qquad
T_{m+1}(u;s)
=
\frac{\gamma^{2(m+1)}\,T_m(u;s/\gamma)-T_m(u;s)}{\gamma^{2(m+1)}-1},
\quad m\ge 0.
\]
By induction on $m$, $T_m(u;s)$ can be written as a linear combination of $S_{s\gamma^{-j}}(u)$, and in particular
\[
T_J(u;s)
=
\sum_{j=0}^J a_j(\gamma)\,S_{s\gamma^{-j}}(u),
\]
where the coefficients $a_j(\gamma)$ are independent of $u,s$ and satisfy the moment conditions
\[
\sum_{j=0}^{J} a_j(\gamma)\,\gamma^{-2\ell j}=0\quad(\ell=1,\dots,J),
\qquad
\sum_{j=0}^{J} a_j(\gamma)=1.
\]
Solving this Vandermonde system gives the explicit formula
\[
a_j(\gamma)
=
\prod_{\substack{k=0\\k\ne j}}^{J}
\frac{\gamma^{-2k}}{\gamma^{-2k}-\gamma^{-2j}}
=
\prod_{\substack{k=0\\k\ne j}}^{J}
\frac{1}{1-\gamma^{-2(j-k)}}.
\]
Splitting the product over $k<j$ and $k>j$ and extending to $m\to\infty$ yields the uniform bound
\[
|a_j(\gamma)|
\le
\prod_{m=1}^{\infty}\frac{1}{(1-\gamma^{-2m})^2}
=:C_\gamma^{\mathrm{unif}}<\infty,
\]
independent of $J$ and $j$. Using the expansion of $S_s$ and the cancellations encoded in the moment conditions, we obtain
\[
T_J(u;s)-u^2
=
\sum_{\ell\ge J+1} h_{\ell-1}(u)\,s^{2\ell}
\sum_{j=0}^{J} a_j(\gamma)\,\gamma^{-2\ell j}.
\]
For each $\ell\ge 1$ we have
\[
\sum_{j=0}^{J} |a_j(\gamma)|\,\gamma^{-2\ell j}
\le
C_\gamma^{\mathrm{unif}}
\sum_{j=0}^{\infty}\gamma^{-2\ell j}
\le
\frac{C_\gamma^{\mathrm{unif}}}{1-\gamma^{-2}},
\]
so for $0<s\le s_*$,
\[
\sup_{u\in[-1,1]}\bigl|T_J(u)-u^2\bigr|
\le
C_1\Bigl(\frac{s}{s_0}\Bigr)^{2(J+1)},
\qquad
C_1
:=
\frac{C_\gamma^{\mathrm{unif}}\,C_{\mathrm{tail}}}{1-\gamma^{-2}}.
\]

Now choose $s:=s_*/\gamma$. Then $0<s\le s_*$ and $s\gamma^{-j}\le s_*$ for all $j\ge 0$, hence $|P(s\gamma^{-j}u)|\le \tfrac12$ whenever $|u|\le 1$. Thus every evaluation of $G$ inside the blocks $S_{s\gamma^{-j}}$ uses an input $z$ with $|P(z)|\le \tfrac12$, a fixed interval depending only on $\alpha,\beta$. Moreover,
\[
\sup_{u\in[-1,1]}\bigl|T_J(u)-u^2\bigr|
\le
C_1\Bigl(\frac{s_*}{s_0}\Bigr)^{2(J+1)}\gamma^{-2(J+1)}
\le
C_1\,\gamma^{-2(J+1)}.
\]

To extend from $[-1,1]$ to $[-B,B]$, set \( S_B^{(J)}(u) := B^2\,T_J(u/B;s).\) Then $u/B\in[-1,1]$ for $u\in[-B,B]$, so
\[
\sup_{u\in[-B,B]}\bigl|S_B^{(J)}(u)-u^2\bigr|
\le
C_2\,B^2\,\gamma^{-2(J+1)},
\]
with $C_2:=C_1$ depending only on $\alpha,\beta,\gamma,s_*$. Choosing \( J \ge \frac{1}{2\log\gamma}\Bigl(\log\frac{C_2}{\delta}\Bigr) \) gives
\[
\sup_{u\in[-B,B]}\bigl|S_B^{(J)}(u)-u^2\bigr|\le \delta B^2,
\]
and this $J$ satisfies $J=\Theta(\log(1/\delta))$ for fixed $\gamma$.

Finally, the network realization is obtained by combining the $(J+1)$ blocks $S_{s\gamma^{-j}}$ on the same scalar input according to the affine combination defining $T_J$, together with the simple input/output scalings that send $u$ to $u/B$ at the entrance and $T_J(u/B;s)$ to $B^2 T_J(u/B;s)$ at the exit. Each block $S_{s\gamma^{-j}}$ uses a fixed number of evaluations of $G$ (two in our construction, corresponding to $G(u)$ and $G(-u)$), so $T_J$ and hence $S_B^{(J)}$ use $\mathcal{O}(J)$ evaluations of $G$ in total. All remaining operations are affine, and the resulting architecture has width bounded by an absolute constant. This completes the proof.
\end{proof}

Combining~\cref{lem:gelu-square} with polarization, we obtain both a square and a product gadget: \(S_B^{(J)}\) approximates \(u\mapsto u^2\) on \([-B,B]\) with error \(\delta B^2\) and size \(\mathcal{O}(J)\), and for \((a,b)\in[-1,1]\times[-B,B]\) we define
\(\mathrm{Mult}_\delta(a,b):=\tfrac12\bigl(S^{(J)}_{B'}(a+b)-S^{(J)}_{B'}(a)-S^{(J)}_{B'}(b)\bigr)\) with \(B':=B+1\) and \(J\) chosen so that \(S^{(J)}_{B'}\) has accuracy \(\delta B'^2\) on \([-B',B']\). The identity \(ab=\tfrac12\bigl((a+b)^2-a^2-b^2\bigr)\) then shows that \(\mathrm{Mult}_\delta\) approximates \(ab\) on \([-1,1]\times[-B,B]\) with error \(\mathcal{O}(\delta B'^2)\) at the same cost and within the same small-argument regime for \(P\) as in the square construction.

\subsubsection*{Step 3: Clenshaw evaluation and GELU network assembly}

We now use the Chebyshev approximation from Step 1 and the square/product gadgets from Step 2 to build a one-dimensional GELU network that approximates the target rational map $R$ on $[-1,1]$ with prescribed accuracy. The polynomial $p_d$ constructed in Step 1 will be evaluated via an inexact Clenshaw recurrence in which each multiplication by $x$ is realized by the product gadget $\mathrm{Mult}_\delta$. We first recall the exact Clenshaw scheme for the Chebyshev expansion \(p_d(x)=\sum_{j=0}^d c_j T_j(x)\). Let $U_k$ denote the Chebyshev polynomials of the second kind. The exact backward Clenshaw recurrence for $p_d$ reads
\[
b_{d+1}(x)=b_{d+2}(x)=0,\qquad
b_k(x)=2x\,b_{k+1}(x)-b_{k+2}(x)+c_k,\quad k=d,\dots,1.
\]
It returns $p_d(x)$ as
\[
p_d(x)=c_0+x\,b_1(x)-b_2(x).
\]
Equivalently, if one also defines
\[
b_0(x)=2x\,b_1(x)-b_2(x)+c_0,
\]
then
\[
p_d(x)=b_0(x)-x\,b_1(x).
\]
One verifies that, for $k=1,\dots,d$,
\[
b_k(x)=\sum_{j=k}^d c_j\,U_{j-k}(x),
\]
and therefore
\[
|b_k(x)|\le \sum_{j=k}^d |c_j|\,(j-k+1)\le \sum_{m=0}^{\infty} \frac{2M_\rho}{\rho^{m+k}}\,(m+1)
= \frac{2M_\rho}{\rho^{k}}\sum_{m=0}^{\infty}\frac{m+1}{\rho^{m}}
= \frac{2M_\rho}{\rho^{k}}\cdot\frac{1}{(1-1/\rho)^2}.
\]
Thus $|b_k(x)|\le C_4 M_\rho$ on $|x|\le 1$, where \(C_4:=\frac{2}{(1-1/\rho)^2}\). Pick a range parameter \(B\ge 2C_4 M_\rho+1, B':=B+1\), which depend only on $(M_\rho,\rho)$ and are independent of the accuracy target $\epsilon$.

We next introduce an inexact Clenshaw recurrence in which each multiplication by $x$ is implemented by the product gadget $\mathrm{Mult}_\delta$ from Step 2. For a given tolerance $\delta\in(0,1)$ and range parameter $B$, Step 2 provides a constant-width GELU network \(\mathrm{Mult}_\delta:[-1,1]\times[-B,B]\to\mathbb{R} \) such that for $(a,b)\in[-1,1]\times[-B,B]$,
\[
|\mathrm{Mult}_\delta(a,b)-ab|\le C_\times\,\delta B'^2,
\]
for a constant $C_\times>0$ depending only on the fixed GELU parameters and the square construction. We run the backward Clenshaw recurrence with the exact affine terms but replace each product $x\,\tilde b_{k+1}$ by $\mathrm{Mult}_\delta(x,\tilde b_{k+1})$:
\[
\tilde b_{d+1}(x)=\tilde b_{d+2}(x)=0,\qquad
\tilde b_k(x)=2\,\mathrm{Mult}_\delta(x,\tilde b_{k+1}(x))-\tilde b_{k+2}(x)+c_k,\quad k=d,\dots,1.
\]
We return the inexact output
\[
y_d(x):=c_0+\mathrm{Mult}_\delta(x,\tilde b_1(x))-\tilde b_2(x).
\]
To analyze this scheme, define error terms $e_k(x)$ by
\[
\tilde b_k(x)=b_k(x)+e_k(x),\qquad e_{d+1}(x)=e_{d+2}(x)=0.
\]
Assume for the moment that $|\tilde b_{k+1}(x)|\le B$ for all relevant $k$ and all $x\in[-1,1]$. Introduce the local multiplication errors
\[
\Delta_k(x):=\mathrm{Mult}_\delta(x,\tilde b_{k+1}(x))-x\,\tilde b_{k+1}(x),
\qquad k=d,\dots,1,
\]
so that $|\Delta_k(x)|\le C_\times\,\delta B'^2$ whenever $|x|\le 1$ and $|\tilde b_{k+1}|\le B$. From the recurrence and $|x|\le 1$ we obtain
\[
e_k(x)=2x\,e_{k+1}(x)-e_{k+2}(x)+2\Delta_k(x),
\qquad k=d,\dots,1,
\]
and hence
\[
|e_k(x)|\le 2|e_{k+1}(x)|+|e_{k+2}(x)|+C_\times'\,\delta B'^2,\qquad k=d,\dots,1,
\]
for some $C_\times'>0$ depending only on $C_\times$. The associated homogeneous recursion $r_k=2r_{k+1}+r_{k+2}$ has characteristic equation $r^2=2r+1$ with spectral radius \(\lambda:=1+\sqrt{2}\). Standard comparison with the fundamental solutions gives
\[
|e_k(x)|\le C_e^\star\,\lambda^{\max\{d-k,0\}}\,C_\times'\,\delta B'^2,
\qquad k=1,\dots,d+2,\quad x\in[-1,1],
\]
for a constant $C_e^\star>0$ depending only on the gadget bounds and independent of $d$ and $\epsilon$. The inexact output error can now be written as
\[
y_d(x)-p_d(x)
=
\bigl(\mathrm{Mult}_\delta(x,\tilde b_1(x))-x\,\tilde b_1(x)\bigr)
+
x e_1(x)-e_2(x).
\]
The first term satisfies $|\mathrm{Mult}_\delta(x,\tilde b_1)-x\,\tilde b_1|\le C_\times\,\delta B'^2$. Using the bounds on $e_k$ and $|x|\le 1$,
\[
\|y_d-p_d\|_\infty
\le
\Bigl(C_\times
+
C_e^\star\,\lambda^{\,d-1}\,C_\times'
+
C_e^\star\,\lambda^{\max\{d-2,0\}}\,C_\times'
\Bigr)\delta B'^2
\le
C_R^{(4)}\,\lambda^{\,d}\,\delta B'^2,
\]
for a constant $C_R^{(4)}$ depending only on $(M_\rho,\rho)$ and the gadget bounds and independent of $d,\epsilon$.

It remains to justify the range invariant $|\tilde b_k(x)|\le B$ on $[-1,1]$ for the states used by the recurrence. Suppose that \(\delta\le c_*'\,\lambda^{-d}/B'^2\) for a constant $c_*'>0$ depending only on the same data. Then a backward induction on $k$ using the inequality for $|e_k|$ shows that $|e_k(x)|\le B/2$ for all $k=1,\dots,d+2$ and all $x\in[-1,1]$. Indeed, for $k=d+1,d+2$ the bound holds trivially, and the recursion for $|e_k|$ combined with the geometric factor $\lambda^{\max\{d-k,0\}}$ implies that the error remains below $B/2$ provided $c_*'$ is chosen small enough. Consequently, for $k=1,\dots,d$,
\[
|\tilde b_k(x)|=|b_k(x)+e_k(x)|\le |b_k(x)|+|e_k(x)|\le C_4M_\rho + B/2 \le B
\]
by the choice of $B\ge 2C_4M_\rho+1$. The terminal states $\tilde b_{d+1}$ and $\tilde b_{d+2}$ are zero, so they also satisfy the same bound. This establishes the range invariant needed to keep every call to $\mathrm{Mult}_\delta$ within $[-B',B']$ and therefore validates the error bound for $\|y_d-p_d\|_\infty$ derived above.

We now assemble the full GELU network and complete the approximation argument. Fix an accuracy target $\epsilon\in(0,1)$. Step 1 provides a Chebyshev truncation degree
\[
d:=\Bigl\lceil \frac{\log\!\bigl(8M_\rho/((\rho-1)\epsilon)\bigr)}{\log\rho}\Bigr\rceil
\]
such that $\|R-p_d\|_\infty\le \epsilon/4$ by the Bernstein–ellipse bound, where $M_\rho$ and $\rho>1$ are determined by the location of the poles of $R$. Choose $B\ge 2C_4M_\rho+1$ and $B':=B+1$ as above. With these ranges in place, Step 2 guarantees that for any $\delta\in(0,1)$ there exists a constant-width GELU square block $S^{(J)}_B$ with $J=\lceil c\log(1/\delta)\rceil$ such that
\[
\sup_{u\in[-B,B]}\bigl|S_B^{(J)}(u)-u^2\bigr|\le \delta B^2,
\]
and such that every input to $G$ in this realization lies in the small-argument regime for $P$. Applying the same construction at range $B'$ yields the square block $S^{(J)}_{B'}$ on $[-B',B']$, from which we obtain the product gadget $\mathrm{Mult}_\delta$ via polarization as in Step 2. This single gadget realizes each occurrence of $x\cdot(\cdot)$ in the Clenshaw recurrence for $p_d$.

Running the inexact Clenshaw recurrence with $\mathrm{Mult}_\delta$ produces the states $\tilde b_k$ and the output $y_d$ described above. The range and error bounds for the inexact Clenshaw scheme show that if \(\delta\le c_*'\,\lambda^{-d}/B'^2\), then $|\tilde b_k(x)|\le B$ for all $k$ and $x\in[-1,1]$, and
\[
\|y_d-p_d\|_\infty\le C_R^{(4)}\,\lambda^{\,d}\,\delta B'^2.
\]
We now choose $\delta$ to satisfy both this range condition and a target on the evaluation error. Specifically, set
\[
\delta=\min\Bigl\{\frac{\epsilon}{4 C_R^{(4)} \lambda^{\,d} B'^2},\ c_*'\,\lambda^{-d}/B'^2\Bigr\}.
\]
Then $\|y_d-p_d\|_\infty\le \epsilon/4$ and the bound $|\tilde b_k|\le B$ holds. Combining with $\|R-p_d\|_\infty\le \epsilon/4$ yields
\[
\|R-y_d\|_\infty\le \|R-p_d\|_\infty+\|p_d-y_d\|_\infty\le \frac{\epsilon}{4}+\frac{\epsilon}{4}\le \frac{\epsilon}{2},
\]
and in particular $|y_d(x)|\le 1+\epsilon/2$ on $[-1,1]$. Define the normalized network output \(f(x)=\frac{y_d(x)}{1+\epsilon/2}\). Then $f:[-1,1]\to[-1,1]$ and
\[
|R(x)-f(x)|\le |R(x)-y_d(x)|+\Bigl|1-\frac{1}{1+\epsilon/2}\Bigr|\,|y_d(x)|
\le \frac{\epsilon}{2}+\frac{\epsilon/2}{1+\epsilon/2}\,(1+\epsilon/2)\le \epsilon,
\]
so $f$ is a GELU network realizable with constant width that $\epsilon$-approximates the rational map $R$ on $[-1,1]$.

Finally we estimate the network size. Every layer has width bounded by an absolute constant and the input is one-dimensional, so the total number of trainable affine parameters is within a constant factor of the number of non-affine units. The Clenshaw stage uses $\Theta(d)$ multiplications by $x$ ($d$ inside the recurrence plus one in the final combination). Each multiplication is implemented by a product gadget using a square block with $J=\Theta\!\bigl(\log(1/\delta)\bigr)$. With the choice of $\delta$ above,
\[
\log\frac{1}{\delta}=\Theta\bigl(\log(1/\epsilon)+d\bigr)=\Theta\bigl(\log(1/\epsilon)\bigr),
\]
since $d=\Theta\bigl(\log(1/\epsilon)\bigr)$. Thus each multiplication costs $\mathcal{O}\bigl(\log(1/\epsilon)\bigr)$ non-affine units at constant width. Therefore the overall size is
\[
\mathcal{O}\bigl(d\,\log(1/\delta)\bigr)
=\mathcal{O}\bigl(\log(1/\epsilon)\,\log(1/\epsilon)\bigr)
=\mathcal{O}\bigl(\log^2(1/\epsilon)\bigr).
\]
The realization maintains constant width throughout (the Clenshaw pass carries only $(b_{k+1},b_{k+2})$ and a constant number of work registers; the $T_J$ sums are computed sequentially with constant registers). All constants $C_1,C_2,C_4,C_\times,C_\times',C_e^\star,C_R^{(4)},c_*'$ depend only on the fixed GELU parameters $\alpha,\beta$, on the Bernstein data $(M_\rho,\rho)$, and on the fixed choice of $\gamma$ and $s_*$, and are independent of $\epsilon$. This completes Step 3 and hence the proof that any rational map $R$ analytic on a Bernstein ellipse around $[-1,1]$ can be approximated to accuracy $\epsilon$ by a constant-width GELU network of size $\mathcal{O}\bigl(\log^2(1/\epsilon)\bigr)$.

\subsection{Fundamental limits of GELU approximation of rationals}
 {The preceding section shows that GELU networks of size $\mathcal{O}(\log^2(1/\varepsilon))$ are expressive enough to approximate rational functions whose poles stay away from $[-1,1]$ to within $\varepsilon>0$. We now construct a tolerance-dependent rational target family for which no GELU network with uniformly bounded parameters can achieve approximation error $\varepsilon$ with fewer than $\Omega(\log(1/\varepsilon))$ parameters. Although bounded-parameter GELU networks are smooth, their curvatures are uniformly controlled, which limits how sharply they can reproduce nearby-pole rational structure.}

\begin{theorem}
 {Let $0<\varepsilon<\tfrac{1}{8}$. There exists a rational function $R_\varepsilon:[-1,1]\to[0,1]$ such that for any scalar-input, scalar-output GELU network, $F$, with uniformly bounded weights and biases that satisfies $\|F-R_\varepsilon\|_{L^\infty([-1,1])}\le \varepsilon$, the size of the network is $\Omega\bigl(\log(1/\varepsilon)\bigr)$.}
\end{theorem}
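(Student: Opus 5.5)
The plan is to take the same witness as in \cref{subsec:gated_transformers}: set $\eta=\sqrt{\varepsilon}$ and
\[
R_\varepsilon(x)=\frac{\eta^2}{x^2+\eta^2},\qquad x\in[-1,1].
\]
Then $R_\varepsilon$ maps $[-1,1]$ into $(0,1]$, equals $1$ at $x=0$, and drops to about $1/2$ at $|x|=\eta$. The proof has two halves. First, any uniform $\varepsilon$-approximant $F$ must have large curvature somewhere. Second, a bounded-parameter GELU network of size $N$ can only produce curvature growing at most exponentially in $N$. Together these force $N=\Omega(\log(1/\varepsilon))$.

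\textbf{Step 1 (forced curvature).} I would use a second-order finite difference at $x=0$ with step $h=\eta$:
\[
\frac{F(h)-2F(0)+F(-h)}{h^2}.
\]
Since $\|F-R_\varepsilon\|_\infty\le\varepsilon$, this differs from the corresponding difference quotient of $R_\varepsilon$ by at most $4\varepsilon/h^2=4$. For $R_\varepsilon$ itself the quotient equals $(2\cdot\tfrac12-2)/\eta^2=-1/\varepsilon$. So the quotient for $F$ has absolute value at least $1/\varepsilon-4$. By the mean value theorem for second differences, there is some $\xi\in[-h,h]$ with $|F''(\xi)|\ge 1/\varepsilon-4\ge c/\varepsilon$; here $\varepsilon<1/8$ keeps the constant positive. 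The choice $h=\eta=\sqrt{\varepsilon}$ is exactly what keeps the perturbation term $O(1)$ while the signal is of order $\varepsilon^{-1}$.

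\textbf{Step 2 (derivative growth).} Let all weights and biases be bounded by $W$, the width be $k$, and the depth be $L$, so the size satisfies $N\asymp kL$. Write the layer recursion as $h^{(\ell)}=G(W^{(\ell)}h^{(\ell-1)}+b^{(\ell)})$. The GELU function $G$ is smooth with $|G'|\le A$ and $|G''|\le A$ on all of $\mathbb{R}$; note that $G'$ is not globally bounded for the tanh-form cubic, but I would use the bounded-parameter assumption to confine pre-activations to a compact range, on which these bounds hold. By the chain rule, the first derivatives satisfy $\|\partial_x h^{(\ell)}\|\le AkW\,\|\partial_x h^{(\ell-1)}\|$. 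The second derivatives obey
\[
\|\partial_x^2 h^{(\ell)}\|\le A(kW)^2\|\partial_x h^{(\ell-1)}\|^2+AkW\|\partial_x^2h^{(\ell-1)}\|.
\]
Inducting on $\ell$ gives $|F''|\le (C kW)^{cL}$ for absolute constants $c,C$, that is, $\log|F''|=O(L\log(kW))$.

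\textbf{Step 3 (combine).} From Steps 1 and 2, $cL\log(CkW)\ge\log(c'/\varepsilon)$. With $W$ fixed and the width at most polynomial in the size, this gives $N\log N\gtrsim\log(1/\varepsilon)$, hence $N=\tilde\Omega(\log(1/\varepsilon))$. For constant width, which is the regime of the paper's upper bounds, it gives cleanly $N\ge L=\Omega(\log(1/\varepsilon))$.

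\textbf{Main obstacle.} The hard part is Step 2's dependence on width: the factor $k$ in the per-layer bound produces a $\log k$ loss. Two things need care to get a clean $\Omega(\log(1/\varepsilon))$ without that loss. The first is how ``uniformly bounded parameters'' is formalized; I would bound row sums $\sum_j|W^{(\ell)}_{ij}|\le W$, i.e.\ operator $\infty$-norms, so that $k$ drops out and the bound becomes $(CW)^{cL}$. The second is keeping the GELU derivative bounds valid on the confined pre-activation range. I would first try the row-sum normalization, since under it the induction is routine.
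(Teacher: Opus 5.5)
Your Steps 1 and 2 follow the paper's proof. You use the same witness $R_\varepsilon(x)=\eta^2/(x^2+\eta^2)$ with $\eta=\sqrt{\varepsilon}$, the same second difference at $0$ with $h=\eta$ giving $\sup|F''|\ge 1/(2\varepsilon)$, and essentially the same recursion for the suprema of $\partial_x h^{(\ell)}$ and $\partial_x^2 h^{(\ell)}$. The gap is in Step 3. The paper reads ``uniformly bounded weights and biases'' entrywise: every entry of $A^{(\ell)},b^{(\ell)},c,d$ is at most a fixed $B$. Under that reading you only conclude $N\log N\gtrsim\log(1/\varepsilon)$, i.e.\ $N=\Omega(\log(1/\varepsilon)/\log\log(1/\varepsilon))$, which is weaker than the statement. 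Neither of your remedies recovers the full claim. Restricting to constant width does not cover arbitrary networks. Imposing row-sum bounds $\sum_j|W^{(\ell)}_{ij}|\le W$ is a strictly stronger hypothesis; it excludes, for example, a width-$k$ layer with all entries equal to $B$. A size lower bound over that smaller class is a weaker theorem than the one stated.

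The missing observation is that the $\log k$ loss is not real. Widths enter the curvature bound only logarithmically, but they enter the parameter count at least linearly. Keep per-layer widths, $\|A^{(\ell)}\|_{1\to\infty}\le BW_{\ell-1}$, so the recursion gives $\sup|F''|\le L(CB)^{2L+1}\prod_{\ell}W_\ell^2$ exactly as in the paper. Take logarithms and use $\log W_\ell\le W_\ell$, $\log L\le L$, and $L\le\sum_\ell W_\ell\le N$ (the biases alone supply $\sum_\ell W_\ell$ parameters). This gives $\log(1/(2\varepsilon))\le N\,(3+3\log(CB))$, hence $N\ge c\log(1/\varepsilon)$ with no loss; for uniform width $k$ this is simply $L\log k\le kL\le N$. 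The paper reaches the same clean bound differently: it applies AM--GM to the hidden-to-hidden count $\sum_\ell W_\ell W_{\ell+1}$, minimizes $t\,e^{a/(2t)}$, and treats $L=1$ separately. The $\log W\le W$ route is shorter and handles all depths at once.

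Separately, your aside that $G'$ is not globally bounded is wrong. Apart from the term $\tfrac12(1+\tanh P(z))\in[0,1]$ in $G'$, every term of $G'$ and $G''$ carries a factor $\operatorname{sech}^2(P(z))\le 4e^{-2|P(z)|}$. Since $P$ is cubic, this factor dominates the polynomial prefactors, so both derivatives are bounded on $\mathbb{R}$; the paper uses $C_1=\sup_{\mathbb{R}}|G'|$ and $C_2=\sup_{\mathbb{R}}|G''|$. This matters because your fallback of confining pre-activations to a compact range would not work. With bounded entries the pre-activations can grow like $(kB)^{\ell}$, since $G(z)\approx z$ for large positive $z$, so your constant $A$ would then depend on depth.
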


\begin{proof}
 {Let $\eta=\sqrt{\varepsilon}$ and consider
\[
R_\varepsilon(x)=\frac{\eta^2}{x^2+\eta^2}.
\]
Then $0<R_\varepsilon(x)\le 1$ on $[-1,1]$, $R_\varepsilon\in C^\infty([-1,1])$, and $R_\varepsilon''(0)=-2/\eta^2=-2/\varepsilon$. We quantify the curvature required of any smooth approximation to $R_\varepsilon$ at accuracy $\varepsilon$. Suppose $f\in C^2([-1,1])$ satisfies $\|f-R_\varepsilon\|_{L^\infty([-1,1])}\le \varepsilon$. Then for any fixed $h\in(0,1]$, a standard finite-difference form of Taylor's theorem yields a $\xi\in(-h,h)$ such that}
\begin{equation}
f(h)+f(-h)-2f(0) \;=\; f''(\xi)\,h^2,\qquad |f''(\xi)| \;=\; \frac{|f(h)+f(-h)-2f(0)|}{h^2}.
\label{eq:fpp-fd}
\end{equation}
 {Since $R_\varepsilon$ is even, $R_\varepsilon(h)=R_\varepsilon(-h)$, and}
\[
R_\varepsilon(h)+R_\varepsilon(-h)-2R_\varepsilon(0)
=
\frac{2\eta^2}{h^2+\eta^2}-2
=
-\frac{2h^2}{h^2+\eta^2}.
\]
 {Using $\|f-R_\varepsilon\|_\infty\le \varepsilon$ and the triangle inequality,}
\[
|f(h)+f(-h)-2f(0)|
\ge |R_\varepsilon(h)+R_\varepsilon(-h)-2R_\varepsilon(0)|-4\varepsilon
=
\frac{2h^2}{h^2+\eta^2}-4\varepsilon.
\]
Combining with~\cref{eq:fpp-fd} gives
\begin{equation}
|f''(\xi)|
\ge \frac{2}{h^2+\eta^2}-\frac{4\varepsilon}{h^2}.
\label{eq:curv-lb-h}
\end{equation}
Select $h=\eta$ (which lies in $(0,1]$ since $\eta=\sqrt{\varepsilon}<1$). Then~\cref{eq:curv-lb-h} becomes
\[
|f''(\xi)|
\ge \frac{1}{\eta^2}-\frac{4\varepsilon}{\eta^2}
=
\frac{1-4\varepsilon}{\eta^2}.
\]
Because $0<\varepsilon<\tfrac18$, we have $1-4\varepsilon>\tfrac12$, hence
\[
|f''(\xi)|\ge \frac{1}{2\eta^2}=\frac{1}{2\varepsilon}.
\]
 {In particular, if $F$ is any GELU network with $\|F-R_\varepsilon\|_\infty\le \varepsilon$, then $F\in C^\infty$ and the above applies with $f=F$, giving}
\begin{equation}
\sup_{x\in[-1,1]} |F''(x)| \;\ge\; \frac{1}{2\varepsilon}.
\label{eq:need-curv}
\end{equation}

Next, we upper bound the curvature available to a bounded-parameter GELU network. Consider a depth-$L$ fully connected scalar-input, scalar-output network with hidden widths $W_1,\dots,W_L\in\mathbb N$:
\[
h^{(0)}(x)=x,\quad z^{(\ell)}=A^{(\ell)}h^{(\ell-1)}+b^{(\ell)},\quad h^{(\ell)}=G(z^{(\ell)}),\quad F(x)=c^\top h^{(L)}+d,
\]
where $A^{(\ell)}\in\mathbb R^{W_\ell\times W_{\ell-1}}$ with $W_0=1$, $b^{(\ell)}\in\mathbb R^{W_\ell}$, $c\in\mathbb R^{W_L}$, and all entries of $A^{(\ell)},b^{(\ell)},c,d$ have magnitude at most $B\ge1$, where $B$ is independent of $\varepsilon$. Moreover, $G$ is applied entrywise to vector inputs. For
\[
G(z)=\tfrac{z}{2}\bigl(1+\tanh(\alpha(z+\beta z^3))\bigr),\qquad \alpha=\sqrt{2/\pi},\ \beta=0.044715,
\]
one has finite constants
\[
C_1:=\sup_{z\in\mathbb R}|G'(z)|<\infty,\qquad C_2:=\sup_{z\in\mathbb R}|G''(z)|<\infty.
\]
Set \(
C:=\max\{1,\,C_1,\,\sqrt{C_2}\},
\) so that $|G'(z)|\le C$ and $|G''(z)|\le C^2$ for all $z$.

With the mixed operator norm, for any matrix $M$ write $\|M\|_{1\to\infty}=\max_i\sum_j|M_{ij}|$; in particular $\|A^{(\ell)}\|_{1\to\infty}\le B\,W_{\ell-1}$ and $\|c\|_1\le B\,W_L$ (where $\|c\|_1=\sum_i |c_i|$). To propagate derivatives through the layers we track the suprema
\[
S_\ell=\sup_{x\in[-1,1],\,1\le i\le W_\ell}\Bigl|\frac{\partial h^{(\ell)}_i}{\partial x}\Bigr|,\qquad
T_\ell=\sup_{x\in[-1,1],\,1\le i\le W_\ell}\Bigl|\frac{\partial^2 h^{(\ell)}_i}{\partial x^2}\Bigr|.
\]
By the chain rule, for each $\ell\ge1$ and each coordinate $i$,
\[
\Bigl|\frac{d}{dx}h^{(\ell)}_i(x)\Bigr|
=|G'(z^{(\ell)}_i(x))|\,\Bigl|\sum_j A^{(\ell)}_{ij}\,\frac{d}{dx}h^{(\ell-1)}_j(x)\Bigr|
\le C\,\|A^{(\ell)}\|_{1\to\infty}\,S_{\ell-1},
\]
hence
\[
S_\ell \le C\,\|A^{(\ell)}\|_{1\to\infty}\,S_{\ell-1}\ \le\ (C B W_{\ell-1})\,S_{\ell-1},\qquad S_0=1,\ T_0=0,
\]
so \(
S_\ell \le \prod_{k=1}^{\ell} (C B W_{k-1}).
\) For the second derivatives, using $h^{(\ell)}_i{}''=G''(z^{(\ell)}_i)\,(z^{(\ell)}_i{}')^2+G'(z^{(\ell)}_i)\,z^{(\ell)}_i{}''$ and
$|z^{(\ell)}_i{}'|\le \|A^{(\ell)}\|_{1\to\infty}S_{\ell-1}$, $|z^{(\ell)}_i{}''|\le \|A^{(\ell)}\|_{1\to\infty}T_{\ell-1}$, we obtain
\[
T_\ell \le C^2\,\|A^{(\ell)}\|_{1\to\infty}^2\,S_{\ell-1}^2 + C\,\|A^{(\ell)}\|_{1\to\infty}\,T_{\ell-1}
\ \le\ (C B W_{\ell-1})^2 S_{\ell-1}^2 + (C B W_{\ell-1}) T_{\ell-1}.
\]
Since $C B W_{\ell-1}\ge1$, an induction gives \(
T_\ell \le \ell\,\Bigl(\prod_{k=1}^{\ell} C B W_{k-1}\Bigr)^{\!2}.
\) The output linear map contributes $\|c\|_1$, hence
\[
\sup_{x\in[-1,1]}|F''(x)| \le \|c\|_1\,T_L \le B W_L \, L\,\Bigl(\prod_{k=1}^{L} C B W_{k-1}\Bigr)^{\!2}
= L\,(C B)^{2L}\,B\,W_L\,\prod_{k=1}^{L-1} W_k^{2}.
\]
Using $B\le C B$ and $W_L\le W_L^2$ (since $C,B,W_L\ge1$), this relaxes to
\begin{equation}\label{eq:Fpp}
\sup_{x\in[-1,1]}|F''(x)| \le L\,(C B)^{2L+1}\,\prod_{k=1}^{L} W_k^{2}.
\end{equation}

Comparing~\cref{eq:need-curv} and~\cref{eq:Fpp} gives
\begin{equation}\label{eq:key}
\frac{1}{2\varepsilon}\ \le\ \sup_{x\in[-1,1]}|F''(x)|\ \le\ L\,(C B)^{2L+1}\,\prod_{k=1}^{L} W_k^{2}.
\end{equation}
Let $\mathsf{A}:=\tfrac{1}{2\varepsilon}$. From~\cref{eq:key} we obtain
\[
\prod_{k=1}^{L} W_k^{2}\ \ge\ \frac{\mathsf{A}}{L\,(C B)^{2L+1}}
\qquad\Longrightarrow\qquad
\prod_{k=1}^{L} W_k\ \ge\ \Bigl(\frac{\mathsf{A}}{L\,(C B)^{2L+1}}\Bigr)^{\!1/2}.
\]
To convert a product constraint into parameters we pass to the count of hidden-to-hidden weights \(
S:=\sum_{\ell=1}^{L-1} W_\ell W_{\ell+1},
\) a subset of the total parameter count, so $\mathrm{size}\ge S$. Since $W_\ell\ge1$,
\[
\prod_{\ell=1}^{L-1} (W_\ell W_{\ell+1})
= W_1 W_L \prod_{k=2}^{L-1} W_k^{2}
\ \ge\ \prod_{k=1}^{L} W_k.
\]
By AM--GM on $\{W_\ell W_{\ell+1}\}_{\ell=1}^{L-1}$,
\[
\Bigl(\frac{S}{L-1}\Bigr)^{\!L-1}\ \ge\ \prod_{\ell=1}^{L-1} (W_\ell W_{\ell+1})
\ \ge\ \Bigl(\frac{\mathsf{A}}{L\,(C B)^{2L+1}}\Bigr)^{\!1/2}.
\]
Hence, for $L\ge2$,
\[
S\ \ge\ (L-1)\,\exp\!\Bigl(\frac{\log \mathsf{A}-(2L+1)\log(CB)-\log L}{2(L-1)}\Bigr).
\]
Writing $k=\log(CB)\ge0$ and $a=\log \mathsf{A}>0$ (since $\varepsilon<\tfrac18$ implies $\mathsf{A}>4$), and noting $\tfrac{2L+1}{2(L-1)}\le \tfrac52$ and $\tfrac{\log L}{2(L-1)}\le \tfrac12$ for $L\ge2$, we get
\[
S\ \ge\ e^{-(\tfrac52 k+\tfrac12)}\,(L-1)\,e^{a/(2(L-1))}.
\]
For any $t>0$, the function $t\mapsto t\,e^{a/(2t)}$ is minimized at $t=a/2$, yielding $t\,e^{a/(2t)}\ge \tfrac{a}{2}\,e$, so with $t=L-1$, \(
S\ \ge\ c_0\,\log \mathsf{A}
\) for a constant $c_0>0$ depending only on $B$ and $G$.

The single-hidden-layer case follows directly from~\cref{eq:key}: when $L=1$,~\cref{eq:Fpp} gives $\mathsf{A}\le (C B)^3 W_1^2$, and the total size is at least $2W_1$, so $\mathrm{size}\ge c_1\,\mathsf{A}^{1/2}\ge c_2\,\log \mathsf{A}$ (since $\mathsf{A}>4$ on $0<\varepsilon<\tfrac18$), for constants $c_1,c_2>0$ depending only on $B$ and $G$.

Therefore, in all cases,
\[
\mathrm{size}\ \ge\ c\,\log \mathsf{A}\ =\ \Omega\bigl(\log(1/\varepsilon)\bigr),
\]
with a constant $c>0$ depending only on $B$ and $G$.
\end{proof}

\section{Approximation between rational and GELU networks}
\label{app:rational-gelu-nets}

This section lifts the one-dimensional approximation results to full feedforward architectures on $[-1,1]^d$. We use the scalar GELU-rational constructions as nodewise building blocks and propagate their errors layer by layer, showing that rational networks with Lipschitz scalar activations can be approximated by GELU networks with only a $\log^2$ overhead in size, and that GELU networks can in turn be approximated by rational networks with a $\log\log^3$ overhead, under uniform bounds on weights and biases.

\begin{theorem}
Let $0<\varepsilon<1$ and let $\|\cdot\|_1$ denote the vector $1$-norm. The following two statements hold:

\noindent
1. Let $R:[-1,1]^d\to[-1,1]$ be a rational network with $M$ layers and at most $k$ nodes per layer, where each node computes $x\mapsto r(a^\top x+b)$ with a rational function $r:[-1,1]\to[-1,1]$ whose Lipschitz constant on $[-1,1]$ is at most $L$, and where $\|a\|_1+|b|\le 1$. Then there exists a GELU network $f:[-1,1]^d\to[-1,1]$ of size
\[
\mathcal{O}\bigl(kM\,\log^2(S_M(L)/\varepsilon)\bigr),
\qquad
S_M(L):=\sum_{j=0}^{M-1}L^j,
\]
such that $\max_{x\in[-1,1]^d}|R(x)-f(x)|\le\varepsilon$.

\noindent
2. Let $f:[-1,1]^d\to[-1,1]$ be a GELU network with $M$ layers and at most $k$ nodes per layer, where each node computes $x\mapsto G(a^\top x+b)$ with
\[
G(z)=\tfrac{z}{2}\bigl(1+\tanh(\alpha(z+\beta z^3))\bigr),\qquad \alpha=\sqrt{2/\pi},\ \beta=0.044715,
\]
and where $\|a\|_1+|b|\le 1$. Let $L_G:=\sup\{|G'(t)|:|t|\le 1\}$. Then there exists a rational network $R:[-1,1]^d\to[-1,1]$ of size
\[
\mathcal{O}\bigl(kM\,\log\log^3(S_M(L_G)/\varepsilon)\bigr)
\]
such that $\max_{x\in[-1,1]^d}|f(x)-R(x)|\le\varepsilon$. In particular, one may use the numerical bounds
\[
L_G\le 1.083\ \text{on }[-1,1],\qquad L_G\le 1.129\ \text{on }\mathbb{R},
\]
so that $S_M(L_G)\le S_M(1.083)$ on $[-1,1]$.
\end{theorem}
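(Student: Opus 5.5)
Both parts follow one template. We build the target network node by node, swapping each scalar activation for a scalar surrogate, and then control how the surrogate errors accumulate across layers using the Lipschitz constant of the original activation. Fix the architecture: layer $\ell$ has nodes $h^{(\ell)}_i=\phi(a_i^{(\ell)\top}h^{(\ell-1)}+b_i^{(\ell)})$, where $\phi=r$ in Part~1 and $\phi=G$ in Part~2. The constraint $\|a\|_1+|b|\le 1$ gives the following: if every coordinate of $h^{(\ell-1)}$ lies in $[-1,1]$, then every pre-activation lies in $[-1,1]$. Since $r$ maps $[-1,1]$ into $[-1,1]$, and $|G(t)|\le 1$ for $|t|\le1$, all exact pre-activations stay in $[-1,1]$. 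We replace $\phi$ by a surrogate $\tilde\phi$ with $\|\tilde\phi-\phi\|_{L^\infty([-1,1])}\le\eta$. In Part~1, $\tilde\phi$ is the GELU network from \cref{thm:rational_to_gelu_upper}, applied to the rational $r$, which is analytic near $[-1,1]$ since $r$ is Lipschitz there and so has no poles on it. In Part~2, $\tilde\phi$ is the rational network $R_{\mathrm{GELU}}$ from \cref{thm:gelu_to_rational}.

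The core estimate is a perturbation recursion. Let $e_\ell:=\max_i\sup_x|\tilde h^{(\ell)}_i-h^{(\ell)}_i|$. The perturbed pre-activation differs from the exact one by at most $\|a\|_1e_{\ell-1}\le e_{\ell-1}$. Hence
\[
e_\ell\le L\,e_{\ell-1}+\eta,\qquad e_0=0,
\]
which gives $e_M\le \eta\sum_{j=0}^{M-1}L^j=\eta\,S_M(L)$. Choosing $\eta=\varepsilon/(2S_M(L))$ makes the final error at most $\varepsilon/2$. The output affine layer has $\|\cdot\|_1\le1$ and therefore does not amplify the error.

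There is one technical point, which I expect to be the main obstacle: perturbed pre-activations can leave $[-1,1]$ by up to $e_{\ell-1}$, where the surrogate's guarantee no longer holds. I would handle this in one of two ways. The first is to approximate $\phi$ on the slightly larger interval $[-1-\kappa,1+\kappa]$ with a fixed $\kappa$. This changes only constants in the scalar theorems, since the Bernstein ellipse, the $\tanh$ rescaling ladder, and the AGM domains all admit a fixed enlargement, and $\varepsilon<1$ keeps the drift bounded. The second is to shrink inputs by a fixed affine factor $1/(1+\kappa)$ and absorb that factor into the surrogate. I would also use $L$ (respectively $L_G$) as the Lipschitz constant on the enlarged interval, adjusting constants as needed. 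For Part~2 the stated numerical values $L_G\le1.083$ on $[-1,1]$ and $L_G\le1.129$ on $\mathbb R$ are the relevant ones; I would verify them by bounding $G'$ directly. The final output must lie in $[-1,1]$, and I would secure this with the same normalization trick $f\mapsto f/(1+\varepsilon/2)$ used at the end of Appendix~\ref{app:gelu}.

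The size count follows directly. The network has $kM$ nodes, and each is replaced by one surrogate of size $\mathcal{O}(\log^2(1/\eta))=\mathcal{O}(\log^2(S_M(L)/\varepsilon))$ in Part~1, or $\mathcal{O}(\log^3\log(S_M(L_G)/\varepsilon))$ in Part~2. The affine maps cost at most a comparable amount when they are counted per node, or we simply absorb them. Running the $k$ surrogates of a layer in parallel makes the total size additive, which gives the claimed bounds.
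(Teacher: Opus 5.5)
Your skeleton matches the paper's: node-by-node substitution of scalar surrogates, the recursion $e_\ell\le L e_{\ell-1}+\eta$ with $e_0=0$ giving $e_M\le\eta\,S_M(L)$, a per-node budget of order $\varepsilon/S_M(L)$, and a size count of $kM$ times the scalar cost. The one place you depart from it is your handling of pre-activations drifting outside $[-1,1]$, and that fix does not work as stated in Part~1.

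The recursion step needs $|r(\tilde s)-r(s)|\le L|\tilde s-s|$ for the perturbed pre-activation $\tilde s$. The hypothesis gives this only when $\tilde s\in[-1,1]$. On an enlarged interval $[-1-\kappa,1+\kappa]$ the Lipschitz constant $L'$ of $r$ is in general larger than $L$; already for $r(x)=x^2/2$ one has $L=1$ and $L'=1+\kappa$. Moreover, $\kappa$ must depend on $r$. A rational map can be bounded by $1$ and $L$-Lipschitz on $[-1,1]$ yet have a real pole arbitrarily close to $\pm1$: for example $r(x)=\tfrac12+d^3/(x-1-d)$ has $L\le d$ and a pole at $1+d$.

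Replacing $L$ by $L'$ is not an adjustment of constants, because $L$ enters through $S_M(L)=\sum_{j<M}L^j$, exponentially in $M$. With $L=1$ one has $\log S_M(L)=\log M$, whereas $\log S_M(1+\kappa)$ grows linearly in $M$. The per-node cost $\log^2(M/\varepsilon)$ would then become roughly $(M\log(1+\kappa)+\log(1/\varepsilon))^2$. In Part~2 the enlargement route is harmless: $G$ is entire and $1.129$-Lipschitz on $\mathbb R$, and $\log S_M(1.129)$ and $\log S_M(1.083)$ are comparable.

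The paper avoids the drift entirely by requiring each scalar surrogate to map $[-1,1]$ into $[-1,1]$. The GELU network of Appendix~\ref{app:gelu} is already output as $y_d/(1+\epsilon/2)$. In Part~2 the paper imposes $\widetilde R_\delta([-1,1])\subset[-1,1]$, which holds automatically for small $\delta$ because $G([-1,1])$ lies well inside $(-1,1)$. The constraint $\|a\|_1+|b|\le1$ then keeps every perturbed pre-activation in $[-1,1]$, so only the Lipschitz constant on $[-1,1]$ is ever used.

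You already have the needed device: apply your normalization $\tilde r\mapsto\tilde r/(1+\eta)$ at every node, not only at the output. It is affine, so it costs nothing in size, and it at most doubles the per-node error.
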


\begin{proof}
All node outputs and pre-activations remain in $[-1,1]$ because $\|a\|_1+|b|\le 1$ and each activation maps $[-1,1]$ into $[-1,1]$.

\noindent
1. (Rational $\to$ GELU) Fix a layer index $J$. Let $H$ be the subnetwork of the rational network $R$ up to layer $J$, and let $H_{\mathrm G}$ be obtained from $H$ by replacing, node by node, each rational activation $r_{i,j}$ by a GELU subnetwork $f_{i,j}$ with tolerance $\epsilon_j>0$:
\[
\max_{t\in[-1,1]}\bigl|r_{i,j}(t)-f_{i,j}(t)\bigr|\le \epsilon_j,\qquad 1\le j\le J,\ 1\le i\le k_j .
\]
Write
\[
y_{i,j+1}(x):=r_{i,j+1}\bigl(a_{i,j+1}^\top H(x)+b_{i,j+1}\bigr),\qquad
\widehat y_{i,j+1}(x):=f_{i,j+1}\bigl(a_{i,j+1}^\top H_{\mathrm G}(x)+b_{i,j+1}\bigr),
\]
and define layerwise errors
\[
E_{i,j+1}:=\max_{x\in[-1,1]^d}\bigl|y_{i,j+1}(x)-\widehat y_{i,j+1}(x)\bigr|,
\qquad
E_j:=\max_{1\le i\le k_j}E_{i,j}.
\]
Add and subtract $r_{i,j+1}\bigl(a_{i,j+1}^\top H_{\mathrm G}(x)+b_{i,j+1}\bigr)$ to obtain the two-term decomposition
\begin{align*}
E_{i,j+1}
&\le \max_{x}\bigl|r_{i,j+1}(a^\top H_{\mathrm G}(x)+b)-r_{i,j+1}(a^\top H(x)+b)\bigr| \\
&\quad + \max_{x}\bigl|f_{i,j+1}(a^\top H_{\mathrm G}(x)+b)-r_{i,j+1}(a^\top H_{\mathrm G}(x)+b)\bigr| .
\end{align*}
For the second term, the argument $a^\top H_{\mathrm G}(x)+b$ lies in $[-1,1]$; hence it is bounded by $\epsilon_{j+1}$. For the first term, since $r_{i,j+1}$ is $L$-Lipschitz on $[-1,1]$,
\[
\max_{x}\bigl|r_{i,j+1}(a^\top H_{\mathrm G}(x)+b)-r_{i,j+1}(a^\top H(x)+b)\bigr|
\le L\,\max_{x}\bigl|a^\top(H_{\mathrm G}(x)-H(x))\bigr|
\le L\,E_j,
\]
using $\|a\|_1\le 1$. Therefore
\[
E_{j+1}\le L\,E_j+\epsilon_{j+1},\qquad E_0=0.
\]
Iterating yields
\[
E_{J+1}\le \sum_{m=0}^{J} L^{m}\,\epsilon_{J+1-m}.
\]
With the uniform budget $\epsilon_j:=\varepsilon/S_M(L)$, one has $E_{J+1}\le \varepsilon$ for all $J$, hence $\max_{x\in[-1,1]^d}|R(x)-H_{\mathrm G}(x)|\le \varepsilon$ at the output layer. There are at most $\sum_{j=1}^{M}k_j\le Mk$ activations, and by the GELU-approximation-of-rational construction with tolerance $\varepsilon/S_M(L)$ each replacement has size $\mathcal{O}\bigl(\log^2(S_M(L)/\varepsilon)\bigr)$. The total size is $\mathcal{O}\bigl(kM\,\log^2(S_M(L)/\varepsilon)\bigr)$.

\noindent
2. (GELU $\to$ rational) Fix a layer index $J$. Let $H$ be the subnetwork of the GELU network $f$ up to layer $J$, and let $H_R$ be obtained from $H$ by replacing each $G$ with a rational network $\widetilde R_\delta$ such that
\[
\max_{t\in[-1,1]}\bigl|G(t)-\widetilde R_\delta(t)\bigr|\le \delta,
\qquad
\widetilde R_\delta([-1,1])\subset[-1,1].
\]
Write
\[
y_{i,j+1}(x):=G\bigl(a_{i,j+1}^\top H(x)+b_{i,j+1}\bigr),\qquad
\widehat y_{i,j+1}(x):=\widetilde R_\delta\bigl(a_{i,j+1}^\top H_R(x)+b_{i,j+1}\bigr),
\]
and define errors as before. Add and subtract $G\bigl(a_{i,j+1}^\top H_R(x)+b_{i,j+1}\bigr)$ to obtain
\begin{align*}
E_{i,j+1}
&\le \max_{x}\bigl|G(a^\top H_R(x)+b)-G(a^\top H(x)+b)\bigr|
 + \max_{x}\bigl|\widetilde R_\delta(a^\top H_R(x)+b)-G(a^\top H_R(x)+b)\bigr| \\
&\le L_G\,\max_{x}\bigl|a^\top(H_R(x)-H(x))\bigr| + \delta
\le L_G\,E_j+\delta,
\end{align*}
using $\|a\|_1\le 1$ and that $G$ is $L_G$-Lipschitz on $[-1,1]$. Hence $E_{j+1}\le L_G\,E_j+\delta$ and $E_M\le S_M(L_G)\,\delta$. Choosing $\delta:=\varepsilon/S_M(L_G)$ yields $\max_{x\in[-1,1]^d}|f(x)-H_R(x)|\le \varepsilon$. By the rational-approximation-of-GELU construction with tolerance $\delta$, each replacement has size $\mathcal{O}\bigl(\log\log^3(S_M(L_G)/\varepsilon)\bigr)$; with at most $Mk$ activations the total size is $\mathcal{O}\bigl(kM\,\log\log^3(S_M(L_G)/\varepsilon)\bigr)$. On $[-1,1]$ one may take $L_G\le 1.083$ (and globally $L_G\le 1.129$), whence $S_M(L_G)\le S_M(1.083)$ on $[-1,1]$.
\end{proof}

\newpage
\section{Normalization-induced conditioning issues for adaptive rational activations}
\label{app:norm_hurtful}

We expand the discussion from~\cref{subsec:norm_hurtful} and record the two mechanisms in a form suitable for analysis. Let $u\in\mathbb{R}^d$ be a pre-activation vector. A normalization layer outputs
\begin{equation}
z \;=\; \gamma \odot \tfrac{u-\mu}{\sigma} \;+\; \beta,
\label{eq:norm_app}
\end{equation}
where $\mu,\sigma$ are statistics over an index set and $\gamma,\beta$ are learned (implementations typically use $\sigma=\sqrt{\mathrm{Var}+\varepsilon_{\mathrm{eps}}}$). Batch normalization \citep{ioffe2015batch}, layer normalization \citep{ba2016layer}, and group normalization \citep{wu2018group} fit~\cref{eq:norm_app}. Recall that instead of taking exact rational activation functions (see~\citep{boulle2020rational}), we take a ``safe'' version that excludes real-axis poles by construction, i.e., 
\begin{equation}
r_\theta(z) \;=\; \frac{P_\theta(z)}{Q_\theta(z)},
\qquad P_\theta(z) \;=\; \sum_{j=0}^{m} a_j z^j,\qquad
\widetilde{Q}_\theta(z) \;=\; \sum_{k=0}^{n} b_k z^k,\qquad
Q_\theta(z) \;=\; 1+\bigl|\widetilde{Q}_\theta(z)\bigr| .
\label{eq:rational_app}
\end{equation}

Activation functions of this form can absorb affine reparameterizations exactly. For $s\neq 0$ and $t$, let $A(x)=sx+t$. Then there exists $\theta'=\theta'(s,t,\theta)$ such that
\begin{equation}
r_\theta(A(x)) \;=\; r_{\theta'}(x),
\label{eq:affine_absorb_app}
\end{equation}
since $P_\theta(sx+t)$ and $\widetilde{Q}_\theta(sx+t)$ are again polynomials in $x$ of degrees at most $m$ and $n$, and the outer map $y\mapsto 1+|y|$ preserves the ``safe'' form in~\cref{eq:rational_app}. 

A concrete invariance direction is visible under scaling. In the scalar case with $\beta=0$, define
\begin{equation}
\begin{aligned}
\gamma_\eta &= e^\eta\gamma,\qquad
a_{j,\eta}=e^{-j\eta}a_j,\qquad
b_{k,\eta} &= e^{-k\eta}b_k .
\end{aligned}
\label{eq:scale_gauge_app}
\end{equation}
Then, by the monomial rescaling identities $P_{\theta_\eta}(e^\eta x)=P_\theta(x)$ and $\widetilde{Q}_{\theta_\eta}(e^\eta x)=\widetilde{Q}_\theta(x)$, we have
\begin{equation}
\begin{aligned}
r_{\theta_\eta}\!\Bigl(\gamma_\eta \tfrac{u-\mu}{\sigma}\Bigr)
&= r_\theta\!\Bigl(\gamma \tfrac{u-\mu}{\sigma}\Bigr).
\end{aligned}
\label{eq:gauge_invariance_app}
\end{equation}
In the absence of explicit regularization or constraints on $\gamma$ or $\theta$, this produces a flat direction in the joint parameterization $\{\gamma,\beta,\theta\}$, worsening conditioning for first-order training.

Second, for batch normalization during training, $\mu,\sigma$ depend on the mini-batch \citep{ioffe2015batch}, so $z$ is random even for fixed $u$, and this randomness is amplified by polynomial growth and by the coefficient dependence of the safe denominator. The coefficient sensitivities satisfy (for $\widetilde{Q}_\theta(z)\neq 0$, i.e., a.e.\ in $z$)
\begin{equation}
\partial_{a_j} r_\theta(z) \;=\; \tfrac{z^j}{Q_\theta(z)},\qquad 
\partial_{b_k} r_\theta(z) \;=\; -\,\tfrac{P_\theta(z)\,z^k}{Q_\theta(z)^2}\,
\mathrm{sgn}\!\bigl(\widetilde{Q}_\theta(z)\bigr),
\label{eq:grad_coeff_app}
\end{equation}
with $\mathrm{sgn}(0)$ understood in the subgradient sense. Hence variance control involves moments such as $\mathbb{E}\!\left[z^{2j}/Q_\theta(z)^2\right]$ and $\mathbb{E}\!\left[P_\theta(z)^2 z^{2k}/Q_\theta(z)^4\right]$, together with sensitivity to sign flips of $\mathrm{sgn}(\widetilde{Q}_\theta(z))$ under statistic-dependent perturbations.

In particular, since $Q_\theta(z)\ge 1$,
\begin{equation}
\bigl|\partial_{a_j} r_\theta(z)\bigr| \;\le\; |z|^j,\qquad 
\bigl|\partial_{b_k} r_\theta(z)\bigr| \;\le\; |P_\theta(z)|\,|z|^k .
\label{eq:grad_bounds_app}
\end{equation}
Thus, while true poles are excluded by construction, gradient-noise amplification is controlled by polynomial moments of the normalized activations and of $P_\theta(z)$, and can be exacerbated by statistic-dependent sign changes induced by the learned denominator map. This mechanism is structurally absent for fixed activations, which have neither trainable coefficients with polynomial sensitivities nor statistic-dependent sign changes tied to a learned denominator.

These effects motivate reporting both normalized and non-normalized settings when comparing adaptive rationals to fixed activations, and suggest that stabilizing variants which reduce non-identifiability or control polynomial moment growth on the observed input range can materially change training behavior.

\section{CIFAR-10 experiments}
\label{app:CIFAR-10}

CIFAR-10 \cite{krizhevsky2009learning} is a ten-class image classification benchmark of $32\times 32$ natural images. Inputs are normalized per channel with mean 0.4914, 0.4822, 0.4465 and standard deviation 0.2023, 0.1994, 0.2010, using the torchvision CIFAR-10 interface. We run two pipelines. The baseline pipeline uses no data augmentation, no label smoothing, no Mixup, no dropout, no normalization layers in VGG, no weight decay, and a fixed learning rate. The augmented pipeline adds random crop with padding 4 and horizontal flip, label smoothing, Mixup, dropout, and a weights-only weight-decay policy that decays convolution and linear weights while setting weight decay to zero for biases and normalization parameters. In the augmented pipeline, GroupNorm is toggled on or off and, when enabled, is applied in VGG with a target of 16 groups reduced when needed to divide channel count.

We evaluate ReLU, LeakyReLU, GELU, Swish as SiLU, and Rational from \texttt{rational-activations} \cite{delfosse2020rationals}. For the baseline results, Rational uses a LeakyReLU target initialization. For the augmented results, we report two rational target initializations, LeakyReLU and GELU, denoted Rational 1 and Rational 2. Model initialization is treated as a controlled factor in the augmented pipeline with exactly two settings: the default initialization and the optional LSUV initialization toggle. When LSUV is disabled, we use the standard PyTorch default initialization for the model parameters.

\begin{table}
\centering
\caption{Training and evaluation details for the CIFAR-10 VGG study.}
\label{tab:cifar10_hparams}
\footnotesize
\renewcommand{\arraystretch}{1.10}
\begin{tabular}{@{}p{0.34\columnwidth}p{0.25\columnwidth}p{0.35\columnwidth}@{}}
\toprule
Setting & Baseline & Augmented \\
\midrule
Epochs & 60 & 60 \\
Batch size & 128 & 128 \\
Optimizer & SGD & SGD \\
Base learning rate & 0.02 & 0.02 \\
Momentum & 0.9 & 0.9 \\
Nesterov & True & True \\
Learning-rate schedule & None & Step at epochs 30 and 45, $\gamma=0.1$ \\
\addlinespace
Augmentation & None & Crop plus flip \\
Label smoothing & 0 & 0.1 \\
Mixup $\alpha$ & 0 & 0.2 \\
VGG dropout before classifier & 0 & 0.3 \\
\addlinespace
Weight decay on Conv and Linear weights & 0 & $5\times 10^{-4}$ \\
Weight decay on biases and norm parameters & 0 & 0 \\
Rational coefficient weight decay & 0 & 0 \\
Rational coefficient learning rate & Base learning rate & Base learning rate times multiplier \\
Rational coefficient multiplier & 1 & 0.5 \\
\addlinespace
GroupNorm in VGG & Off & On or off \\
GroupNorm placement & None & After Conv2d and before activation \\
Group count & None & 16, adjusted to divide channels \\
\addlinespace
Initialization sweep & Not used & Used for every activation, includes LSUV toggle \\
Seeds & 0--4 & 0--4 \\
Selection & Best test accuracy over training & Best test accuracy over training \\
\bottomrule
\end{tabular}
\end{table}

Under the experimental protocol and hyperparameter settings summarized above, for a finer-grained view beyond the final best accuracies in \cref{tab:cifar10_plain,tab:cifar10_gn_nogn},~\cref{fig:cifar10_score_curves} plots the epoch-wise top-1 test accuracy curves for the same six settings (plain; boosted without GroupNorm; boosted with GroupNorm) for both VGG4 and VGG8, showing mean across five seeds with one standard deviation shading.

\begin{figure*}
\centering
\includegraphics[width=0.32\textwidth]{plots/score_curve_vgg4_plain.pdf}
\includegraphics[width=0.32\textwidth]{plots/score_curve_vgg4_boosted_no_gn.pdf}
\includegraphics[width=0.32\textwidth]{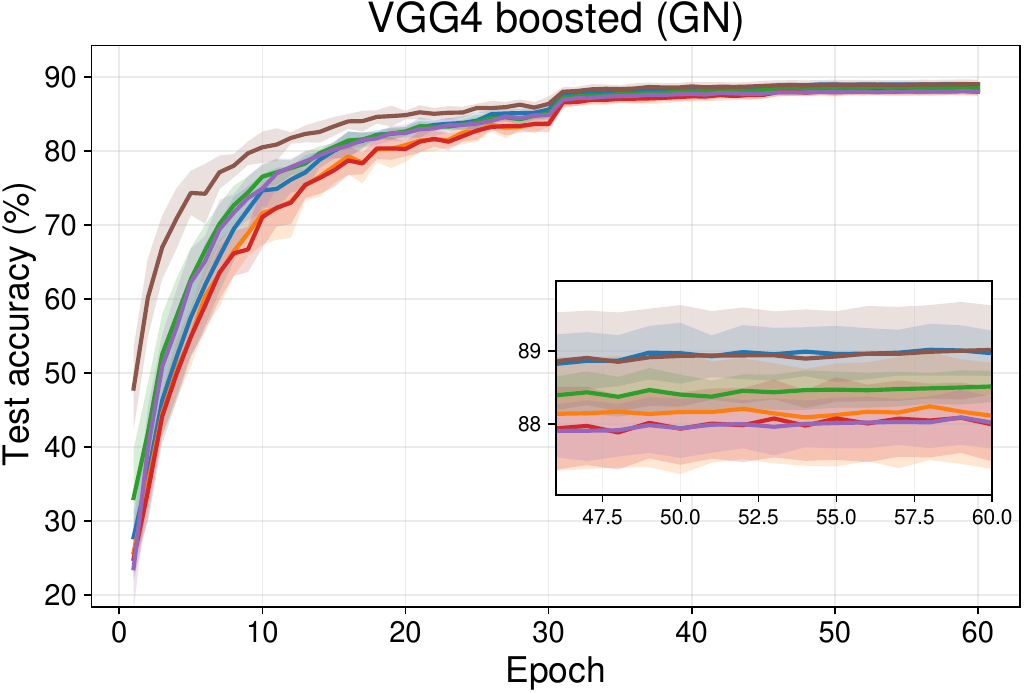}

\includegraphics[width=0.32\textwidth]{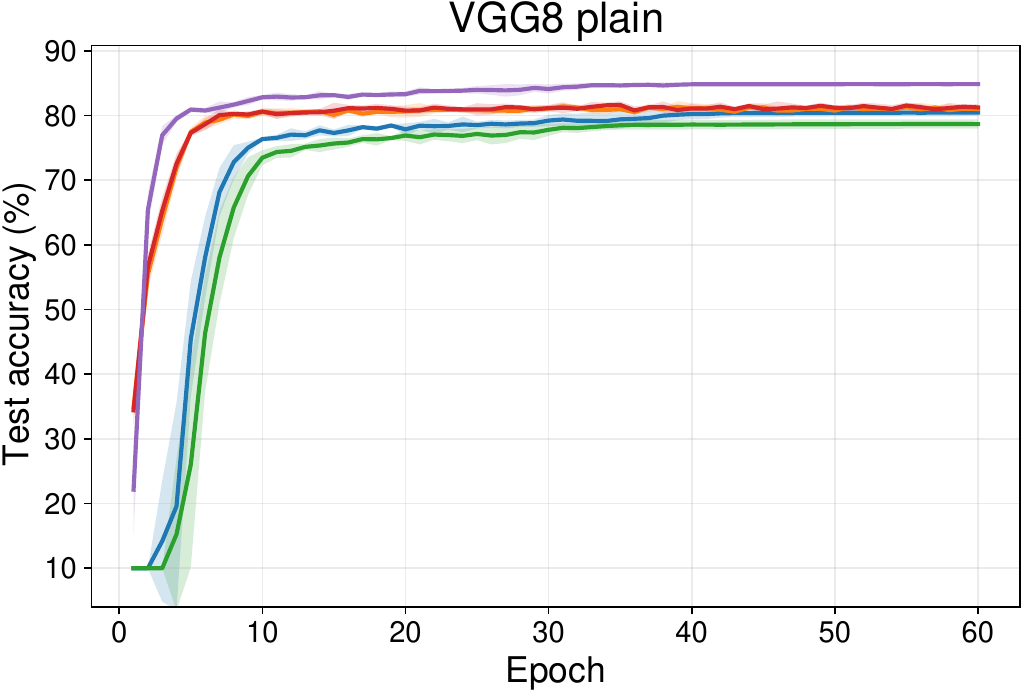}
\includegraphics[width=0.32\textwidth]{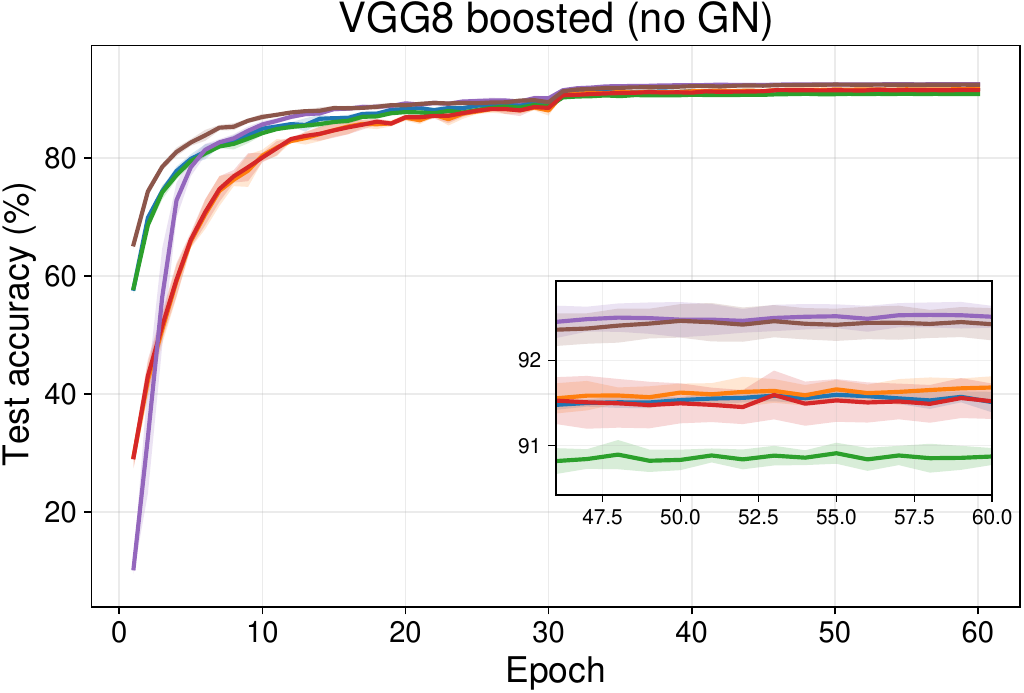}
\includegraphics[width=0.32\textwidth]{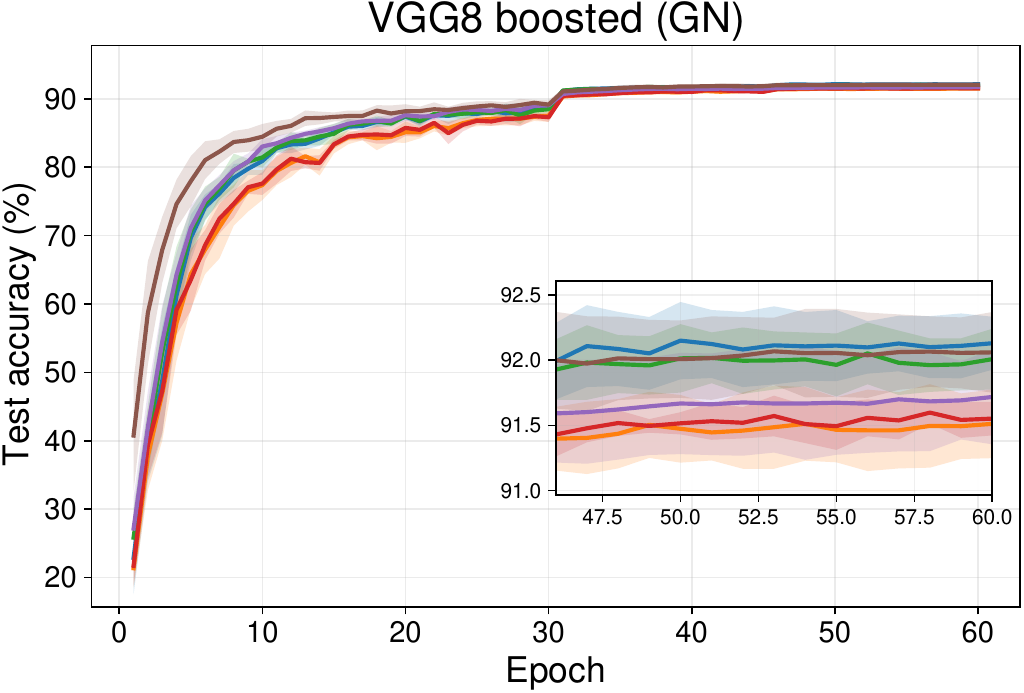}

{\scriptsize
\legline{c0}\,GELU \hspace{0.9em}
\legline{c1}\,ReLU \hspace{0.9em}
\legline{c2}\,Swish \hspace{0.9em}
\legline{c3}\,LeakyReLU \hspace{0.9em}
\legline{c4}\,Rational 1 \hspace{0.9em}
\legline{c5}\,Rational 2
}

\caption{CIFAR-10 score curves. Top-1 test accuracy versus epoch. Curves show mean across five seeds and the shaded region is one standard deviation. Columns are baseline, augmented without GroupNorm, and augmented with GroupNorm. Rows are VGG4 (top) and VGG8 (bottom).}
\label{fig:cifar10_score_curves}
\end{figure*}

Complementing the epoch-wise accuracy curves in~\cref{fig:cifar10_score_curves}, we also visualize how the learned Rational nonlinearities evolve across depth during training. For each Rational layer in the VGG-8 feature stack under the augmented training recipe without GroupNorm, we snapshot the scalar activation function at epochs 0, 5, 30, and 60, and overlay the empirical distribution of that layer's pre-activation inputs (estimated from held-out mini-batches). This view separates changes in the learned functional form from changes in the input statistics seen by each layer, and provides a layerwise diagnostic of how the model allocates nonlinearity throughout training.

\begin{figure*}[!t]
\centering
\includegraphics[page=1,width=0.49\textwidth]{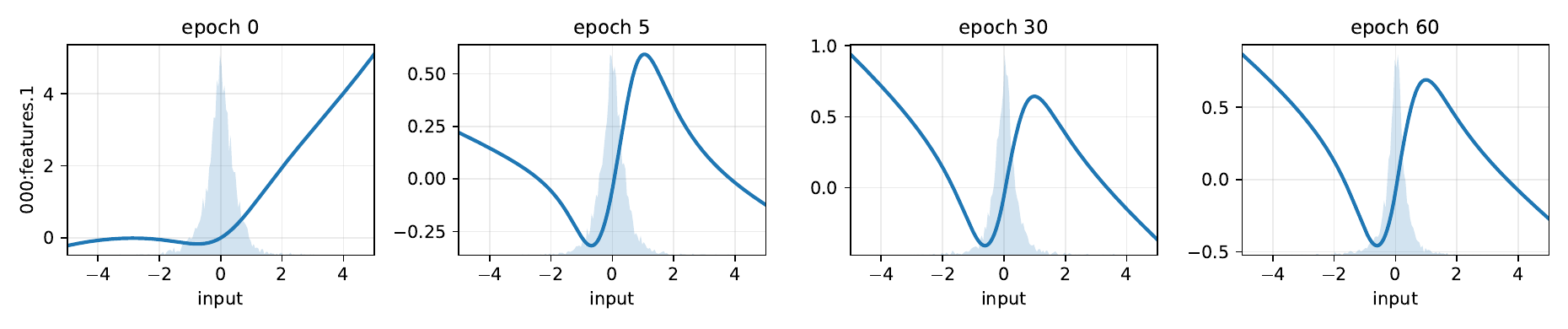}
\includegraphics[page=2,width=0.49\textwidth]{plots/rational_activation_all_layers.pdf}

\includegraphics[page=3,width=0.49\textwidth]{plots/rational_activation_all_layers.pdf}
\includegraphics[page=4,width=0.49\textwidth]{plots/rational_activation_all_layers.pdf}

\caption{Layerwise Rational activation snapshots for VGG8 (pages 1--4). Each panel shows the learned Rational nonlinearity at epochs 0, 5, 30, and 60. The light shaded overlay is the empirical histogram (density) of the corresponding layer input, estimated from held-out batches and plotted on a secondary axis.}
\label{fig:cifar10_rational_snapshots_a}
\end{figure*}

\begingroup
\begin{figure*}[!t]
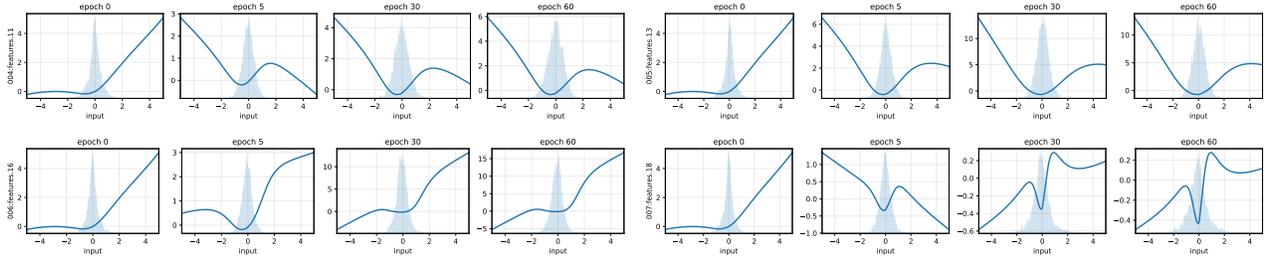

\centering
\includegraphics[page=5,width=0.49\textwidth]{plots/rational_activation_all_layers.pdf}
\includegraphics[page=6,width=0.49\textwidth]{plots/rational_activation_all_layers.pdf}

\includegraphics[page=7,width=0.49\textwidth]{plots/rational_activation_all_layers.pdf}
\includegraphics[page=8,width=0.49\textwidth]{plots/rational_activation_all_layers.pdf}

\caption{Layerwise Rational activation snapshots for VGG8 (pages 5--8). Each panel shows the learned Rational nonlinearity at epochs 0, 5, 30, and 60. The light shaded overlay is the empirical histogram (density) of the corresponding layer input, estimated from held-out batches and plotted on a secondary axis.}
\label{fig:cifar10_rational_snapshots_b}
\end{figure*}
\endgroup

\FloatBarrier

\section{Offline reinforcement learning experiments}
\label{app:offline_rl}

We study offline reinforcement learning on continuous-control locomotion benchmarks in MuJoCo \cite{todorov2012mujoco}, where the goal is to learn a state-feedback controller for a simulated robot from a fixed dataset of experience without any additional interaction during training \cite{fu2020d4rl}. Each task is a finite-horizon Markov decision process with continuous state $s_t$ (robot configuration and velocities) and continuous action $a_t$ (joint torques), and an episode return $R=\sum_{t=0}^{T-1} r_t$ over at most $T=1000$ steps. HalfCheetah requires learning a planar bipedal gait to run forward, Hopper requires balancing and hopping forward on a single leg, and Walker2d requires stabilizing and walking forward with a two-legged agent \cite{todorov2012mujoco}. Rewards are the environment-provided locomotion objectives (e.g., forward-progress terms with control and stability penalties) and are accumulated over the episode to produce the reported return. We train from the Minari medium datasets \cite{minari}, which provide tuples $(s_t,a_t,r_t,s_{t+1},d_t)$ collected by a partially competent behavior policy, so the learner must improve beyond the behavior policy while remaining robust to distribution shift induced by the fixed data \cite{fu2020d4rl}. Policies are evaluated by executing rollouts in the corresponding Gymnasium v5 environments \cite{gymnasium}, using deterministic action selection for both algorithms.
In addition to reporting raw episode return, we report a v5-normalized score to compare performance across tasks on a shared scale. The v5-normalized score uses task-specific random and expert anchors measured under the same Gymnasium v5 evaluation environment and the corresponding Minari expert dataset, respectively, yielding a D4RL-style normalized metric that is consistent with the evaluation API used here \cite{fu2020d4rl}. Concretely, for an average evaluation return $R$, we report
\[
\mathrm{Score} = 100 \cdot \frac{R - R_{\mathrm{random}}}{R_{\mathrm{expert}} - R_{\mathrm{random}}}.
\]
The expert anchor $R_{\mathrm{expert}}$ is computed as the mean episode return of the corresponding Minari expert dataset for each task, and the random anchor $R_{\mathrm{random}}$ is estimated by rolling out 100 episodes of a random policy in the same Gymnasium v5 evaluation environment. These anchors are cached and reused for subsequent runs under the same environment specification. We include reproducibility details for the offline RL results. All runs use the same pipeline across activations, and we report learning curves as a function of gradient updates. Curves aggregate five seeds with mean and one standard deviation.

\begin{table}
\centering
\caption{Tasks, offline datasets, and evaluation environments used in the offline RL study \cite{todorov2012mujoco,minari,gymnasium}.}
\label{tab:offline_rl_tasks}
\footnotesize
\begin{tabular}{@{}p{0.40\columnwidth}p{0.34\columnwidth}p{0.20\columnwidth}@{}}
\toprule
Task & Minari dataset & Gymnasium env \\
\midrule
HalfCheetah-medium & \texttt{mujoco/halfcheetah/medium-v0} & \texttt{HalfCheetah-v5} \\
Hopper-medium      & \texttt{mujoco/hopper/medium-v0}      & \texttt{Hopper-v5} \\
Walker2d-medium    & \texttt{mujoco/walker2d/medium-v0}    & \texttt{Walker2d-v5} \\
\bottomrule
\end{tabular}
\end{table}

We normalize observations using dataset mean and standard deviation computed from the offline dataset and apply the same normalization at evaluation time. We construct transitions $(s,a,r,s',d)$ using the dataset termination flags as $d$ and ignore time-limit truncations when forming terminal transitions. We set $\texttt{max\_action}=\max_i|(\texttt{high})_i|$ from the evaluation environment action space, scale dataset actions into $[-1,1]$ by dividing by $\texttt{max\_action}$ during training, and rescale policy outputs by $\texttt{max\_action}$ at evaluation time. We train for \num{300000} gradient updates with batch size 256 and evaluate every \num{10000} updates for 20 episodes with a maximum episode length of 1000. Each configuration is run with five seeds.

\begin{table}
\centering
\caption{Hyperparameters for the RL study. Values are shared across tasks unless noted \cite{kostrikov2021iql,fujimoto2021td3bc}.}
\label{tab:offline_rl_hparams}
\footnotesize
\begin{tabular}{@{}p{0.60\columnwidth}p{0.34\columnwidth}@{}}
\toprule
Setting & Value \\
\midrule
Updates & \num{300000} \\
Batch size & 256 \\
Network widths & 256--256 \\
Optimizer & Adam \\
Learning rate & \num{3e-4} \\
Discount $\gamma$ & 0.99 \\
Target update rate & 0.005 \\
Evaluation interval & \num{10000} updates \\
Evaluation episodes & 20 \\
Max episode length & 1000 \\
Seeds & 0--4 \\
Rational LR multiplier & 1 \\
\addlinespace
IQL expectile & 0.7 \\
IQL inverse temperature & 3.0 \\
IQL advantage clip & 100 \\
IQL actor schedule & cosine over \num{300000} updates \\
\addlinespace
TD3+BC policy noise & 0.2 \\
TD3+BC noise clip & 0.5 \\
TD3+BC policy frequency & every 2 critic updates \\
TD3+BC BC weight $\alpha$ & 2.5 \\
\bottomrule
\end{tabular}
\end{table}

We evaluate two offline RL algorithms: Implicit Q-Learning (IQL) \cite{kostrikov2021iql} and TD3+BC \cite{fujimoto2021td3bc}. For IQL we use a squashed Gaussian actor with a learned, state-independent log standard deviation vector; the actor learning rate follows a cosine schedule over the training horizon, and evaluation uses the mean action. For TD3+BC we use a deterministic actor with delayed policy updates and target-policy smoothing noise as in TD3, together with the behavior-cloning regularizer in TD3+BC \cite{fujimoto2021td3bc}. All networks are MLPs with two hidden layers of width 256 and use the same activation throughout the MLP. We sweep activations over ReLU, SiLU, GELU, and Rational; Rational is instantiated from the \texttt{rational-activations} package \cite{delfosse2020rationals} and uses the initialization target specified in the provided code, with the rational-coefficient learning-rate multiplier set to one. For IQL only, we scale dataset rewards by $1/(R_{\max}-R_{\min})$, where $R_{\max}$ and $R_{\min}$ are the maximum and minimum episode returns in the offline dataset; TD3+BC uses unscaled rewards.

\newpage
Under the experimental protocol and hyperparameter settings summarized above,~\cref{fig:offline_rl_curves} plots learning curves of evaluation performance as a function of gradient updates. Each panel shows the v5-normalized score computed from the evaluation return using the task-specific anchors described above, with curves reporting the mean across five seeds and one standard deviation shading. We evaluate every \num{10000} updates for 20 episodes in the Gymnasium v5 environment and plot the resulting scores against training progress, enabling a comparison of both convergence speed and final performance.

Complementing the learning curves in~\cref{fig:offline_rl_curves}, we also visualize how the learned Rational nonlinearities evolve during training for IQL on HalfCheetah-medium with Rational initialized to ReLU. For each Rational layer in the IQL networks, including the two hidden-layer nonlinearities in each critic $Q_1$ and $Q_2$, the value network $V$, and the actor mean network, we snapshot the scalar activation function at updates 50k, 100k, 150k, and 300k, and overlay the empirical distribution of that layer's pre-activation inputs estimated from held-out mini-batches of offline transitions. This view separates changes in the learned functional form from changes in the input statistics induced by the evolving networks, and provides a module-wise diagnostic of how IQL allocates nonlinearity under offline data constraints.

\begin{figure*}
\centering

\definecolor{matblue}{RGB}{31,119,180}
\definecolor{matorange}{RGB}{255,127,14}
\definecolor{matgreen}{RGB}{44,160,44}
\definecolor{matred}{RGB}{214,39,40}

\newcommand{\panel}[1]{\adjustbox{valign=c}{\includegraphics[width=0.27\textwidth]{#1}}}
\newcommand{\rot}[1]{\rotatebox{90}{\color{black!70}\textsc{#1}}}
\newcommand{\roty}[1]{\rotatebox{90}{\color{black!70}#1}}

\renewcommand{\arraystretch}{1.0}

\begin{tabular}{@{}
  c@{\hspace{0.8mm}}
  c@{\hspace{1.4mm}}
  c@{\hspace{1.8mm}}c@{\hspace{1.8mm}}c
@{}}

& &
{ HalfCheetah medium} &
{ Hopper medium} &
{ Walker2d medium} \\[-0.8mm]

\roty{V5 normalized score} &
\rot{IQL} &
\panel{plots/score__halfcheetah_medium__iql__eval_v5_normalized.pdf} &
\panel{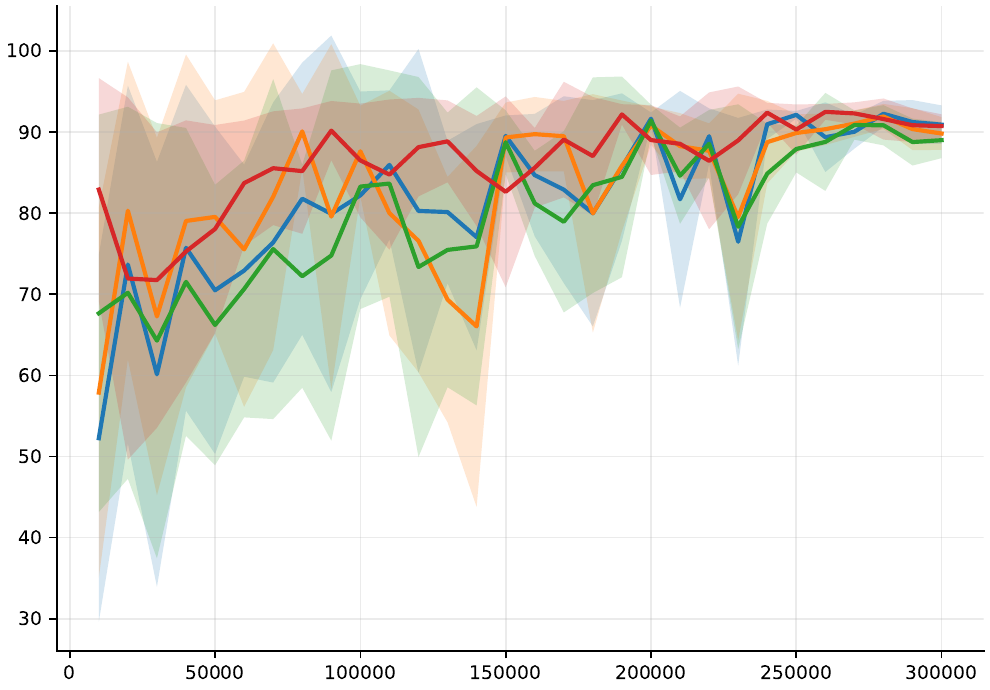} &
\panel{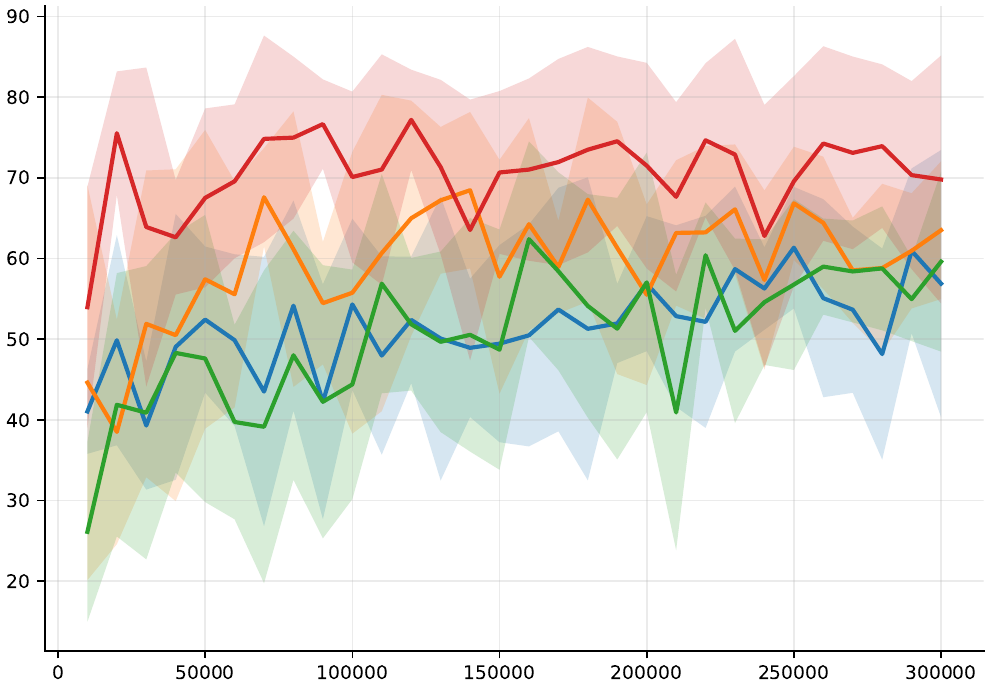} \\[1.2mm]

&
\rot{TD3+BC} &
\panel{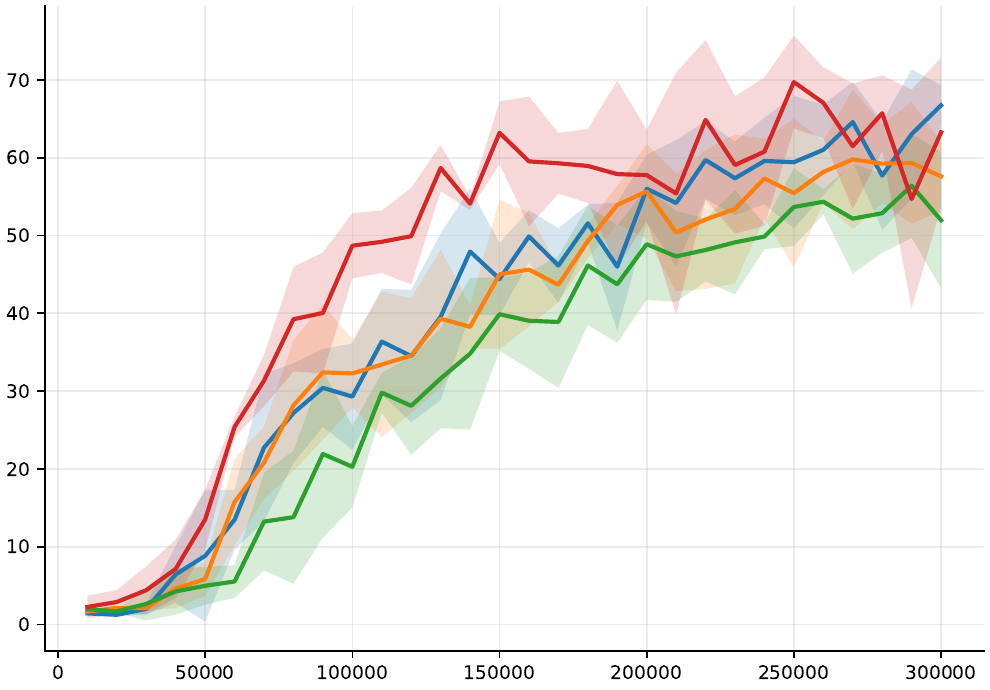} &
\panel{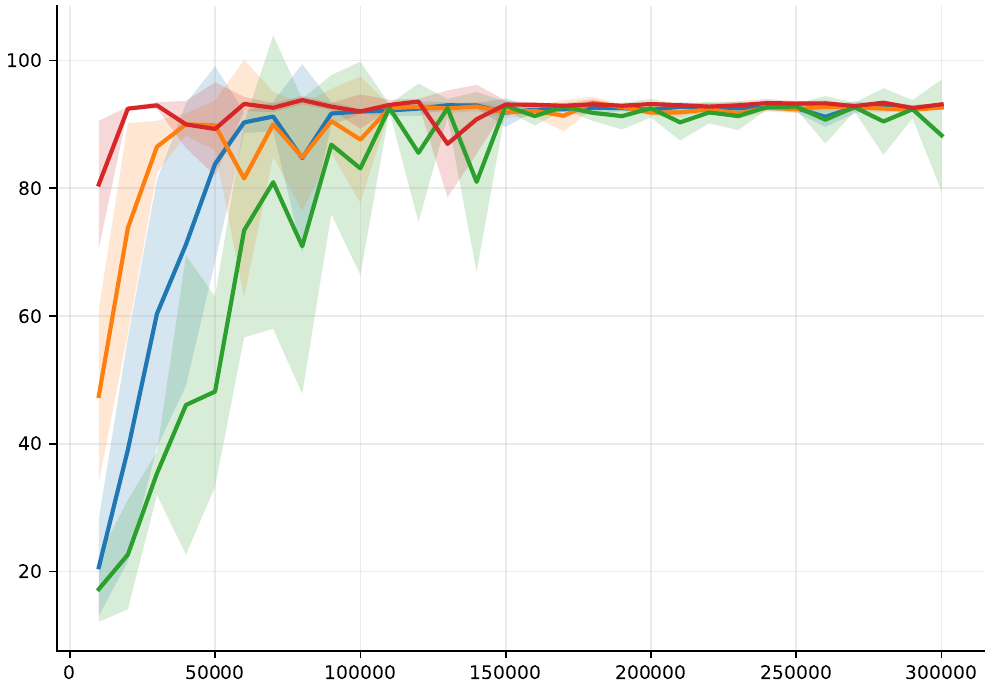} &
\panel{plots/score__walker2d_medium__td3bc__eval_v5_normalized.pdf} \\[-0.2mm]

& & \multicolumn{3}{c}{\color{black!70}Gradient updates} \\[0.8mm]

\multicolumn{5}{c}{\color{black!70}
Activation:\;
\raisebox{0.25ex}{\color{matblue}\rule{11.5pt}{1.25pt}}\;GELU\hspace{12pt}
\raisebox{0.25ex}{\color{matorange}\rule{11.5pt}{1.25pt}}\;ReLU\hspace{12pt}
\raisebox{0.25ex}{\color{matgreen}\rule{11.5pt}{1.25pt}}\;SiLU\hspace{12pt}
\raisebox{0.25ex}{\color{matred}\rule{11.5pt}{1.25pt}}\;Rational
} \\

\end{tabular}

\caption{
Offline reinforcement learning curves on medium locomotion benchmarks \cite{todorov2012mujoco,kostrikov2021iql,fujimoto2021td3bc}.
Each panel plots the v5-normalized evaluation score versus gradient updates.
For each activation, the solid curve is the mean over five random seeds and the shaded band corresponds to one standard deviation.
Columns correspond to environments (HalfCheetah medium, Hopper medium, Walker2d medium).
Rows correspond to algorithms (IQL top row and TD3+BC bottom row).
}
\label{fig:offline_rl_curves}

\end{figure*}

\begin{figure*}
\centering
\includegraphics[width=0.49\textwidth]{\detokenize{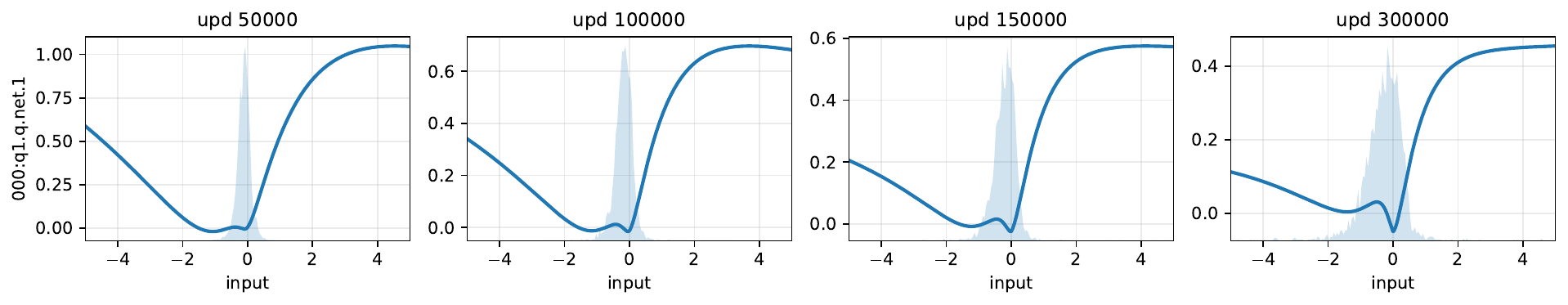}}
\includegraphics[width=0.49\textwidth]{\detokenize{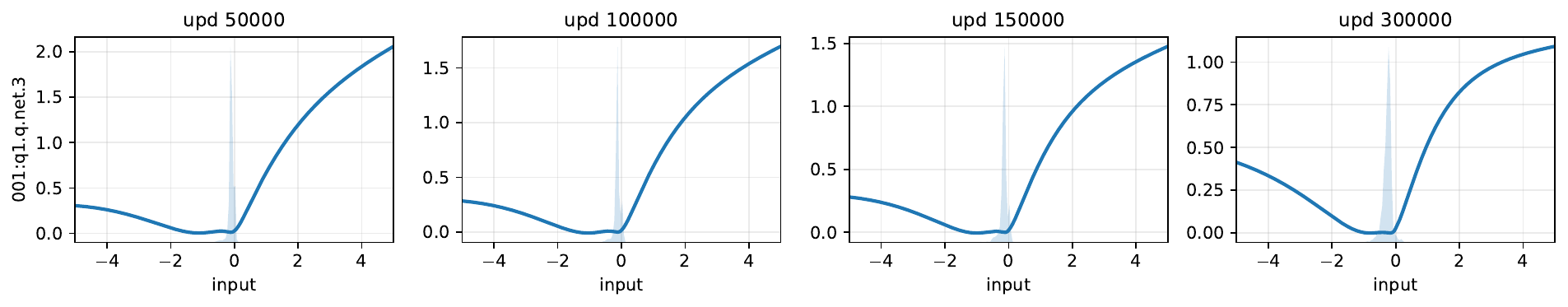}}

\includegraphics[width=0.49\textwidth]{\detokenize{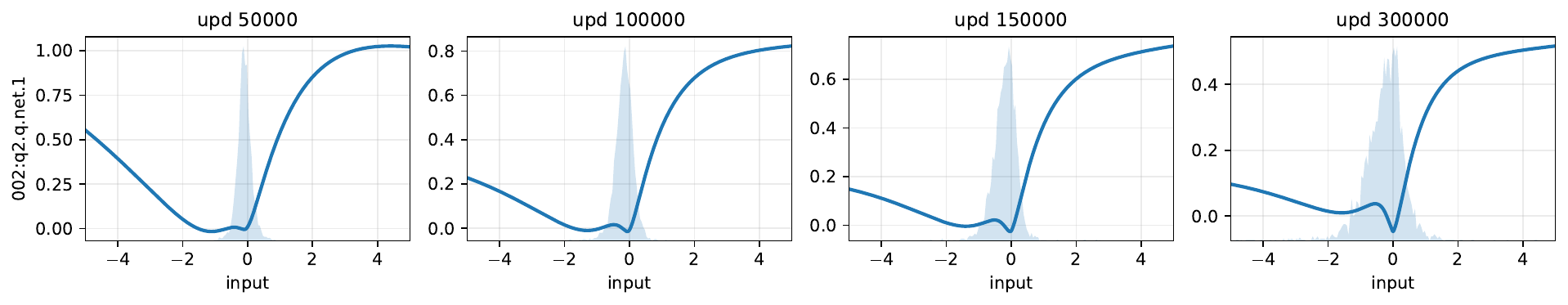}}
\includegraphics[width=0.49\textwidth]{\detokenize{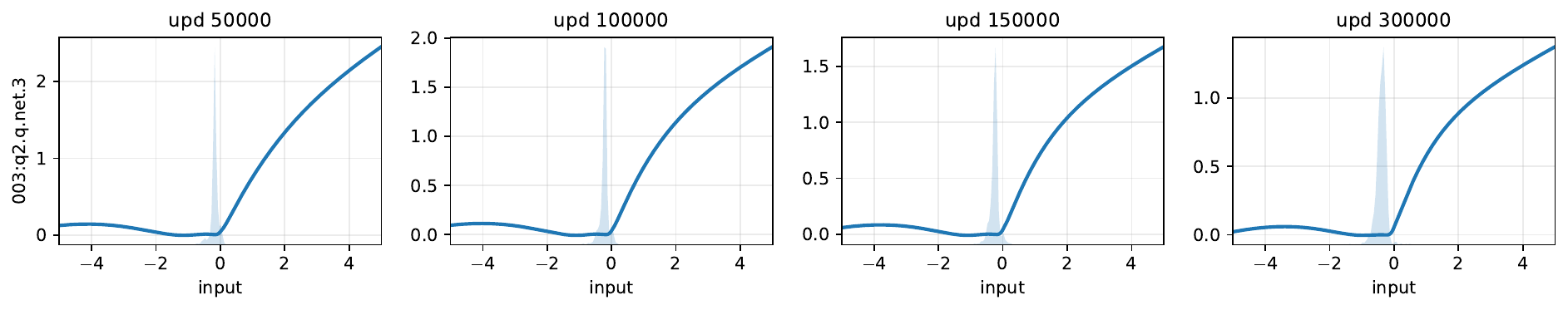}}

\caption{Rational activation snapshots for IQL on HalfCheetah-medium with Rational initialized to ReLU, critic networks. Each panel shows the learned Rational nonlinearity at updates 50k, 100k, 150k, and 300k. The light shaded overlay is the empirical histogram (density) of the corresponding layer pre-activation input, estimated from held-out mini-batches of offline transitions. From top-left to bottom-right: $Q_1$ hidden layer 1, $Q_1$ hidden layer 2, $Q_2$ hidden layer 1, and $Q_2$ hidden layer 2.}
\label{fig:offline_rl_rational_snapshots_a}
\end{figure*}

\begingroup
\begin{figure*}[!t]

\centering
\includegraphics[width=0.49\textwidth]{\detokenize{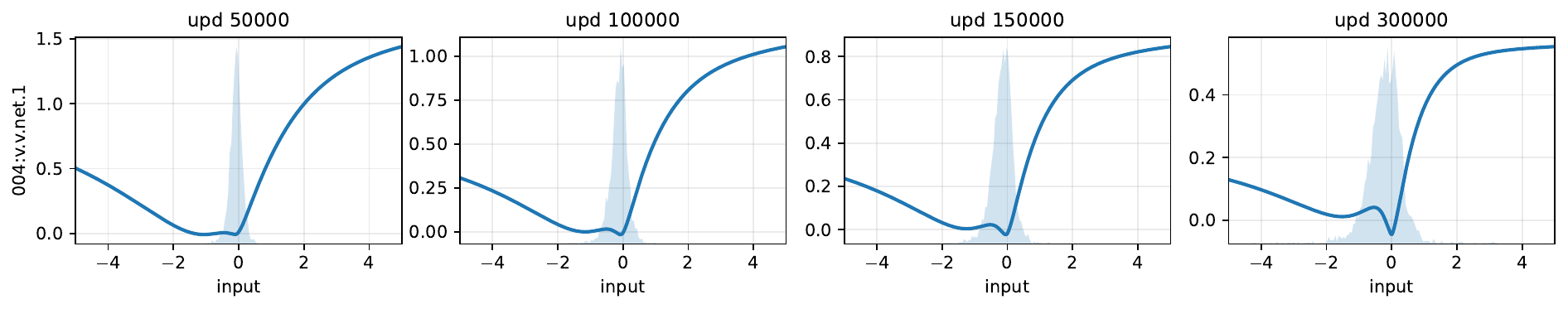}}
\includegraphics[width=0.49\textwidth]{\detokenize{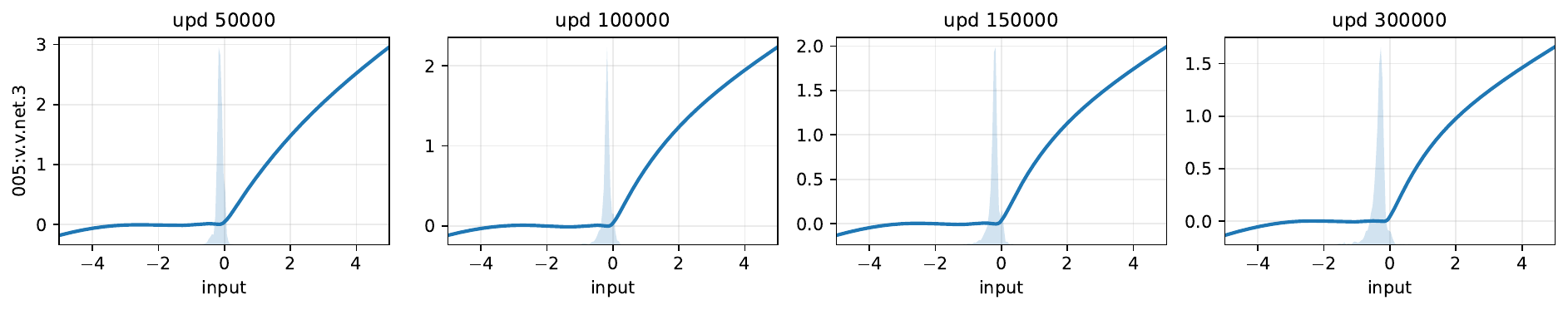}}

\includegraphics[width=0.49\textwidth]{\detokenize{plots/layer_006__006_actor.mu_net.net.1.pdf}}
\includegraphics[width=0.49\textwidth]{\detokenize{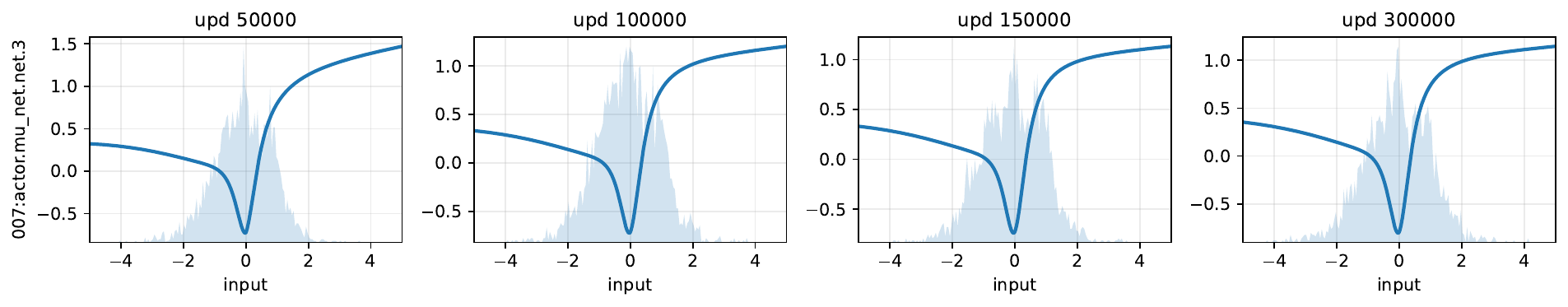}}

\caption{Rational activation snapshots for IQL on HalfCheetah-medium with Rational initialized to ReLU, value and actor networks. Each panel shows the learned Rational nonlinearity at updates 50k, 100k, 150k, and 300k. The light shaded overlay is the empirical histogram (density) of the corresponding layer pre-activation input, estimated from held-out mini-batches of offline transitions. From top-left to bottom-right: $V$ hidden layer 1, $V$ hidden layer 2, actor mean network hidden layer 1, actor mean network hidden layer 2.}
\label{fig:offline_rl_rational_snapshots_b}
\end{figure*}
\endgroup

\FloatBarrier

\section{Tiny ImageNet transformer experiments}
\label{app:tinyimg}

We study rational activations for image classification on Tiny ImageNet, a 200 class dataset with 100000 training images and 10000 validation images. We train from scratch on 64 by 64 inputs and report top 1 validation accuracy using an exponential moving average of model weights. We take the best EMA validation accuracy over the full training run and report total wall clock training time in minutes.

We evaluate three transformer families implemented in \texttt{timm}: ViT Small with patch size 8 \cite{Dosovitskiy2020ViT}, CaiT Small 24 \cite{Touvron2021CaiT}, and Swin Tiny \cite{Liu2021Swin}. For each family, we keep the model definition and optimizer fixed and change only the element wise activation in the MLP blocks. We include reproducibility details for the Tiny ImageNet transformer results. All runs use the same training pipeline across activations and differ only in the MLP nonlinearity and, for the normalization free ViT setting, the presence or absence of ViT block normalization.

\begin{table}
\centering
\caption{Models, data source, and evaluation split used in the Tiny ImageNet transformer study. All runs use \texttt{timm} with \texttt{pretrained} set to false, \texttt{num\_classes} set to 200, and \texttt{img\_size} set to 64.}
\label{tab:tinyimg_setup}
\footnotesize
\begin{tabular}{@{}p{0.28\columnwidth}p{0.44\columnwidth}p{0.22\columnwidth}@{}}
\toprule
Model family & \texttt{timm} identifier & Data split \\
\midrule
ViT S 8 & \texttt{vit\_small\_patch8\_224} & Tiny ImageNet val \\
CaiT S24 & \texttt{cait\_s24\_224} & Tiny ImageNet val \\
Swin T & \texttt{swin\_tiny\_patch4\_window7\_224} & Tiny ImageNet val \\
\bottomrule
\end{tabular}
\end{table}

Images are converted to RGB and normalized using ImageNet mean and standard deviation. For training augmentation, we use a mild random resized crop tuned for 64 by 64 resolution, random horizontal flips, color jitter, RandAugment, and random erasing. We additionally use Mixup and CutMix with label smoothing, following a DeiT style recipe adapted to this resolution. We use repeated augmentation with three repeats per sample per epoch implemented through a sampler that repeats indices and relies on fresh stochastic augmentation per access.

We train for 100 epochs with AdamW and a cosine learning rate schedule with linear warmup over the first 5 epochs. We clip gradients to norm 1.0 and use drop path rate 0.1. We evaluate once per epoch on the validation split and track EMA weights with decay 0.999, reporting the EMA top 1 accuracy.

\begin{table}
\centering
\caption{Hyperparameters for the Tiny ImageNet transformer study. Values are shared across model families unless noted.}
\label{tab:tinyimg_hparams}
\footnotesize
\begin{tabular}{@{}p{0.60\columnwidth}p{0.34\columnwidth}@{}}
\toprule
Setting & Value \\
\midrule
Epochs & 100 \\
Batch size & 128 \\
Optimizer & AdamW \\
Base learning rate & \num{2e-3} scaled by effective batch over 512 \\
Minimum learning rate & \num{1e-6} \\
Warmup & 5 epochs \\
Weight decay & 0.05 \\
Gradient clipping & 1.0 \\
Drop path rate & 0.1 \\
Label smoothing & 0.1 \\
Mixup alpha & 0.8 \\
CutMix alpha & 1.0 \\
RandAugment & 2 ops, magnitude 9 \\
Random erasing probability & 0.25 \\
Repeated augmentation repeats & 3 \\
EMA decay & 0.999 \\
\addlinespace
Activation sweep & ReLU, GELU, Rational \\
Rational degrees & 5 and 4 \\
Rational version & A \\
Rational initialization target & GELU \\
Rational coefficient learning rate multiplier & 2 for CaiT, 4 for ViT, and 8 for Swin \\
Rational coefficient weight decay & 0 \\
\bottomrule
\end{tabular}
\end{table}

We sweep ReLU, GELU, and a trainable low degree rational activation. The rational nonlinearity is instantiated from the \texttt{rational-activations} package \cite{delfosse2020rationals} with degrees 5 and 4 and version A, initialized to match GELU.  {Rational coefficients are placed in a separate AdamW parameter group with the model-specific learning-rate multipliers in~\cref{tab:tinyimg_hparams} and weight decay zero. We additionally test a normalization free ViT S 8 variant motivated by recent evidence that transformer normalization can be removed when paired with an appropriate adaptive element wise nonlinearity \cite{zhu2025transformerswithoutnorm}. In this modified model, we bypass the ViT block normalization modules \texttt{block.norm1} and \texttt{block.norm2} in every transformer block for the full training run, while keeping the data pipeline, optimizer, and schedule unchanged. We apply the same normalization bypass mechanism across the activation sweep. Under this unchanged no-norm setup, the ReLU and GELU runs diverged to NaNs early in training and produced no usable checkpoint, while the rational run remained stable; the main results table therefore reports these fixed-activation runs as NaN.}

\FloatBarrier

\end{document}